\documentclass{article}

\usepackage{microtype}
\usepackage{graphicx}
\usepackage{subcaption}
\usepackage{booktabs} 

\usepackage{hyperref}

\usepackage[accepted]{icml2026}

\usepackage{amsmath}
\usepackage{amssymb}
\usepackage{mathtools}
\usepackage{amsthm}

\usepackage[capitalize,noabbrev]{cleveref}

\theoremstyle{plain}
\newtheorem{theorem}{Theorem}[section]

\newtheorem{lemma}[theorem]{Lemma}

\theoremstyle{definition}
\newtheorem{definition}[theorem]{Definition}
\newtheorem{assumption}[theorem]{Assumption}
\theoremstyle{remark}
\newtheorem{remark}[theorem]{Remark}

\usepackage[textsize=tiny]{todonotes}

\icmltitlerunning{Hista and Numca: Estimate State Value Effectively for Large Language Model Reinforcement Learning}

\usepackage{xcolor}
\usepackage[table]{xcolor}

\definecolor{algblue}{HTML}{D3E5FA}   
\definecolor{graybs}{HTML}{EEEEEE}    
\definecolor{refyellow}{HTML}{FFFDE6} 
\definecolor{goodgreen}{HTML}{2E7D32} 
\definecolor{badred}{HTML}{C62828}    

\newcommand{\dimpr}[1]{\!{\scriptsize\textcolor{goodgreen}{\ensuremath{\downarrow_{#1}}}}}
\newcommand{\ddegr}[1]{\!{\scriptsize\textcolor{badred}{\ensuremath{\uparrow_{#1}}}}}

\newcommand{\aimpr}[1]{\!{\scriptsize\textcolor{goodgreen}{\ensuremath{\uparrow_{#1}}}}}
\newcommand{\adegr}[1]{\!{\scriptsize\textcolor{badred}{\ensuremath{\downarrow_{#1}}}}}

\begin{document}

\twocolumn[
  \icmltitle{Hista and Numca: Estimate State Value Effectively for Large Language Model Reinforcement Learning}



  \icmlsetsymbol{equal}{*}

  \begin{icmlauthorlist}
    \icmlauthor{Zizhe Chen}{cuhk}
    \icmlauthor{Jiqian Dong}{huawei}
    \icmlauthor{Yizhou Tian}{cuhk}
    \icmlauthor{Garry Yang}{cuhk}
    \icmlauthor{Yongqiang Chen}{cuhk}
    \icmlauthor{Zhitang Chen}{huawei}
    \icmlauthor{James Cheng}{cuhk}
  \end{icmlauthorlist}

  \icmlaffiliation{cuhk}{Department of Computer Science and Engineering, The Chinese University of Hong Kong}
  \icmlaffiliation{huawei}{Huawei Technologies Ltd}

  \icmlcorrespondingauthor{James Cheng}{jcheng@cse.cuhk.edu.hk}

  \icmlkeywords{Machine Learning, ICML}

  \vskip 0.3in
]



\printAffiliationsAndNotice{}  

\begin{abstract}
Reinforcement learning (RL) refines large language models (LLMs) by directly optimizing model behavior through reward signals. While accurate state value estimation is critical for stable training in classical RL, it remains an underexplored challenge in LLM post‑training. In this work, we introduce the State Value Estimation Benchmark (SVEB) to assess state estimation within existing RL frameworks and show that critics in standard approaches like PPO collapse to a coarse group‑average baseline. To address this, we propose two techniques: \textit{Numca}, which leverages numerical spans as gradable milestones for state value estimation, and \textit{Hista}, a framework that uses LLM's hidden states as representation to weighted average disjoint rollouts and their return. Extensive experiments demonstrate that both methods yield more accurate state value estimates and enhance training performance across different RL algorithms and model sizes without incurring significant computational overhead. Code available at \url{https://github.com/VOXXXX1874/Hista}.
\end{abstract}

\section{Introduction}

The effectiveness of DeepSeek-R1 established GRPO as a foundational RL post training method for LLMs \cite{deepseek_math} \cite{deepseek_r1}. Recent progress has focused on creating more advanced successors. These newer algorithms, such as DAPO \cite{dapo}, GSPO \cite{gspo}, and CSIPO\cite{csipo}, surpass GRPO by optimizing components like KL-regularization, importance sampling calculation, and clip mechanism.

\begin{figure}[ht]
  \vskip 0.0in
  \begin{center}
    \centerline{\includegraphics[width=\columnwidth]{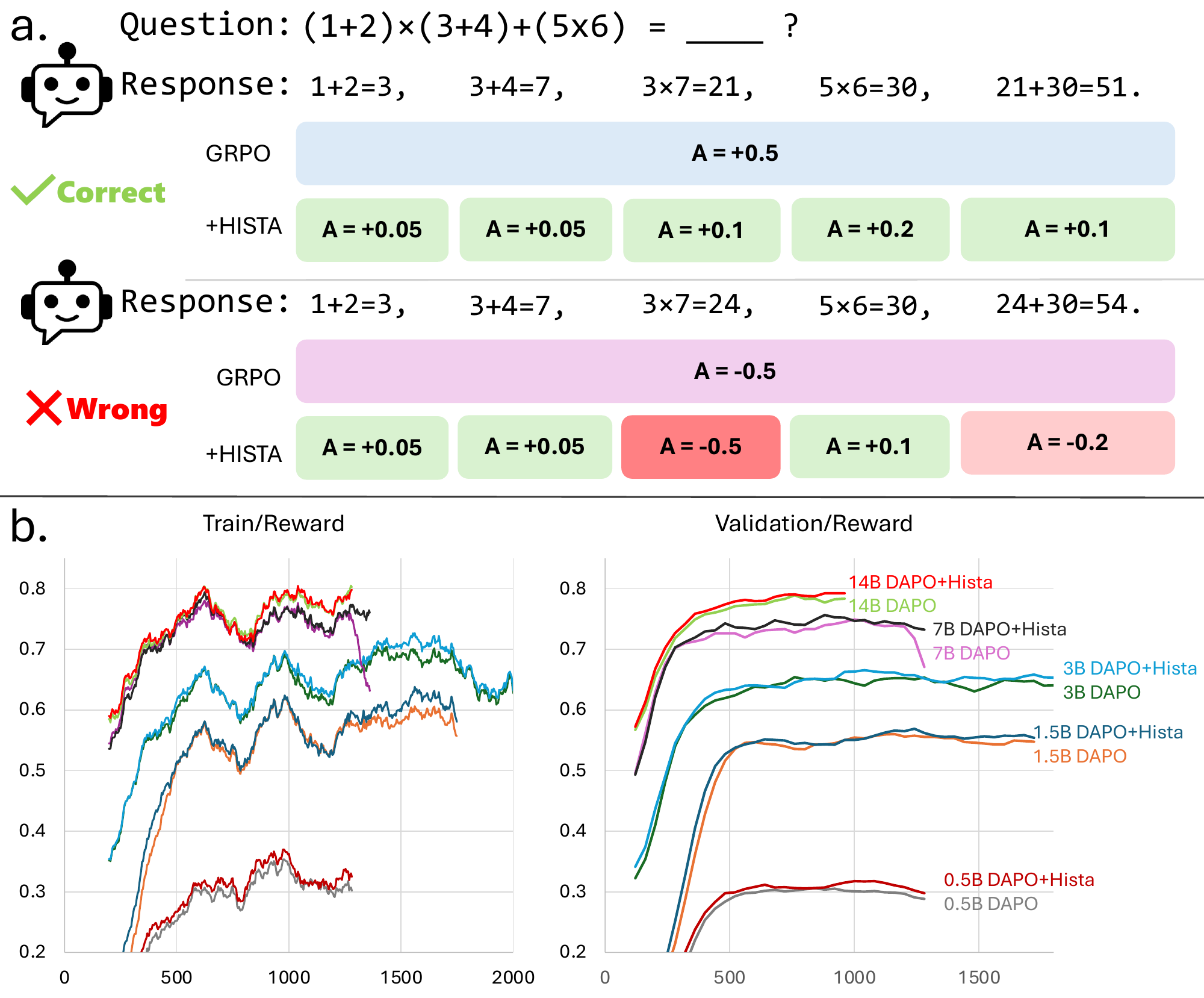}}
    \caption{
      \textbf{a.} Credit assignment of \textit{Hista}. \textbf{b.} Apply \textit{Hista} to DAPO on Qwen2.5 base models, where each color represents one settings. Applying \textit{Hista} achieves better convergence speed and reward.
    }
    \label{first_figure}
  \end{center}
  \vskip -0.4in
\end{figure}

Despite these improvements, all these successors retain a core limitation inherited from GRPO: treating entire response as single action and apply the same state value to every token. This simplification stems directly from the difficulty of parsing intermediate states. More sophisticated approach, such as training process‑level reward models \cite{verifystepbystep, qwenprm} or applying Monte Carlo tree search \cite{treepo, rstar} attempt to mitigate this problem, but they inevitably incur prohibitive costs in labeled data or computational rollouts, rendering them impractical for large‑scale training.

Given this reliance on coarse group-averaged reward baseline, current RL training methods for LLMs deviate strikingly from classical RL, whose central theme is the use of a learned state value function \(V^\pi(s)\) to estimate the expected return from any given state \cite{actor_critic_algorithm,ppo}. The critic network provides this fine-grained valuation, enabling stable and guided policy updates by the actor. This contrast raises a question:

\begin{center}
\textit{How does accurate state value improve LLM RL training, and how can it be estimated effectively?}
\end{center}

To answer this question, we first construct a State Value Estimation Benchmark (SVEB) to quantify the discrepancy between ground‑truth state values and those estimated by various methods. Using this benchmark, we uncover a key limitation of standard PPO in LLM RL: although PPO includes a critic network that ostensibly provides fine‑grained token‑level credit assignment, the state values it produces consistently collapse toward the \textbf{group‑averaged reward}. In effect, this degrades the method to a GRPO‑like baseline, offering no more informative guidance than a coarse group average. Empirically, we conduct extensive experiments across different data sources using PPO and find that its value estimates inevitably regress to this average.

To begin mitigating this issue, reframing mathematical problem solving as a sparse‑reward goal‑reaching problem. Inspired by Hindsight Experience Replay (HER) \cite{her}, which relabels past experiences with alternative goals, we seek to identify meaningful "milestones" within a text sequence. In mathematical reasoning (e.g., the tasks in AIME), we observe that numeric values naturally serve as easily parsable milestones. Based on this insight, we introduce \textit{Numca}, a heuristic method that utlizes numbers as representation to group disjoint rollouts. \textit{Numca} outperforms group‑average reward both on our SVEB benchmark and in downstream RL training.

While \textit{Numca} demonstrates the value of accurate state value, its reliance on domain‑specific heuristics limits generality. To address this, we further propose \textit{Hista}, a general framework that represents each generation step using its hidden state embeddings. \textit{Hista} estimates state values via probability‑weighted reward averaging, requiring no additional rollouts or training. This approach operationalizes the intuition that semantically similar states should receive similar state values. In Theorem \ref{thm:hista_better_than_grpo}, we prove that \textit{Hista} yields more accurate estimates than the group‑average baseline. Empirically, \textit{Hista} outperforms baselines across all domains in SVEB, and RL‑trained models equipped with \textit{Hista} surpass strong alternatives such as DAPO and CSIPO. In summary, this work makes three primary contributions:

\textbf{SVEB}: We propose the SVEB for evaluating state value estimation methods. Using SVEB, we identify fundamental limitations of standard PPO in the LLM setting.

\textbf{Numca:} We introduce \textit{Numca}, a simple and highly efficient heuristic that provides effective state value estimation for mathematical reasoning.

\textbf{Hista:} To build a general solution, we propose the \textit{Hista} framework based on hidden states, with theoretical guarantee that \textit{Hista} delivers better value estimates than GRPO.

\vspace{-0.1in}
\section{Related Work}

\textbf{Classic RL:} 
To achieve stable policy optimization, actor-critic methods combine value-based~\cite{dqn} and policy-based~\cite{reinforce} reinforcement learning by utilizing an actor network for action generation and a critic network for value estimation, as exemplified by PPO~\cite{ppo}. Following PPO, extensive efforts have been devoted to improving critic accuracy and stability. TD3 mitigates critic approximation errors through clipped double-Q learning~\cite{TD3}, while subsequent works further explore conservative, uncertainty-aware, and entropy-regularized value estimation~\cite{sac,iql,cql}. 
Another important direction is Goal-Conditioned RL (GCRL)~\cite{uvfa}, which studies credit assignment under sparse rewards. HER addresses sparse goal-reaching tasks through hindsight goal relabeling~\cite{her}, while more recent works focus on representation learning and reachability modeling for credit assignment in a more general settings~\cite{CLGCRL,QuasimetricRL}. Interestingly, this perspective is closely related to option discovery in Hierarchical Reinforcement Learning~\cite{TemporalAbstractionInRL}, where successor representations and latent state geometry are leveraged to discover reusable temporally-extended subgoals~\cite{SuccessorOptions,SuccessorRepresentation}.

\textbf{Credit Assignment in LLM RL:} 
Finer-grained credit assignment is also a central problem in reinforcement learning for large language models. PPO-style methods such as VAPO and VC-PPO~\cite{vapo,VC-PPO} introduce value pretraining techniques to alleviate critic initialization mismatch, but they still require training a critic with a scale comparable to the actor. Another line of work leverages Monte Carlo sampling or tree search to obtain more accurate state-value or action-value estimation~\cite{treepo,rstar,phi-4}. Although these methods avoid maintaining a dedicated critic network, their computational cost grows rapidly with response length and sampling budget. Process Reward Models (PRMs)~\cite{qwenprm,implicitPRM,verifystepbystep} provide dense supervision over intermediate reasoning steps, but they mainly serve as verifiers of reasoning correctness rather than explicit estimators of state value for policy optimization. Several lightweight alternatives have also been proposed to improve credit assignment~\cite{KTAE,high_entropy_minor_tokens}, though they are generally not derived from the perspective of state-value estimation. In contrast, our method directly exploits hidden-state representations to achieve computational efficiency comparable to GRPO while providing provably better state-value estimation.
\vspace{-0.1in}
\section{Preliminary}

\subsection{MDP Modeling in LLM Generation \label{MDP_modeling}}
The autoregressive generation of LLMs is naturally formalized as a Markov Decision Process (MDP),
\(
\mathcal{M} = (\mathcal{S}, \mathcal{A}, P, R, \gamma).
\) 
Let $\mathcal{V}$ denote the vocabulary of discrete tokens, and let
$\langle\mathrm{eos}\rangle \in \mathcal{V}$ be the end-of-sequence token. A state $s_t = (x_1,\dots,x_t) \in \mathcal{S}$ is the sequence of tokens produced until time $t$, including system prompt, user prompt, and previously generated tokens. The initial state $s_0$ is the the prompt itself. The action space at any states is $\mathcal{A}(s_t)=\mathcal{V}$, where an action $a_t \in \mathcal{V}$ selects the next token. The transition is deterministic with \(P(s_{t+1}\mid s_t,a_t)=1\). A state is terminal if $a_t=\langle\mathrm{eos}\rangle$ or the maximum length is reached. The discount factor $\gamma=1$ and the reward is only provided upon termination:
\[
R(s_t,a_t,s_{t+1})=
\begin{cases}
r(s_{t+1}), & s_{t+1}\ \text{terminal},\\
0, & \text{otherwise}.
\end{cases}
\]

\subsection{Value Estimation in LLM Reasoning}
\label{sec:state_value_def}
Building upon the MDP, the state value $V^{\pi}(s_t)$ quantifies the expected cumulative future reward starting from that partial sequence:
\[
V^\pi(s_t)
= \mathbb{E}_{\pi}\!\left[ \sum_{k=t}^{T-1} R(s_k,a_k,s_{k+1}) \,\middle|\, s_t \right]
= \mathbb{E}_{\pi}\!\left[ r(s_T) \mid s_t \right],
\]

In popular LLM RL frameworks such as TRL \cite{trl}, VeRL \cite{verl}, and OpenRLHF \cite{openrlhf}, the critic network $V_\phi$ is trained to provide token-level state value estimates in PPO. Given a generation rollout $\tau = (s_0, a_0, \dots, s_T)$, the temporal-difference (TD) residual at time $t$ is defined as:
\[
\delta_t = r_t + V_\phi(s_{t+1}) - V_\phi(s_t),
\quad \text{with} \quad V_\phi(s_{T+1}) = 0
\]
Advantages are then computed via Generalized Advantage Estimation (GAE) \cite{gae}: \( \hat A_t = \sum_{i=0}^{T-t} (\lambda \gamma)^i \delta_{t+i} \), where $\lambda$ is a smoothing hyperparameter and $\gamma$ is the discount factor. The critic is trained by regressing its predictions toward an empirical target. The critic loss is formulated as the mean squared error between the predicted state value and the target value constructed from the estimated advantage:
\[
\label{eq:gae_value_loss}
L_V(\phi) =
\mathbb{E}_t \big[
(V_\phi(s_t) - \text{stop\_gradient}((V_\phi(s_t) + \hat A_t))^2)
\big].
\]

\vspace{-0.2in}
\section{Observation: State Value Degeneration}
We propose the State Value Estimation Benchmark (SVEB) to evaluate the quality of predicted state values. Using SVEB, we uncover a consistent discrepancy for PPO in the LLM setting: the critic network's value estimates regularly degenerate to a simplistic baseline and fail to provide any informative guidance.

\subsection{State Value Estimation Benchmark}

SVEB evaluates state estimation approaches by computing the error between estimated values and ground‑truth values obtained via large‑scale Monte Carlo sampling (MCS). Specifically, the benchmark consists of tuples $\left(s_t, \widehat{V}\left(s_t\right)\right)$ where $s_t$ is a state sampled from diverse reasoning domains (in Section \ref{SVEB data}) with a policy LLM and $\widehat{V}\left(s_t\right)$ is its reference value estimated by Monte Carlo sampling. The construction procedure for SVEB is detailed below, and key notations with their definitions are summarized in Table~\ref{tab:notation-reference-table}.

\textbf{State collection.}
We begin by collecting a dataset of states 
\(
\mathcal{D}_s = \{ s_t^{(j)} \}_{j=1}^{N}.
\)
through generating rollouts using a fixed LLM $\pi$ over a set of prompts. For each rollout 
\(
\tau = (s_0, a_0, s_1, \dots, s_T),
\), where $s_0$ is the prompt state and $s_T$ is terminal, we uniformly sample several intermediate states $s_t$ into the dataset $D_s$.

\textbf{Monte Carlo reference value.}
Since the true state value $V^\pi(s_t)$ is intractable, one way of estimating it is via Monte Carlo sampling. From each $s_t$ in $D_s$, we sample $n$ independent continuations following the same policy $\pi$, yielding terminal rewards $\{ r(s_T^{(i)}) \}_{i=1}^{n}$. Let 
\begin{equation*}
\label{eq:reference_value}
\widehat{V}(s_t)
= \frac{1}{n} \sum_{i=1}^{n} r\!\left(s_T^{(i)}\right).
\end{equation*}
be the estimated value of $s_t$. By the law of large numbers, $\widehat{V}\left(s_t\right) \rightarrow V^\pi\left(s_t\right)$ almost surely as $n \rightarrow \infty $, providing unbiased estimation of true state values.

\textbf{Evaluation metric.} Each evaluated state value estimation method is a function $f$ parametrized by $\theta$ that maps the state $s_t$ to its corresponding state value:
\(
\widehat{V}_{f,\theta}(s_t) = f(s_t,\theta),
\). The estimation quality is measured by taking the Mean Absolute Error (MAE) against the MCS reference over  $\mathcal{D}_s$:
\begin{equation*}
\label{eq:sveb_metric}
\mathrm{MAE}(f, D_s)
= \frac{1}{|D_s|} \sum_{j=1}^{|D_s|}
\left|
f\!\left(s_t^{(j)},\theta\right)
- \widehat{V}\!\left(s_t^{(j)}\right)
\right|.
\end{equation*}
Lower MAE indicates better value estimation. The final SVEB score is the average MAE across all sampled states.

\subsection{PPO on SVEB}
\label{PPO}
We begin by evaluating PPO, a classic actor‑critic method that employs a separate critic network for state‑value estimation. To isolate the critic's behavior, we freeze the actor and train the critic solely on pre‑generated rollouts. 
In standard PPO, the critic makes predictions without seeing the data during the first epoch, but has seen data after the first epoch. We simulate these two regimes:

\textbf{PPO-1:} The critic is evaluated on unseen benchmark data, simulating its behavior of \textbf{initial epoch}.

\textbf{PPO-N:} The critic is trained and evaluated on the same data, reflecting its performance \textbf{after the first epoch}.

\begin{table}[t]
  \caption{SVEB‑NUMBER results for PPO‑1 and PPO‑N. “@n” denotes the number of rollouts used for estimation; reference values are computed via MCS@20.}
  \vskip -0.1in
  \label{PPO-SVEB}
  \begin{center}
    \begin{small}
      \begin{sc}
        \begin{tabular}{lcccr}
          \toprule
          Method  & $\lambda = 0.95$         & $\lambda = 1.0$    & No $\lambda$    \\
          \midrule
          GRPO@40    & - & - & 0.164   \\
          PPO-1@40 & 0.169 & 0.161 & -  \\
          PPO-n@40    & 0.158 & 0.152 & -   \\
          \bottomrule
        \end{tabular}
      \end{sc}
    \end{small}
  \end{center}
  \vskip -0.0in
\end{table}

\begin{figure}[ht]
  \vskip -0.1in
  \begin{center}
    \centerline{\includegraphics[width=\columnwidth]{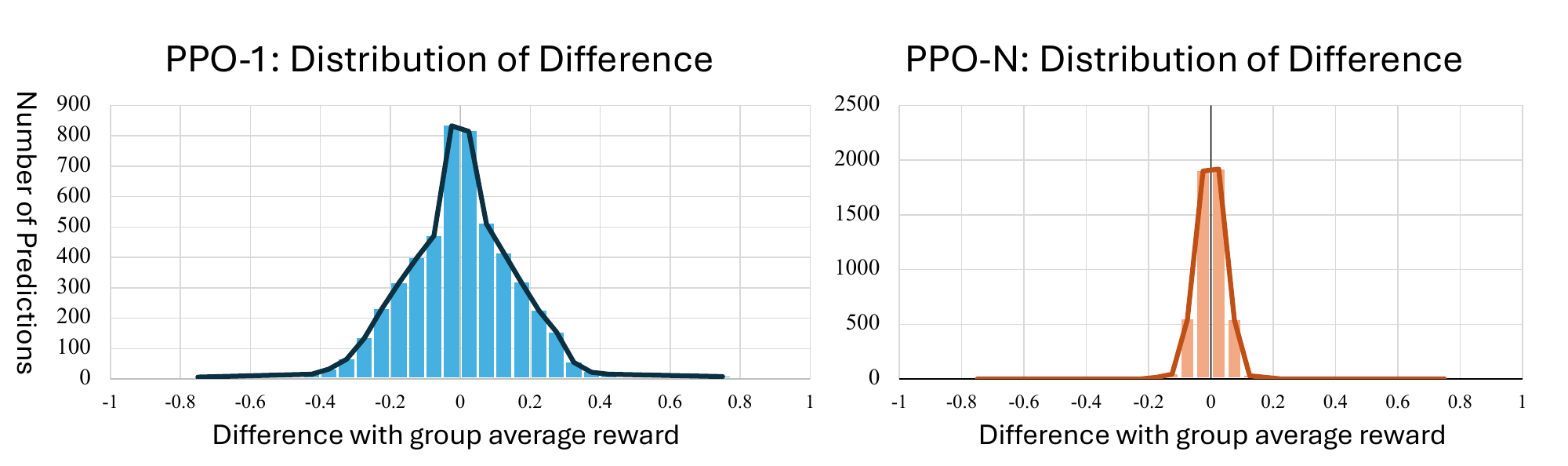}}
    \caption{
      Distribution of the difference between state value predicted by PPO with $\lambda = 1.0$ and the group averaged reward.
    }
    \label{fig:ppo_distribution}
  \end{center}
  \vskip -0.4in
\end{figure}

We train the PPO's critic network with GAE coefficients $\lambda = 0.95 \text{ and } 1.0$ as in \cite{deepseek_r1} following state value definition and training objectives in Section~\ref{sec:state_value_def}. Both the actor and critic are initialized from Qwen2.5‑1.5B. We construct the SVEB‑NUMBER dataset by sampling roughly 1,000 math problems and 5,000 intermediate states from DAPO‑17K \cite{dapo} (more details in \cref{SVEB data}). The SVEB results (\cref{PPO-SVEB}) show that, in the PPO‑1 setting, the critic offers no measurable advantage over the simple group‑average baseline used in GRPO. Only after the first epoch (PPO‑N) does PPO slightly outperform GRPO on SVEB. Consistent with findings in DeepSeek‑R1\cite{deepseek_r1}, where PPO trained with $\lambda = 1.0$ consistently outperforms the $\lambda = 0.95$ variant in downstream training.

To further investigate the advantage of using a critic network, we plot the difference between the state values estimated by PPO and the group‑average baseline used by GRPO, i.e., $\widehat{V}_{PPO}(s_t) - \widehat{V}_{GRPO}(s_t)$ in \cref{fig:ppo_distribution}. The distribution shows that the two estimates are nearly identical, with differences tightly clustered around zero. Although PPO‑1 exhibits a larger spread of differences relative to the GRPO baseline, this primarily reflects estimation error from an undertrained critic network. This phenomenon, consistently observed across multiple datasets (see \cref{ap-difference-distribution} for details), reveals a fundamental weakness of the standard TD‑based critic in PPO: its value estimates degenerate to the group baseline and fail to provide any fine‑grained guidance for RL. Motivated by this limitation, we introduce two methods, \textit{Numca} and \textit{Hista}, to estimate state values more effectively.

\vspace{-0.2cm}
\section{Method}
\vspace{-0.2cm}
\subsection{\textit{Numca}}
Given that standard critic training fails to provide informative guidance for RL, we instead reframe LLM reasoning as a sparse‑reward, goal‑reaching task. In this setting, a natural approach inspired by Hindsight Experience Replay \cite{her} is to define intermediate milestones and treat them as verifiable subgoals and states for value computation. However, applying this idea to language generation is challenging because semantic meaning is distributed across tokens rather than structured into discrete subgoals. We then observe that in mathematical reasoning, numerical values—such as integers, decimals, and fractions—serve as natural, verifiable subgoals. Building on this insight, we propose \textbf{Nu}merical \textbf{M}ilestone \textbf{C}redit \textbf{A}ssignment (\textit{Numca}), a method that leverages numerical patterns as structured milestones to guide credit assignment in mathematical problem‑solving.

Let $\mathcal{P}$ be a predefined set of numerical patterns (e.g., integers, decimals, fractions). A milestone $m = (x_i, x_{i+1}, ..., x_j)$ is a subsequence of tokens whose decoded string matches a pattern $p \in \mathcal{P}$. Given a state $s_t = (x_1,\dots,x_t)$, let 
\(
\mathbb{M}(s_t) = \{ m_1, \dots, m_k \}
\)
denote all \emph{milestones} that appear in $s_t$. Define the \textit{abstract state} $s_t^M \triangleq \mathbb{M}(s_t)$ that captures only the  milestones achieved up to time $t$; A macro action $a^M$ is the sequence of tokens generated between two consecutive abstract states. Formally, if $s_t^M$ and $s_{t'}^M$ ($t' > t$) are consecutive abstracted states, then the macro action from $s_t^M$ to $s_{t'}^M$ is defined as:
\[
a^M_t \triangleq (a_t, a_{t+1}, ..., a_{t'-1}) = (x_{t+1}, x_{t+2}, ..., x_{t'-1})
\vspace{-1mm}
\]
This construction yields a temporally abstracted decision process where each macro action corresponds to solving a subproblem that reaches the next numerical milestone.

\textit{Numca} estimates the value of an abstract state $s^M$ by \textbf{averaging the final rewards of all rollouts that contain $s^M$}. This estimated value is then assigned uniformly to every token within the corresponding macro action. The complete procedure is detailed in \cref{alg:Numca} and is implemented using a dictionary, incurring negligible memory and computation overhead. Additional notation explanations are provided in Table~\ref{tab:notation-reference-table}.

\begin{algorithm}[t]
\caption{Numerical Milestone Credit Assignment}
\label{alg:Numca}
\begin{algorithmic}[1]
\REQUIRE Rollouts $\mathcal{D}_r$, numerical patterns $\mathcal{P}$
\STATE Initialize dictionary $\mathcal{T}\leftarrow\emptyset$

\COMMENT{$\mathcal{T}[s^M] = (\text{count}, \text{reward\_sum})$}

\FOR{each rollout $\tau=(s_0,a_0,\dots,s_T)$ in $\mathcal{D}_r$}
    \STATE $s^M\leftarrow\emptyset$
    \FOR{$t=0$ to $T$}
        \IF { $\mathbb{M}(s_t) \not= s^M$}
            \STATE $s^M\leftarrow \mathbb{M}(s_t)$
            \STATE Update $\mathcal{T}[s^M] \leftarrow \mathcal{T}[s^M] + (1,  r(s_T))$
        \ENDIF
    \ENDFOR
\ENDFOR

\FOR{each state abstraction $s^M$ in $\mathcal{T}$}
    \STATE Estimate state value:
    \[
    V(s^M) \leftarrow \mathcal{T}[s^M].\text{reward\_sum} \,/\, \mathcal{T}[s^M].\text{count}
    \]
\ENDFOR
\end{algorithmic}
\end{algorithm}

\begin{table}[t]
  \caption{Performance of GRPO, PPO-N, and \textit{Numca} on different fields of SVEB. The reference value is calculated by MCS@20.}
  \vskip -0.1in
  \label{Numca-SVEB}
  \begin{center}
    \begin{small}
      \begin{sc}
        \begin{tabular}{lcccr}
          \toprule
          Method  & SVEB-number         & SVEB-science    \\
          \midrule
          GRPO@40    & 0.164 & 0.215    \\
          PPO-1@40    & 0.161 & 0.220    \\
          PPO-N@40    & 0.158 & \textbf{0.198}    \\
          Numca@40    & \textbf{0.136} & 0.217    \\
          \bottomrule
        \end{tabular}
      \end{sc}
    \end{small}
  \end{center}
  \vskip -0.1in
\end{table}

\begin{figure}[ht]
  \vskip 0.0in
  \begin{center}
    \centerline{\includegraphics[width=\columnwidth]{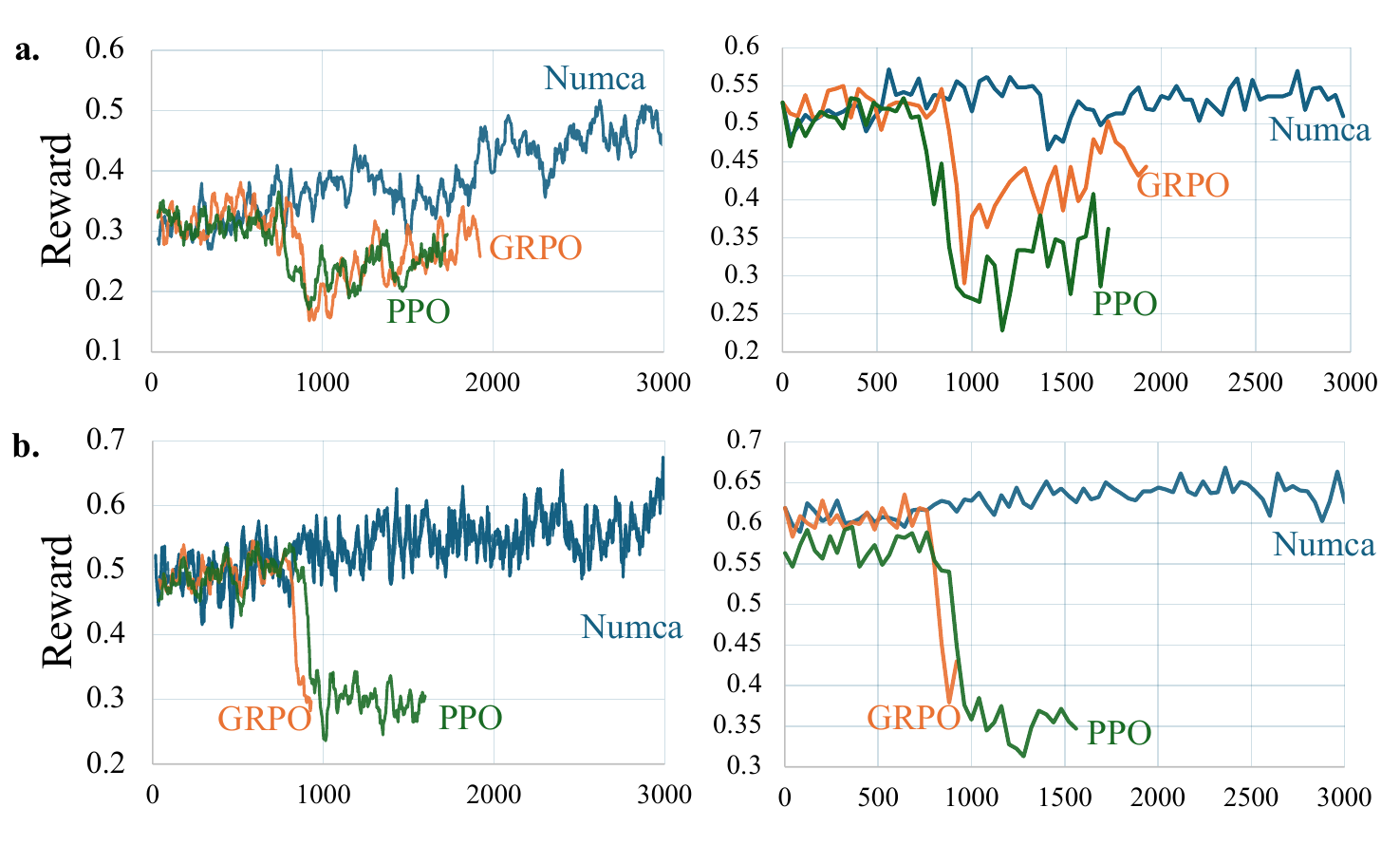}}
    \caption{
      Compare reward on train set (left) and validation set (right) of PPO, GRPO, and GRPO + \textit{Numca} during training: a. Qwen2.5-1.5B-Instruct; b. Qwen2.5-1.5B-Math-Instruct.
    }
    \label{NumcaInstruct}
  \end{center}
  \vskip -0.3in
\end{figure}

We evaluate \textit{Numca} on SVEB, with results shown in the SVEB‑NUMBER column of Table~\ref{Numca-SVEB}, where \textit{Numca} surpasses both PPO‑N and GRPO in state value estimation. We further apply \textit{Numca} in RL training using an experimental dataset sampled from DAPO‑17K \cite{dapo} and OpenR1‑220K \cite{openr1math220k} (see Appendix~\ref{ap-training-numca}). The base models are Qwen2.5‑1.5B‑Instruct and Qwen2.5‑1.5B‑Math‑Instruct \cite{qwen25math}. Training curves are presented in \cref{NumcaInstruct}, where \textit{Numca} exhibits stable training and consistently improving validation accuracy, whereas GRPO and PPO suffer from instability. This is primarily because finer‑grained and more accurate credit assignment helps reduce the variance in the gradient. Finally, we select the checkpoint with the best validation accuracy and evaluate it on popular math benchmarks in Table~\ref{Numca-Benchmark}. \textit{Numca} can consistently outperform GRPO, demonstrating that efficient state value estimation can effectively benefit RL training.

\begin{table}[t]
  \caption{Performance of Qwen2.5-Math-1.5B-Instuct model, GRPO, and GRPO+\textit{Numca} on math benchmark.}
  \vskip -0.1in
  \label{Numca-Benchmark}
  \begin{center}
    \begin{small}
      \begin{sc}
        \begin{tabular}{lcccr}
          \toprule
          Benchmark  & Qwen & +GRPO       & +Numca    \\
          \midrule
          MATH-500    & 0.74 & 0.746 &  \textbf{0.760}  \\
          GSM8K    & 0.849 & 0.848 &  \textbf{0.864} \\
          MinervaMath    & 0.286 & 0.301 & \textbf{0.313}  \\
          OlympiadBench    & 0.425 & 0.413 & \textbf{0.426}  \\
          AMC23     & 0.528 & 0.553 & \textbf{0.584} \\
          AIME24\&25      &  0.104 & \textbf{0.113} & 0.104 \\
          AVERAGE       & 0.528 & 0.541 & \textbf{0.555} \\
          \bottomrule
        \end{tabular}
      \end{sc}
    \end{small}
  \end{center}
  \vskip -0.2in
\end{table}

\subsection{\textit{Hista}}

\begin{figure*}[ht]
  \vskip 0.0in
  \begin{center}
    \centerline{\includegraphics[width=\textwidth]{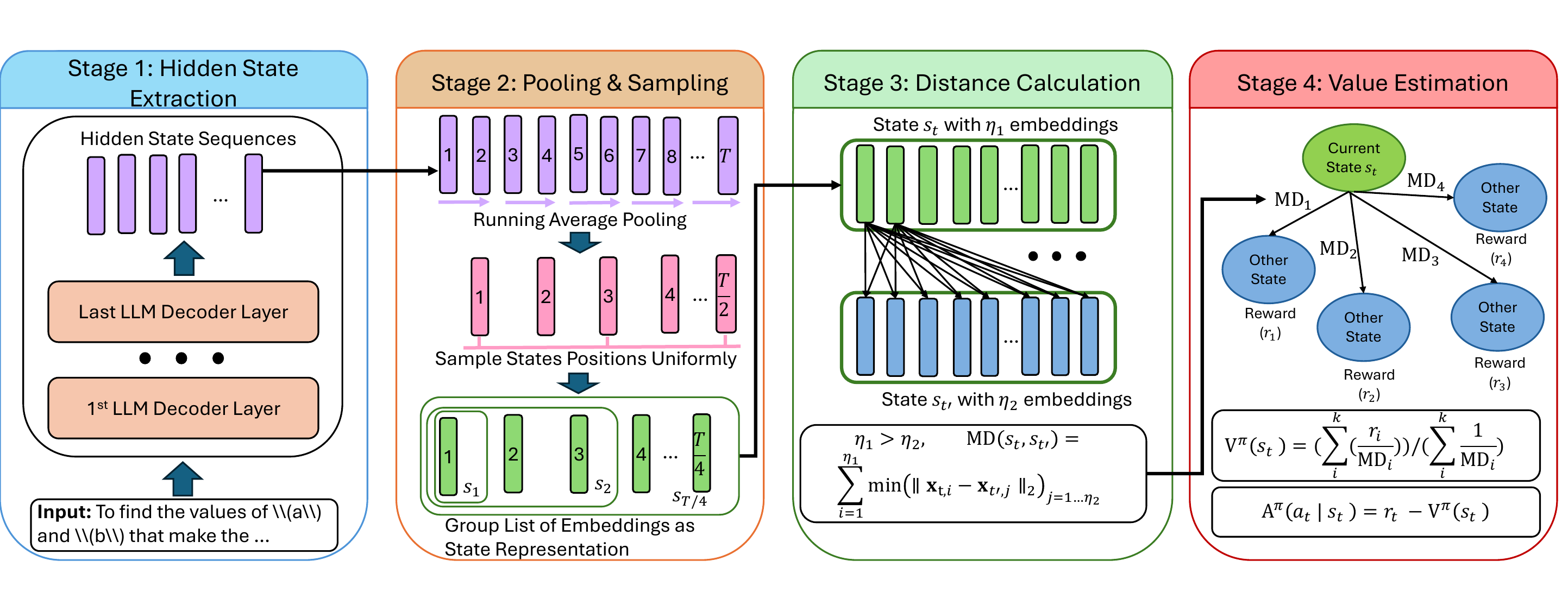}}
    \vskip -0.2in
    \caption{
      Pipeline of \textit{Hista}. It takes the last layer hidden state and pools them to state representation. Then, \textit{MinDist} operation is applied to calculate distance between two states, which is weighted averaged to get the state value estimation. Finally, the estimated state value is treated as baseline to calculate advantage.
    }
    \label{HistaPipeline}
  \end{center}
  \vskip -0.3in
\end{figure*}

Despite its strong performance on mathematical reasoning tasks, Numca is difficult to generalize because it relies on numerical patterns for milestone identification. As shown in the SVEB‑SCIENCE column of \cref{Numca-SVEB}, and in other multiple‑choice scientific QA tasks from \cref{SVEB data} where numerical milestones are rarely present, Numca's performance degrades to that of the GRPO baseline. This indicates that Numca is not a general solution for state value estimation in LLM reinforcement learning, as it depends on external knowledge to identify milestone patterns.

This limitation leads us to a broader insight: effective state value estimation requires a state representation capable of \textbf{grouping semantically similar but disjoint rollouts} without relying on domain‑specific heuristics. Drawing inspiration from representation learning methods such as MAE \cite{mae}, DINO \cite{dino}, BERT \cite{bert}, and JEPA \cite{jepa}, which learn meaningful embeddings by predicting masked content from context, we posit that auto‑regressive language model training is itself a powerful form of representation learning. Consequently, the hidden states of an LLM should serve as a general‑purpose state representation. To operationalize this, we measure the distance between two sequence representations using the following MinDistance operation:

\begin{definition}
  \label{def:MinDistance}
  The \textit{MinDistance} (MD) operation between two hidden states $\mathbf{X}_1 = \begin{bmatrix} \mathbf{x}_{1,1}, \mathbf{x}_{1,2}, \dots, \mathbf{x}_{1,\eta_1} \end{bmatrix}^\top$ and $\mathbf{X}_2 = \begin{bmatrix} \mathbf{x}_{2,1}, \mathbf{x}_{2,2}, \dots, \mathbf{x}_{2,\eta_2} \end{bmatrix}^\top$ is:
\[
  \begin{cases}
\sum_{i=1}^{\eta_1} (\min_{j = 1,...,\eta_2} \| \mathbf{x}_{1,i} - \mathbf{x}_{2,j} \|_2), & \eta_1\geq \eta_2,\\
\sum_{i=1}^{\eta_2} (\min_{j = 1,...,\eta_1} \| \mathbf{x}_{1,j} - \mathbf{x}_{2,i} \|_2), & \eta_1 < \eta_2.
\end{cases}
\]
\end{definition}

We establish a theoretical correlation between MD and the likelihood of two states sharing the same final reward.

\begin{theorem}[Correlation between Hidden States and Reward] \label{thm: correlation_hidden_states_reward}
Suppose $R_i$ is the final reward of generated sequence from state $s_i$, which fulfill the Assumption~\ref{assum:answer_parsing}. Then, the probability that two states receive the same reward $P(R_1 = R_2)$ is correlated with the \textit{MinDistance} between their hidden state representations:
\[
P(R_1 = R_2) \propto \frac{1}{\mathrm{MD}(\mathbf{X}^l_1, \mathbf{X}^l_2)}, \quad \forall l\in[1, L]
\vspace{-2mm}
\]
\end{theorem}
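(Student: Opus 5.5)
We read the statement in the only sense it can rigorously carry: as an \emph{upper bound on $P(R_1 \neq R_2)$ that grows with $\mathrm{MD}(\mathbf{X}^l_1,\mathbf{X}^l_2)$}. Equivalently, $P(R_1=R_2)$ is controlled from below by a decreasing function of MD, of the form $1 - C\cdot\mathrm{MD}$. The constant $C$ will depend on Lipschitz constants of the network and on a margin from Assumption~\ref{assum:answer_parsing}. We do not try to prove literal proportionality.

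\textbf{Step 1: propagate closeness to the last layer.} Each transformer block after layer $l$ (attention, MLP, layer norm) is Lipschitz on a bounded domain, so the map from layer-$l$ hidden states to last-layer hidden states is Lipschitz. The attention mixing across positions is handled with a constant that depends on sequence length. The key feature of MinDistance is that it sums, over the longer sequence, the distance from each token to its nearest token in the other sequence. We therefore define a matching $j(i)=\arg\min_j \|\mathbf{x}_{1,i}-\mathbf{x}_{2,j}\|_2$. Under this matching, the sum of per-token deviations is exactly MD at layer $l$. We then bound the corresponding deviation at the output layer by $K_l\,\mathrm{MD}(\mathbf{X}^l_1,\mathbf{X}^l_2)$.

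\textbf{Step 2: closeness of next-token distributions and of continuations.} Softmax over the unembedding is Lipschitz, so the total variation distance between the next-token distributions at the two states is at most $K'\,\mathrm{MD}$. This gives a one-step coupling. Next we couple the continuations $\tau_1,\tau_2$ generated from $s_1,s_2$ with a maximal coupling at each step, so that they emit identical tokens until the first disagreement. The probability of a disagreement is then bounded by a union bound over decoding steps. Once the two rollouts emit the same token, the new token's hidden state is again close. We propagate this with an inductive, Gr\"onwall-type bound, treating the extra token as adding zero to MD because it matches itself.

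\textbf{Step 3: from continuations to rewards.} Assumption~\ref{assum:answer_parsing} says the reward depends only on a parsed final answer span. So if the coupled continuations agree on the answer-determining tokens, then $R_1=R_2$. This yields
\[
P(R_1\neq R_2)\le C_l\,\mathrm{MD}(\mathbf{X}^l_1,\mathbf{X}^l_2),
\]
and therefore $P(R_1=R_2)$ decreases monotonically in this bound, which is the claimed inverse correlation for every $l\in[1,L]$.

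The main obstacle is Step 2. The union bound over a potentially long horizon makes $C_l$ grow with the remaining length, and the MinDistance matching is not a position-aligned correspondence, so attention does not preserve it cleanly. If the inductive propagation does not close, we would fall back to a weaker result. That result assumes the model's answer distribution is a Lipschitz function of a pooled state representation, as in the Hista pipeline, with pooling bounded by MD. This gives the same bound immediately and pushes the difficulty into the assumption.
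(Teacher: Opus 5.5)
Reading the statement as a one-sided quantitative bound is reasonable. However, Step 1 does not go through, and the bound you aim for is stronger than MinDistance can support.

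\textbf{The main gap: MD does not control what you need.} $\mathrm{MD}$ sums only over rows of the \emph{longer} sequence, ignores order and multiplicity, and charges nothing for rows of the shorter sequence that are nobody's nearest neighbour. In particular, it does not control $\|\mathbf{x}_{1,\eta_1}-\mathbf{x}_{2,\eta_2}\|_2$, the distance between the two last positions that actually drive the next-token distributions. Moreover, attention in $s_2$ runs over $\mathbf{X}_2$ with its own multiplicities, which the many-to-one argmin matching $j(i)$ does not respect. So the output is not Lipschitz in $\mathrm{MD}$, and Step 2's claim that the next-token total variation distance is at most $K'\,\mathrm{MD}$ is unsupported. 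A zero-intercept bound $P(R_1\neq R_2)\le C_l\,\mathrm{MD}$, which would force equal reward laws whenever $\mathrm{MD}=0$, is therefore not available; your fallback simply assumes it.

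The paper handles this point in two stages. It first proves the position-aligned case (equal lengths and $i=\arg\min_j\|\mathbf{x}_{1,i}-\mathbf{x}_{2,j}\|_2$) by chaining Lipschitz bounds through RMSNorm, projections, inner products, softmax, value aggregation and SwiGLU. In general, it replaces $\mathbf{X}_2$ by the ``fake'' aligned sequence $\hat{\mathbf{X}}_2$ of nearest neighbours, for which $\mathrm{MD}(\mathbf{X}_1,\mathbf{X}_2)=\mathrm{MD}(\mathbf{X}_1,\hat{\mathbf{X}}_2)$ is a position-wise sum. It then controls the discrepancy between $\hat{\mathbf{X}}_2$ and $\mathbf{X}_2$ with the structured-softmax lemma (replication only rescales attention weights by $\kappa_i/\bar\kappa$). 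The price is an $\mathrm{MD}$-independent additive constant. What comes out is only $P(R_1=R_2)\ge\sum_j P_1(j)^2-c_0-\sum_k c_k\,\mathrm{MD}(\mathbf{X}^{k-1}_1,\mathbf{X}^{k-1}_2)$, with $c_0$ collecting MD-independent (partly intractable) terms. The ``$\propto$'' is meant only in that loose sense.

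\textbf{Two further problems.} First, $P(R_1=R_2)$ in the statement, and in the Hista weights $P_{t,i}$, refers to independently generated continuations; the paper's LM-head lemma bounds the collision probability $\sum_k P_1(k)P_2(k)$. Under independence, even $s_1=s_2$ gives $P(R_1=R_2)=\sum_r p(r)^2<1$ unless the reward is deterministic, so no bound of the form $1-C\,\mathrm{MD}$ can hold. Under your maximal coupling, what the argument controls is $d_{TV}(\mathrm{Law}(R_1),\mathrm{Law}(R_2))$, not the independent-sampling $P(R_1=R_2)$. Converting back reintroduces the $\sum_r p_1(r)^2$ leading term.

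Second, in Step 2 a commonly emitted token does not ``add zero to MD''. Only its layer-$0$ embedding coincides; its layer-$l$ hidden states in the two contexts differ, so MD grows at every step. A Gr\"onwall argument could absorb this in the aligned case, with a constant exponential in the horizon, but the justification as stated is wrong. The paper avoids coupling entirely. It fixes the answer position $\ell$ (Assumption~\ref{assum:answer_index}), bounds all generated positions crudely via the RMSNorm magnitude bound (contributing the constant $2(\ell-\eta)\sqrt{d}\,\|\mathbf{g}\|_\infty$), and compares only the single answer-token distribution.
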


The proof is in Appendix \ref{ap: proof_of_correlation}. With this correlation, we can prove that weighted estimator based on hidden states will produce better state value than GRPO. Consider a set of state and final reward pair $\left\{(s_0,r_0), \ldots, (s_{\mathcal{N}}, r_{\mathcal{N}})\right\}$ We compare two methods for estimating the value of a state $s_t$:

\begin{definition}
  \label{def:AverageEstimator}
The average estimator: \(\hat V_{\mathrm{avg}}(s_t) = \frac{1}{\mathcal{N}}\sum_{i=1}^{\mathcal{N}} r_i\), which is an approximation of group average reward.
\end{definition}

\begin{definition}
    \label{def:WeightedEstimator}
The probability-weighted estimator: \(\hat V_{\mathrm{PW}} (s_t) = \frac{\sum_{i=1}^{\mathcal{N}} P_{t,i} r_i}{\sum_{i=1}^{\mathcal{N}} P_{t,i}}\), where $P_{t, i} = P(R_t = R_i)$ is obtained from Theorem \ref{thm: correlation_hidden_states_reward}.
\end{definition}

\begin{theorem}[Superiority of Probability-Weighted Estimation] \label{thm:hista_better_than_grpo}
Assume probability weights satisfy the Assumption~\ref{assum:bias_corrective_weighting}.
Then probability-weighted estimator has no larger absolute bias than the naive average estimator:
\[
\big|\mathbb{E}[\hat V_{\mathrm{PW}}(s_t)] - V(s_t)\big|
\;\le\;
\big|\mathbb{E}[\hat V_{\mathrm{avg}}(s_t)] - V(s_t)\big|.
\vspace{-2mm}
\]
\end{theorem}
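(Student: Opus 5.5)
The plan is a direct computation of the two expectations. The content of Assumption~\ref{assum:bias_corrective_weighting} is not shown above, so I will take it in its natural form: the normalized weights $w_{t,i} = P_{t,i}/\sum_j P_{t,j}$ move mass toward samples whose reward is closer to $V(s_t)$. Under this reading the theorem reduces to a convexity, or reweighting, comparison.

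\textbf{Step 1: write both biases in one form.} Treat the pairs $(s_i,r_i)$ as drawn from the rollout distribution, and condition on the states so that the weights $P_{t,i}$, which are functions of hidden states via Theorem~\ref{thm: correlation_hidden_states_reward}, are fixed. Then
\[
\mathbb{E}[\hat V_{\mathrm{avg}}(s_t)] - V(s_t) = \frac{1}{\mathcal{N}}\sum_{i=1}^{\mathcal{N}} \big(\mu_i - V(s_t)\big),
\qquad
\mathbb{E}[\hat V_{\mathrm{PW}}(s_t)] - V(s_t) = \sum_{i=1}^{\mathcal{N}} w_{t,i}\big(\mu_i - V(s_t)\big),
\]
where $\mu_i = \mathbb{E}[r_i \mid s_i]$ and the weights sum to one. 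Set $b_i = \mu_i - V(s_t)$. The claim becomes
\[
\Big|\sum_i w_{t,i} b_i\Big| \le \Big|\tfrac{1}{\mathcal{N}}\sum_i b_i\Big|.
\]

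\textbf{Step 2: invoke the assumption.} I expect Assumption~\ref{assum:bias_corrective_weighting} to say one of two things. The first possibility is that $w_{t,i}$ is non-increasing in $|b_i|$ and the $b_i$ share a sign. Then a Chebyshev-sum or rearrangement argument, namely that $\mathrm{Cov}_{\mathrm{unif}}(\mathcal{N} w, b)$ has sign opposite to $b$, gives $|\sum_i w_i b_i| \le |\bar b|$ directly. The second possibility is that the weighted bias is stated as a convex shrinkage $\sum_i w_i b_i = \alpha\,\bar b$ with $\alpha\in[0,1]$, in which case the claim is immediate. Note that $b_i$ sharing a sign is only a sufficient condition for the rearrangement route; I would check which hypothesis the assumption actually supplies.

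\textbf{Step 3: handle correlated rewards.} The ratio estimator's expectation is not the ratio of expectations when the weights depend on the rewards, so this step needs care. I would condition on the hidden states (the $\mathbf{X}_i$) so the weights are deterministic, apply Step 2 conditionally, and then pass the absolute value outside using the tower property together with Jensen's inequality. Jensen gives $|\mathbb{E}[\cdot]| \le \mathbb{E}[|\cdot|]$, and this is the direction needed only if the assumption is stated conditionally.

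\textbf{Main obstacle.} The hardest part is this last point: making the conditioning and the absolute values commute correctly. If the assumption is instead stated unconditionally on expectations, Step 3 collapses and the proof is the two-line comparison in Steps 1 and 2. A second risk is that the proportionality constant in Theorem~\ref{thm: correlation_hidden_states_reward} is unknown. This is harmless, because the estimator is self-normalized, so any constant cancels out of the $w_{t,i}$.
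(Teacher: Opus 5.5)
Your Step 1 is the paper's bias decomposition (Lemma~\ref{lem:pw_bias_decomposition}), written with normalized weights. Take $I$ uniform on the indices, $X=V(s_I)-V(s_t)$ and $W=P_{t,I}$. Then $\bar b=\mathbb{E}[X]$ and $\sum_i w_{t,i}b_i=\mathbb{E}[WX]/\mathbb{E}[W]=\bar b+\mathrm{Cov}(W,X)/\mathbb{E}[W]$.

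The gap is Step 2, which never uses the actual hypothesis. Assumption~\ref{assum:bias_corrective_weighting} is a sign-and-magnitude condition on this covariance correction: $\mathbb{E}[X]\,\mathrm{Cov}(W,X)\le 0$ and $|\mathrm{Cov}(W,X)|\le 2\,\mathbb{E}[W]\,|\mathbb{E}[X]|$. It says nothing about the weights being monotone in $|b_i|$ or about the $b_i$ sharing a sign, so your Chebyshev/rearrangement argument has nothing to run on. Nor does it give $\sum_i w_{t,i}b_i=\alpha\bar b$ with $\alpha\in[0,1]$: it deliberately allows over-correction past zero, which amounts to $\alpha\in[-1,1]$. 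Each of your candidate hypotheses implies the paper's assumption, but not conversely, so your plan proves only special cases.

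The missing step is short. In your notation, the two conditions say that $\sum_i w_{t,i}b_i-\bar b$ has sign opposite to $\bar b$ and absolute value at most $2|\bar b|$. Hence $\sum_i w_{t,i}b_i$ lies between $-\bar b$ and $\bar b$, which is the claim. This interval argument is the paper's entire proof; in fact the two conditions together are equivalent to the inequality being proved.

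Your Step 3 is unnecessary, and as sketched it would fail. In the paper the sampled states are fixed and each $P_{t,i}=\mathbb{P}(R_i=R_t)$ is a deterministic number. So $\mathbb{E}[\hat V_{\mathrm{PW}}(s_t)]=\sum_i P_{t,i}V(s_i)/\sum_i P_{t,i}$ holds exactly, and no ratio-estimator issue arises. If you did have to average over random states, Jensen would only give $|\mathbb{E}[\text{PW bias}]|\le\mathbb{E}\,|\text{conditional PW bias}|\le\mathbb{E}\,|\text{conditional avg bias}|$. The last quantity is at least $|\mathbb{E}[\text{avg bias}]|$, not at most, so the chain does not reach the stated inequality.
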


The detailed proof of Theorem \ref{thm:hista_better_than_grpo} and remark about Assumption~\ref{assum:bias_corrective_weighting} is provided in Remark \ref{interpretation-of-the-similarity-assumptions}. 

Building on this insight, we propose Hidden State based State Value Estimation (\textit{Hista}), which implements probability weighted estimation to compute state values from generation rollouts. For efficiency, \textit{Hista} extracts last layer hidden states and uses the inverse of their \textit{MinDistance} to approximate the probability score.
As shown in Figure \ref{HistaPipeline}, \textit{Hista} directly process the terminal state $s_T$ of rollout $\tau$ and extract the representation of other states from it. Suppose the last-layer hidden states of $s_T$ is $\mathbf{X}_{\tau} = \begin{bmatrix}\mathbf{x}_{\tau, 1}, \mathbf{x}_{\tau,2}, ..., \mathbf{x}_{\tau,\eta}\end{bmatrix}^\top$. Since this sequence can be very long (e.g., $> 100\text{k}$ tokens), we compress it by applying exponential moving average (EMA) with smoothing factor $\alpha$ and a compression interval $\varphi$ on the first dimension. This yields a shorter embedding sequence $\mathbf{E}_\tau=\begin{bmatrix}\mathbf{e}_{\tau, 1}, \mathbf{e}_{\tau, 2}, \ldots, \mathbf{e}_{\tau,\lfloor \eta / \varphi\rfloor}\end{bmatrix}^\top$, where \(\mathbf{e}_{\tau,i}\) is the last embedding of \(\text{EMA}(\begin{bmatrix}\mathbf{x}_{\tau, 1}, \mathbf{x}_{\tau,2}, ..., \mathbf{x}_{\tau,i\varphi}\end{bmatrix}^\top)\). We then uniformly sample embedding positions with interval $\delta$ to define the finite state space for this rollout $\mathcal{S}^H_\tau=\left(s^H_{\tau, 1}, s^H_{\tau, 2}, \ldots, s^H_{\tau,\lfloor \eta /(\varphi\delta)\rfloor}\right)$. Each state $s^H_{\tau, i}$ is represented by the sequence of embeddings up to that point $s^H_{\tau, i}=\begin{bmatrix}\mathbf{e}_{\tau, 1}, \mathbf{e}_{\tau, 2}, \ldots, \mathbf{e}_{\tau, i\delta}\end{bmatrix}^\top$.  

All states sharing the same input prompt are collected into a global state space $\mathcal{S}^H$. For a given state $s^H_t$, we compute its distance to every other state in $\mathcal{S}^H$ using the \textit{MinDistance} measure, select the $k$ nearest neighbors, and record their distances. Finally, as illustrated in Stage 4 in Figure \ref{HistaPipeline}, the state value is estimated as the probability-weighted average of the final rewards of these neighbors:
\begin{equation*}
\label{eq:hista_state_value}
V(s_t)=\frac{\sum_{i=1}^k \omega_i \cdot r_i}{\sum_{i=1}^k \omega_i}, \quad \omega_i=\frac{1}{\operatorname{MD}\left(s_t, s_{i}\right)} .
\end{equation*}

Hyperparameter ablations are provided in Appendix E. With batched GPU computation, \textit{Hista} achieves efficiency comparable to GRPO, with space complexity $O(\lfloor T / d\rfloor^2)$, corresponding latency and memory results are reported in the Appendix~\ref{ap-hista-efficiency}. Notation definitions appear in Table~\ref{tab:notation-reference-table}.

\begin{table*}[ht]
\caption{Mean Absolute Error (MAE) for GRPO, PPO-N, \textit{Numca}, and \textit{Hista} across SVEB fields, with lower values being better. All values are calculated relative to a reference standard established by MCS@20.}
\vskip -0.1in
\label{SVEB-all}
\begin{center}
\begin{small}
\setlength{\aboverulesep}{0.0pt}
\setlength{\belowrulesep}{0.0pt}
\setlength{\extrarowheight}{3.5pt}
\begin{tabular}{lccccc}
\toprule
\textbf{SVEB} & \textbf{Number} $\downarrow$ & \textbf{Math} $\downarrow$ & \textbf{Science} $\downarrow$ & \textbf{General} $\downarrow$ & \textbf{Programming} $\downarrow$ \\
\midrule
\rowcolor{graybs} GRPO@40  & 0.175 & 0.208  & 0.215  & 0.202 & 0.157 \\ 
\rowcolor{graybs} PPO-N@40 & 0.159 & \underline{0.187} & \underline{0.198} & \underline{0.185} & \underline{0.144} \\
\rowcolor{algblue} Numca@40 & \textbf{0.132}\dimpr{0.027} & 0.194\ddegr{0.007} & 0.217\ddegr{0.019} & 0.200\ddegr{0.015} & 0.154\ddegr{0.010} \\
\rowcolor{algblue} Hista@40 & \underline{0.142}\dimpr{0.017} & \textbf{0.145}\dimpr{0.042} & \textbf{0.173}\dimpr{0.025} & \textbf{0.157}\dimpr{0.028} & \textbf{0.119}\dimpr{0.025} \\
\midrule
\rowcolor{refyellow} MCS@1    & 0.223 & 0.235  & 0.272 & 0.283 & 0.162 \\
\rowcolor{refyellow} MCS@2    & 0.133 & 0.160  & 0.188 & 0.139 & 0.113 \\
\rowcolor{refyellow} MCS@3    & 0.111 & 0.121  & 0.134 & 0.114 & 0.083 \\
\bottomrule
\end{tabular}
\end{small}
\end{center}
\vskip -0.1in
\end{table*}

\begin{table*}[t]
  \caption{DAPO, GSPO, CSIPO, and their combination with \textit{Hista} on training \textbf{Qwen2.5 base models} with Simple Math Dataset.}
  \vskip -0.1in
  \label{Task 1 Qwen2.5}
  \begin{center}
  \begin{small}
  \setlength{\aboverulesep}{0pt}
  \setlength{\belowrulesep}{0pt}
  \setlength{\extrarowheight}{3.5pt}
        \begin{tabular}{lcccccccccr}
          \toprule
            & \textbf{MATH} & \textbf{GSM} & \textbf{Minerva} &  \textbf{Olympiad} & \textbf{AMC} & \textbf{AIME} & \textbf{GaoKao} & \textbf{CollegeMath} & \textbf{Average}  \\
          \midrule
          \rowcolor{graybs}0.5B DAPO    &   28.4 & 45.3 & 5.3 & 7.0 & 7.8 & 0.2 & 28.0 & 34.4 &  19.6\\
          +Hista  & 32.5 \aimpr{4.1} & 47.4 \aimpr{2.1} & 6.4 \aimpr{1.1} & 7.8\aimpr{0.9} & 9.3 \aimpr{1.5} & 0.2 \adegr{0.0} & 30.6 \aimpr{2.6} & 35.4 \aimpr{1.0} & 21.2 \aimpr{1.8}  \\
          \midrule
          \rowcolor{graybs}1.5B DAPO    & 57.6 & 74.2 & 18.8 & 20.2 & 23.1 & 2.0 & 49.4 & 53.3 & 37.3 \\
          +Hista  & 55.8 \adegr{1.8} & 76.4 \aimpr{2.2} & 19.4 \aimpr{0.6} & 20.6 \aimpr{0.4} & 27.2 \aimpr{4.1} & 2.1 \aimpr{0.1} & 51.2 \aimpr{1.8} & 51.5 \adegr{1.8} & 38.0 \aimpr{0.7} \\
          \rowcolor{graybs}1.5B GSPO    & 57.0 & 74.1 & 17.7 & 20.0 & 30.6 & 2.0 & 49.3 & 52.6 & 37.9 \\
          +Hista  & 55.8 \adegr{1.2} & 74.6 \aimpr{0.5} & 17.9 \aimpr{0.2} & 22.6 \aimpr{2.6} & 26.3 \adegr{4.3} & 2.3 \aimpr{0.3} & 49.8 \aimpr{0.5} & 52.5 \adegr{0.1} & 37.7 \adegr{0.2} \\
          \rowcolor{graybs}1.5B CSIPO   & 54.0 & 74.1 & 19.2 & 20.6 & 32.8 & 1.2 & 48.3 & 52.4 & 37.8 \\
          +Hista  & 58.2 \aimpr{4.2} & 74.2 \aimpr{0.1} & 18.0 \adegr{1.2} & 21.0 \aimpr{0.4} & 35.3 \aimpr{2.5} & 1.0 \adegr{0.2} & 48.1 \adegr{0.2} & 53.5 \aimpr{1.1} & 38.7 \aimpr{0.9} \\
          \midrule
          \rowcolor{graybs}3B DAPO     & 66.0 & 83.6 & 25.8 & 29.8 & 37.1 & 3.9 & 58.2 & 57.6 & 45.2 \\
          +Hista  & 65.6 \adegr{0.4} & 86.4 \aimpr{2.8} & 24.9 \adegr{0.9} & 30.1 \aimpr{0.3} & 39.7 \aimpr{2.6} & 5.6 \aimpr{1.7} & 55.6 \adegr{2.6} & 58.1 \aimpr{0.5} & 45.8 \aimpr{0.6} \\
          \rowcolor{graybs}3B CSIPO   & 64.4 & 83.6 & 28.4 & 28.6 & 35.0 & 3.8 & 52.7 & 59.0 & 44.4 \\
          +Hista  & 62.0 \adegr{2.4} & 84.2 \aimpr{0.6} & 25.7 \adegr{2.7} & 31.8 \aimpr{3.2} & 35.9 \aimpr{0.9} & 6.4 \aimpr{2.6} & 55.8 \aimpr{3.1} & 57.8 \adegr{1.2} & 45.0 \aimpr{0.6} \\
          \midrule
          \rowcolor{graybs}7B DAPO     & 75.4 & 91.1 & 32.1 & 39.9 & 47.8 & 11.0 & 63.8 & 63.8 & 53.1 \\
          +Hista  & 72.6 \adegr{2.8} & 91.5 \aimpr{0.4} & 33.1 \aimpr{1.0} & 41.2 \aimpr{1.3} & 48.8 \aimpr{1.0} & 12.0 \aimpr{1.0} & 65.4 \aimpr{1.6} & 64.3 \aimpr{0.5} & 53.6 \aimpr{0.5} \\
          \rowcolor{graybs}7B CSIPO   & 76.4 & 90.0 & 31.3 & 40.5 & 45.0 & 9.1 & 64.4 & 62.3 & 52.4  \\
          +Hista  & 74.2 \adegr{2.2} & 89.5 \adegr{0.5} & 34.2 \aimpr{2.9} & 38.8 \adegr{1.7} & 50.6 \aimpr{5.6} & 9.4 \aimpr{0.3} & 63.9 \adegr{0.5} & 62.9 \aimpr{0.6} & 52.9 \aimpr{0.5} \\
          \midrule
          \rowcolor{graybs}14B DAPO    & 77.6 & 94.2 & 38.1 & 46.1 & 55.6 & 12.9 & 68.5 & 64.7 & 57.2 \\
          +Hista  & 78.0 \aimpr{0.4} & 94.4 \aimpr{0.2} & 38.5 \aimpr{0.4} & 45.5 \adegr{0.6} & 61.8 \aimpr{6.2} & 12.1 \adegr{0.8} & 71.1 \aimpr{2.6} & 65.3 \aimpr{0.6} & 58.3 \aimpr{1.1} \\
          \bottomrule
        \end{tabular}
    \end{small}
  \end{center}
  \vskip -0.2in
\end{table*}

\vspace{-0.1in}
\section{Experiments}
\subsection{Experiment Settings}
\label{SVEB data}
The SVEB is created by curating data from 5 specialized sources that coverage most of reasoning tasks. To calibrate problem difficulty, we retain only problems with a solution accuracy between 0.1 and 0.8. 

\textbf{SVEB-Number}: Samples are drawn from DAPO-17K \cite{dapo}, filtered to include only problems where the number of unique numerical values appearing exclusively within the reasoning trajectory exceeds four. 

\textbf{SVEB-Math}: This subset consists of general mathematical problems from OpenR1-220K \cite{openr1math220k}. 

\textbf{SVEB-Science}: Sampled from the science subset of the Llama-Nemotron Post-Training Dataset \cite{nemotron} to evaluate reasoning in scientific domains. 

\textbf{SVEB-General}: Samples are drawn from the WebInstruct-verified dataset \cite{generalreasoner} to assess performance on general question-answering tasks. 

\textbf{SVEB-Programming}: This subset is constructed from the Python portion of the verifiable-coding-problem dataset \cite{synthetic1}, targeting evaluation of state value estimation in code-oriented reasoning.

\begin{table*}[t]
  \caption{Performance of DAPO, CSIPO, and their combination with \textit{Hista} on training \textbf{Qwen2.5 instruct models} with Hybrid Reasoning Dataset. The "... Average" column is calculated by averaging the score of all benchmarks with same category. The "ALL" column is calculated by averaging the performance on all four fields. Each "... + Hista" row is compared with corresponding "..." baseline.}
  \vskip -0.1in
  \label{Task 2 Qwen2.5}
  \begin{center}
  \begin{small}
  \setlength{\aboverulesep}{0pt}
  \setlength{\belowrulesep}{0pt}
  \setlength{\extrarowheight}{3.5pt}
        \begin{tabular}{lccccr}
          \toprule
            & \textbf{Math Average} & \textbf{Science Average} & \textbf{General Average} & \textbf{Programming Average}  & \textbf{ALL}  \\
          \midrule
          \rowcolor{refyellow}Qwen2.5-1.5B    & 39.3 & 34.9 & 26.8 & 53.5 & 38.6 \\ 
          \rowcolor{graybs}DAPO    & 38.4 & 35.8 & 29.0 & 52.7 & 38.9 \\
          \rowcolor{graybs}CSIPO   & 40.4 & 36.1 & 26.0 & 53.9 & 39.1 \\
          DAPO + Hista  & 40.1 \aimpr{1.7} & 36.6 \aimpr{0.8} & 29.5 \aimpr{0.5} & 53.4 \aimpr{0.7} & 39.9 \aimpr{1.0} \\
          CSIPO + Hista  & 41.5 \aimpr{1.1} & 37.2 \aimpr{1.1} & 30.0 \aimpr{4.0} & 54.6 \aimpr{0.7} & 40.8 \aimpr{1.7}\\
          \midrule
          \rowcolor{refyellow}Qwen2.5-3B  &  48.3 & 43.1 & 35.9 & 59.7 & 46.8  \\ 
          \rowcolor{graybs}DAPO    & 47.0 & 42.5 & 36.1 & 58.4 & 46.0 \\
          \rowcolor{graybs}CSIPO   & 49.5 & 45.3 & 36.8 & 60.1 & 47.9 \\
          DAPO + Hista  & 49.2 \aimpr{2.2} & 44.0 \aimpr{1.5} & 37.1 \aimpr{1.0} & 59.2 \aimpr{0.8} & 47.4 \aimpr{1.4} \\
          CSIPO + Hista  & 50.4 \aimpr{0.9} & 44.3 \adegr{1.0} & 37.1 \aimpr{0.3} & 61.0 \aimpr{0.9} & 48.2 \aimpr{0.3} \\
          \midrule
          \rowcolor{refyellow}Qwen2.5-7B    & 57.6 & 48.3 & 45.7 & 68.1 & 54.9 \\ 
          \rowcolor{graybs}DAPO    & 56.5 & 50.0 & 44.4 & 70.0 & 55.2 \\
          \rowcolor{graybs}CSIPO   & 56.6 & 51.7 & 45.3 & 69.1 & 55.7 \\
          DAPO + Hista  & 59.0 \aimpr{2.5} & 51.5 \aimpr{1.5} & 46.1 \aimpr{1.7} & 68.6 \adegr{1.4} & 56.3 \aimpr{1.1} \\
          CSIPO + Hista  & 57.2 \aimpr{0.6} & 52.0 \aimpr{0.3} & 45.8 \aimpr{0.5} & 68.7 \adegr{0.4} & 55.9 \aimpr{0.2}\\
          \midrule
          \rowcolor{refyellow}Qwen2.5-14B    & 61.6 & 56.7 & 53.2 & 68.8 & 60.1 \\ 
          \rowcolor{graybs}DAPO    & 60.1 & 56.7 & 54.6 & 69.8 & 60.3 \\
          +Hista  & 61.2 \aimpr{1.1} & 57.4 \aimpr{0.7} & 55.6 \aimpr{1.0} & 69.8 \aimpr{0.0} & 61.0 \aimpr{0.7}\\
          \bottomrule
        \end{tabular}
  \end{small}
  \end{center}
  \vskip -0.1in
\end{table*}

\begin{table*}[t]
  \caption{Performance of DAPO and its combination with \textit{Hista} on training \textbf{reasoning model} with Hard Math Dataset.}
  \vskip -0.1in
  \label{Task 3 Reasoning}
  \begin{center}
  \begin{small}
  \setlength{\aboverulesep}{0pt}
  \setlength{\belowrulesep}{0pt}
  \setlength{\extrarowheight}{3.5pt}
        \begin{tabular}{lccccccccr}
          \toprule
            & \textbf{MATH} & \textbf{GSM} & \textbf{Minerva} &  \textbf{Olympiad} & \textbf{AMC} & \textbf{AIME} & \textbf{GaoKao} & \textbf{CollegeMath} & \textbf{Average}  \\
          \midrule
          \rowcolor{refyellow}R1-Dist-1.5B   & 82.6 & 76.4 & 29.7 & 52.2 & 72.5 & 25.0 & 70.8 & 62.1 & 58.9 \\ 
          \rowcolor{graybs}DAPO    & 86.0 & 79.3 & 31.6 & 53.2 & 71.8 & 25.4 & 72.5 & 62.5 & 60.3 \\
          +Hista  & 85.0 \adegr{1.0} & 79.7 \aimpr{0.4} & 32.3 \aimpr{0.7} & 53.5 \aimpr{0.3} & 76.8 \aimpr{5.0} & 26.3 \aimpr{0.9} & 73.4 \aimpr{0.9} & 63.8 \aimpr{1.3} & 61.4 \aimpr{1.1} \\
          \bottomrule
        \end{tabular}
    \end{small}
  \end{center}
  \vskip -0.2in
\end{table*}

\textbf{RL Settings}: We evaluate \textit{Hista} by integrating it into RL post‑training across diverse datasets. First, we treat MATH dataset \cite{MATH} as our \textbf{Simple Math Dataset} to train Qwen2.5 models (1.5B, 3B, and 7B). Second, we construct a \textbf{Hybrid Reasoning Dataset} by aggregating and filtering all data sources from Section~\ref{SVEB data} (see Appendix~\ref{ap-data-statistics} for details). Finally, for advanced reasoning, we train R1‑Distill‑1.5B on the challenging DAPO‑17K dataset (\textbf{Hard Math Dataset}). Additional training details are provided in Appendix~\ref{ap-training-Hista}.

\textbf{Downstream Task Benchmarks}: To evaluate different algorithms in alignment with our training tasks, we employ comprehensive suite of benchmarks. For mathematical reasoning, we use MATH-500\cite{verifystepbystep}, GSM8K \cite{gsm8k}, OlympiadBench \cite{olympiadbench}, AIME24\&25, AMC23, GaoKao-2023-en \cite{gaokao}, and CollegeMath \cite{collegemath}. In the scientific domain, models are assessed on SciEval \cite{scieval}, TheoremQA \cite{theoremqa}, and MinervaMath \cite{minervamath}. For general knowledge and reasoning, we utilize GPQA \cite{gpqa}, and MMLU-Pro \cite{mmlupro}. Finally, programming capability is evaluated using HumanEval+ \cite{humaneval}\cite{humaneval+} and MBPP+\cite{mbpp}\cite{humaneval+}. To ensure efficiency, stability, and comparability with the original model, most benchmarks are evaluated with temperature $0.0$. For smaller benchmark sets such as AIME and AMC, we use temperature = 0.1 and report avg@16. Different from benchmark evaluation, the validation dynamics in Figure \ref{first_figure} and \ref{Qwen3Dynamics} are obtained with temperature = 0.7, which might be the reason for some difference between validation result and MATH column of Table~\ref{Task 1 Qwen2.5}.

\subsection{Results}

\begin{table*}[t]
  \caption{DAPO and its combination with \textit{Hista} on training \textbf{Qwen3 base models}. 0.6B model is trained on Simple Math Dataset. 1.7B, 4B, and 8B models are trained on Hard Math dataset}
  \vskip -0.1in
  \label{Task 13 Qwen3}
  \begin{center}
  \begin{small}
  \setlength{\aboverulesep}{0pt}
  \setlength{\belowrulesep}{0pt}
  \setlength{\extrarowheight}{3.5pt}
        \begin{tabular}{lcccccccccr}
          \toprule
            & \textbf{MATH} & \textbf{GSM} & \textbf{Minerva} &  \textbf{Olympiad} & \textbf{AMC} & \textbf{AIME} & \textbf{GaoKao} & \textbf{CollegeMath} & \textbf{Average}  \\
          \midrule
          \rowcolor{graybs}0.6B DAPO    &   50.8 & 67.2 & 17.0 & 17.1 & 18.1 & 1.8 & 41.6 & 46.9 &  32.5\\
          +Hista  & 52.2 \aimpr{1.4} & 68.3 \aimpr{1.1} & 17.6 \aimpr{0.6} & 19.1\aimpr{2.0} & 26.1 \aimpr{8.0} & 1.7 \adegr{0.1} & 43.6 \aimpr{2.0} & 49.3 \aimpr{2.4} & 34.8 \aimpr{2.3}  \\
          \midrule
          \rowcolor{graybs} 1.7B DAPO    &   65.6 & 83.8 & 26.6 & 30.8 & 41.7 & 5.0 & 57.7 & 59.2 &  46.3 \\
          +Hista  & 67.2 \aimpr{1.6} & 81.4 \adegr{2.4} & 22.9 \adegr{3.7} & 30.3\adegr{0.5} & 41.0 \adegr{0.7} & 7.5 \aimpr{2.5} & 58.4 \aimpr{0.7} & 58.9 \adegr{0.3} & 45.9 \adegr{0.4}  \\
          \midrule
          \rowcolor{graybs}4B DAPO    &    79.8 & 88.6 & 36.4 & 45.5 & 52.2 & 15.4 & 67.2 & 63.3 & 56.1\\
          +Hista  & 80.2 \aimpr{0.4} & 90.2 \aimpr{1.6} & 33.0 \adegr{3.4} & 45.7 \aimpr{0.2} & 56.3 \aimpr{4.1} & 17.8 \aimpr{2.4} & 68.3 \aimpr{1.1} & 64.4 \aimpr{1.1} & 57.4 \aimpr{1.3}  \\
          \midrule
          \rowcolor{graybs}8B DAPO    &    83.0 & 93.3 & 40.2 & 51.8 & 58.8 & 20.2 & 74.0 & 64.8 & 60.7\\
          +Hista  & 86.0 \aimpr{3.0} & 93.3 \aimpr{0.0} & 41.3 \aimpr{1.1} & 55.0 \aimpr{3.2} & 58.4 \adegr{0.4} & 22.5 \aimpr{2.3} & 74.3 \aimpr{0.3} & 66.4 \aimpr{1.6} & 62.2 \aimpr{1.5}  \\
          \bottomrule
        \end{tabular}
    \end{small}
  \end{center}
  \vskip -0.2in
\end{table*}

\begin{figure*}[h]
  \vskip 0.1in
  \begin{center}
    \centerline{\includegraphics[width=\textwidth]{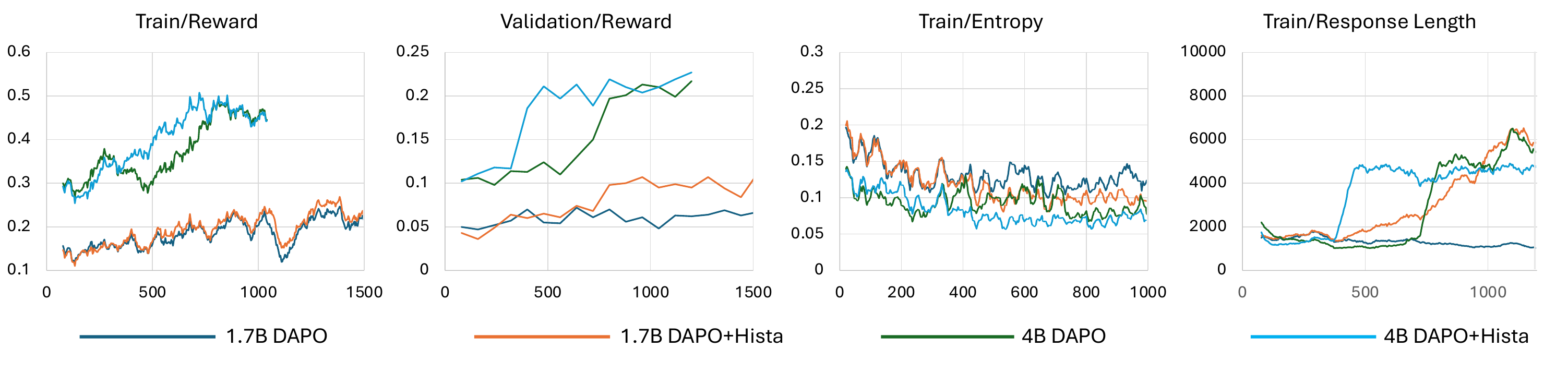}}
    \vskip -0.1in
    \caption{
      Training dynamics of \textbf{Qwen3-1.7B and 4B base model} on DAPO-17K dataset and AIME24 validation set.
    }
    \label{Qwen3Dynamics}
  \end{center}
  \vskip -0.35in
\end{figure*}

\textbf{SVEB Results}:  As shown in \cref{SVEB-all}, \textit{Hista} achieves dominant and generalized performance when applied to the Qwen2.5‑3B‑Instruct actor, attaining a lower MAE across most SVEB benchmarks. While \textit{Numca} excels narrowly on SVEB‑NUMBER, it fails to generalize to other reasoning tasks. These results validate our claim that both \textit{Numca} and \textit{Hista} provide significantly more accurate state value estimates than mainstream alternatives such as GRPO and PPO. Additional results on different model sizes and architectures are provided in the Appendix~\ref{ap-SVEB-different-models}. Compared to Monte Carlo Search (MCS) estimation, GRPO outperforms MCS@1, while \textit{Hista} matches the accuracy of the more expensive MCS@2. However, no current method reaches the performance ceiling set by MCS@3, leaving substantial room for future improvement. The distribution of differences relative to GRPO is presented in Appendix~\ref{ap-difference-distribution}.

\textbf{RL Training with \textit{Hista} on Base Model}:
We evaluate \textit{Hista} on base models ranging from 0.5B to 14B parameters. As shown in Table~\ref{Task 1 Qwen2.5}, integrating \textit{Hista} with DAPO yields consistent and stable accuracy gains across all model sizes, demonstrating strong synergy between \textit{Hista}'s token‑level MDP modeling and DAPO's design. In contrast, combining \textit{Hista} with GSPO on a 1.5B model offers no improvement due to a fundamental conflict with GSPO's whole‑response optimization. Meanwhile, \textit{Hista} consistently outperforms CSIPO, highlighting the advantage of its structurally aligned MDP formulation.

\textbf{RL Training with \textit{Hista} on Instruct Model}: 
We extend our evaluation to instruction-tuned models trained on hybrid reasoning datasets. As presented in Table \ref{Task 2 Qwen2.5}, applying RL with \textit{Hista} to these models yields further improvements compare with original model. This finding aligns with the results in Table \ref{Numca-Benchmark}, which independently shows that instruction-tuned models obtain greater performance gains from subsequent RL training. The robustness of \textit{Hista} is further confirmed by its reliable enhancement of the DAPO and CSIPO baseline across diverse domains including science, general QA, and programming within this instructed setting. This consistent cross-domain improvement is corroborated by the state-value estimation metrics in Table \ref{SVEB-all}.

\textbf{Reasoning Model and Qwen3 Training}: 
We evaluate \textit{Hista} on long‑sequence reasoning and complex mathematical induction (Table~\ref{Task 3 Reasoning}), where it improves upon the DAPO baseline, empirically validating Theorem~\ref{thm:hista_better_than_grpo} in reasoning scenario. Applied to Qwen3 models (0.6B–8B) on Simple/Hard Math datasets, \textit{Hista} stimulates reasoning more efficiently and yields more stable entropy (Figure~\ref{Qwen3Dynamics}). Benchmark results (Table~\ref{Task 13 Qwen3}) further confirm its generality, though it underperforms on the 1.7B mode. We attribute this discrepancy to the stochasticity inherent in RL and the out-of-distribution (O.O.D) nature of the benchmark.

\section{Discussion}

\textbf{Agentic RL: } Agentic RL has became an emerging topic because of its application on programming, research, and task automation. Compared with single turn RL, Agentic RL introduce higher rollout cost because of longer reasoning trace and multiple interaction with tools. Better credit assignment has been proved to be effective in Agentic RL \cite{gigpo}, but current methods still rely on heuristic rules. Therefore, \textit{Hista} can be seamlessly integrated Agentic RL for provably better credit assignment without extra rollout or combined with other methods as a weak interaction-level supervision \cite{T3, self-locking}.

\textbf{Promgramming Task: } Theorem~\ref{thm: correlation_hidden_states_reward} required Assumption~\ref{assum:answer_parsing}, which is not applicable to open-ended programming task. However, the SVEB result in Table~\ref{SVEB-all} prove the effective of \textit{Hista} on programming, which suggests the generality of \textit{Hista} in open-ended reasoning task and the potential of developing a more general theoritical framework.

\section{Conclusion}

This study demonstrates that advancing state‑value estimation beyond group‑averaged rewards is important for improving reinforcement learning in large language models. Our SVEB framework exposes this limitation when applying conventional critic training approach to LLMs. We then address the issue with two credit assignment approaches: the heuristic \textit{Numca} and the theoretically grounded, token‑level \textit{Hista}. Extensive experiments confirm that \textit{Hista} effectively enhances policy optimization. These findings establish superior state‑value estimation as a promising direction for boosting the efficiency and performance of RL training.

\section*{Impact Statement}

This paper presents work whose goal is to advance the field of Machine
Learning. There are many potential societal consequences of our work, none
which we feel must be specifically highlighted here.

\bibliography{hista}
\bibliographystyle{icml2026}

\appendix
\twocolumn

\section{Proof of Theorem \ref{thm: correlation_hidden_states_reward}}
\label{ap: proof_of_correlation}

\subsection{Preliminaries}

We conduct our theoretical analysis within the framework of the standard pre-normalization (Pre-LN) Transformer block, which serves as the architectural foundation for modern open-source large language models such as LLaMA~\cite{llama} and Qwen. Formally, let the hidden representations at layer $l-1$ be denoted by the matrix
\[
\mathbf{X}^{l-1} = \begin{bmatrix} \mathbf{x}^{l-1}_1, \mathbf{x}^{l-1}_2, \dots, \mathbf{x}^{l-1}_\eta \end{bmatrix}^\top \in \mathbb{R}^{\eta \times d},
\]
where $\eta$ represents the sequence length and $d$ signifies the hidden dimension.

\paragraph{Self-Attention Sublayer.}
The hidden state matrix is initially normalized via Root Mean Square Layer Normalization (RMSNorm):
\[
\widetilde{\mathbf{x}}_t^{l-1} = \mathrm{RMSNorm}(\mathbf{x}_t^{l-1}) = \mathbf{G}^l \frac{\mathbf{x}_t^{l-1}}{\sqrt{\frac{1}{d} \|\mathbf{x}_t^{l-1}\|_2^2 + \epsilon}},
\]
where $\mathbf{G}^l = \mathrm{diag}(\mathbf{g}^l) \in \mathbb{R}^{d \times d}$ represents the diagonal matrix formed by the learnable scaling vector $\mathbf{g}^l \in \mathbb{R}^d$. The corresponding query, key, and value matrices are subsequently obtained through linear projections on \(\widetilde{\mathbf{X}}^{l-1} = \begin{bmatrix} \widetilde{\mathbf{x}}^{l-1}_1, \widetilde{\mathbf{x}}^{l-1}_2, \dots, \widetilde{\mathbf{x}}^{l-1}_\eta \end{bmatrix}^\top\):
\[
\mathbf{Q}^l = \widetilde{\mathbf{X}}^{l-1} \mathbf{W}_Q^l, \quad
\mathbf{K}^l = \widetilde{\mathbf{X}}^{l-1} \mathbf{W}_K^l, \quad
\mathbf{V}^l = \widetilde{\mathbf{X}}^{l-1} \mathbf{W}_V^l,
\]
where $\mathbf{W}_Q^l, \mathbf{W}_K^l, \mathbf{W}_V^l \in \mathbb{R}^{d \times d}$ denote the learnable projection weight matrices.

To streamline our theoretical exposition without loss of generality, positional encodings, attention masks, and multi-head mechanisms are omitted. Under this single-head self-attention setting, the attention weight matrix $\mathbf{A}^l \in \mathbb{R}^{\eta \times \eta}$ is formulated as
\[
\mathbf{A}^l = \mathrm{Softmax}\left(\frac{\mathbf{Q}^l (\mathbf{K}^l)^\top}{\sqrt{d}}\right),
\]
where $\mathrm{Softmax}(\cdot)$ denotes the row-wise softmax operator. The attention output matrix before and after the linear output projection $\mathbf{W}_O^l \in \mathbb{R}^{d \times d}$ is given by:
\[
\mathbf{H}^l = \mathbf{A}^l \mathbf{V}^l, \quad \mathbf{Z}^l = \mathbf{H}^l \mathbf{W}_O^l.
\]
Finally, a residual connection integrates the sublayer output to produce the intermediate representation matrix $\mathbf{U}^l \in \mathbb{R}^{\eta \times d}$ :
\[
\mathbf{U}^l = \mathbf{X}^{l-1} + \mathbf{Z}^l.
\]

\paragraph{Feed-Forward Sublayer.}
The intermediate representation matrix is normalized using the secondary RMSNorm layer:
\[
\widetilde{\mathbf{U}}^l = \mathrm{RMSNorm}_2(\mathbf{U}^l).
\]
The feed-forward network (FFN) is applied position-wise across the sequence. Maintaining a uniform domain $d \times d$ for structural simplicity, the sublayer operation is expressed as
\[
\mathrm{FFN}(\widetilde{\mathbf{U}}^l) = \sigma\left(\widetilde{\mathbf{U}}^l \mathbf{W}_1^l + \mathbf{b}_1^l\right)\mathbf{W}_2^l + \mathbf{b}_2^l,
\]
where $\mathbf{W}_1^l, \mathbf{W}_2^l \in \mathbb{R}^{d \times d}$ denote the projection weights, $\mathbf{b}_1^l, \mathbf{b}_2^l$ represent the bias vectors (added row-wise via standard broadcasting), and $\sigma(\cdot)$ denotes an element-wise non-linear activation function. 

The final output matrix of layer $l$, denoted as $\mathbf{X}^l \in \mathbb{R}^{\eta \times d}$, is obtained via the second residual connection:
\[
\mathbf{X}^l = \mathbf{U}^l + \mathrm{FFN}(\widetilde{\mathbf{U}}^l).
\]

\subsection{Correlation in the Simplified Case}

Our primary objective is to establish the correlation between the \textit{MinDistance} of the hidden states of two distinct states, $s_1$ and $s_2$, and the probability that they share an identical final reward. Let $\mathbf{X}_1^l, \mathbf{X}_2^l \in \mathbb{R}^{\eta \times d}$ denote the hidden state sequences of $s_1$ and $s_2$ at the $l$-th layer, respectively. For tractability, we initially consider a simplified setting where both sequences share a uniform length $\eta$, and their spatial alignment satisfies:
\begin{equation}
i = \arg\min_{j \in \{1, \dots, \eta\}} \| \mathbf{x}^l_{1,i} - \mathbf{x}^l_{2,j} \|_2, \quad \forall i \in \{1, \dots, \eta\}.
\end{equation}
This alignment condition implies that for each token position $i$ in $\mathbf{X}_1^l$, the closest hidden state in $\mathbf{X}_2^l$ resides at the exact same positional index. Consequently, under this structural assumption, the minimum distance metric reduces to the entry-wise alignment sum:
\begin{equation}
\text{MD}(\mathbf{X}^l_1, \mathbf{X}^l_2) = \sum_{i=1}^\eta \| \mathbf{x}^l_{1,i} - \mathbf{x}^l_{2,i} \|_2.
\end{equation}

To analyze the trajectory of these representations within this simplified setting, we introduce two foundational assumptions regarding the generation dynamics and latent boundaries beyond the prefix sequence.

\begin{assumption}[Answer Parsing]\label{assum:answer_parsing}
The final answer for both states is parsed from one action $a$, which corresponds to one token. This structural setup typically aligns with standard evaluations in Math, Science, and General QA domains, though it generally does not apply to open-ended programming tasks.
\end{assumption}

\begin{assumption}[Answer Index]\label{assum:answer_index}
The final answer for both states is parsed from the action located at token index $\ell$. 
\end{assumption}

The proof proceeds inductively by demonstrating that after passing through each constitutive component of a single Transformer block, the divergence between the $a_{T-1}$ of both states—the representation that ultimately dictates the final reward—remains bounded under our assumptions. This layer-wise bound can then be recursively applied across the network architecture. Since Lemmas~\ref{lem: continuity_linear_transform} through \ref{lem: continuity_MLP} hold generally for any arbitrary layer, we temporarily omit the layer superscript $l$ for the remainder of this section to streamline notation. We begin our analysis with the first component, RMSNorm:

\begin{lemma}[Lipschitz Continuity of RMSNorm] \label{lem: continuity_rmsnorm}
For any $\mathbf{x}_1, \mathbf{x}_2 \in \mathbb{R}^d$, the Root Mean Square Layer Normalization satisfies the following Lipschitz bound:
\[
\|\mathrm{RMSNorm}(\mathbf{x}_1) - \mathrm{RMSNorm}(\mathbf{x}_2)\|_2
\le
\frac{\|\mathbf{g}\|_\infty}{\sqrt{\epsilon}} \, \|\mathbf{x}_1 - \mathbf{x}_2\|_2,
\]
where $\|\mathbf{g}\|_\infty = \max_i |g^l_i|$ is the maximum absolute value of the scaling weights.
\end{lemma}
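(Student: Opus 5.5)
The plan is to factor RMSNorm as $\mathrm{RMSNorm}(\mathbf{x}) = \mathbf{G}\, f(\mathbf{x})$ with $f(\mathbf{x}) = \mathbf{x}/r(\mathbf{x})$ and $r(\mathbf{x}) = \sqrt{\tfrac{1}{d}\|\mathbf{x}\|_2^2 + \epsilon}$. Then
\[
\|\mathbf{G}\mathbf{v}\|_2 \le \|\mathbf{g}\|_\infty \|\mathbf{v}\|_2 \quad \text{for all } \mathbf{v},
\]
since $\mathbf{G}$ is diagonal. So it suffices to show that $f$ is $1/\sqrt{\epsilon}$-Lipschitz in $\ell_2$. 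The two bounds then compose to give the stated constant.

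For $f$ I will use the Jacobian. Because $\epsilon>0$, $r(\mathbf{x}) \ge \sqrt{\epsilon} > 0$ everywhere, so $f$ is smooth on all of $\mathbb{R}^d$. A direct computation gives
\[
J_f(\mathbf{x}) = \frac{1}{r}\,\mathbf{I} - \frac{1}{d\,r^3}\,\mathbf{x}\mathbf{x}^\top .
\]
This matrix is symmetric, so its operator norm equals its largest eigenvalue in absolute value.

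The eigen-decomposition has two parts.
\begin{itemize}
\item On the orthogonal complement of $\mathbf{x}$ the eigenvalue is $1/r$.
\item Along $\mathbf{x}$ the eigenvalue is $\frac{1}{r} - \frac{\|\mathbf{x}\|_2^2}{d r^3} = \frac{r^2 - \|\mathbf{x}\|_2^2/d}{r^3} = \frac{\epsilon}{r^3}$.
\end{itemize}
Both eigenvalues are positive. Using $r \ge \sqrt{\epsilon}$, we get $1/r \le 1/\sqrt{\epsilon}$ and $\epsilon/r^3 \le \epsilon/\epsilon^{3/2} = 1/\sqrt{\epsilon}$. Hence $\|J_f(\mathbf{x})\|_2 \le 1/\sqrt{\epsilon}$ uniformly in $\mathbf{x}$.

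To conclude, apply the integral form of the mean value theorem along the segment $\mathbf{x}_2 + s(\mathbf{x}_1-\mathbf{x}_2)$, $s\in[0,1]$:
\[
\|f(\mathbf{x}_1)-f(\mathbf{x}_2)\|_2 \le \sup_{\mathbf{x}} \|J_f(\mathbf{x})\|_2\,\|\mathbf{x}_1-\mathbf{x}_2\|_2 \le \frac{1}{\sqrt{\epsilon}}\|\mathbf{x}_1-\mathbf{x}_2\|_2 .
\]
Multiplying through by $\|\mathbf{g}\|_\infty$ gives the lemma.

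The only step needing care is the eigenvalue along $\mathbf{x}$. There the two terms of the Jacobian nearly cancel, and one must check that the result, $\epsilon/r^3$, is still nonnegative and bounded by $1/\sqrt{\epsilon}$. The identity $r^2 - \|\mathbf{x}\|_2^2/d = \epsilon$ settles this directly. The bound is loose for large $\|\mathbf{x}\|_2$, but only the worst case is needed.
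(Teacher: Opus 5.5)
Your proposal is correct and follows essentially the same route as the paper: factor out the diagonal gain $\mathbf{G}$, compute the Jacobian of $\mathbf{x}/r(\mathbf{x})$, bound its spectral norm by $1/\sqrt{\epsilon}$ using the eigenvalues on $\mathbf{x}^\perp$ and along $\mathbf{x}$, and conclude with the mean value theorem. The only cosmetic difference is that the paper factors out $1/r(\mathbf{x})$ and shows the remaining matrix has norm exactly $1$, whereas you bound the two eigenvalues $1/r$ and $\epsilon/r^3$ separately; your explicit appeal to symmetry and to the integral form of the mean value theorem is, if anything, slightly more careful than the paper's wording.
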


\begin{proof}
Define the unscaled normalized mapping as $y(\mathbf{x}) = \frac{\mathbf{x}}{r(\mathbf{x})}$, where $r(\mathbf{x}) = \sqrt{\frac{1}{d} \|\mathbf{x}\|_2^2 + \epsilon}$. By definition of RMSNorm, we have
\begin{align}
&\|\mathrm{RMSNorm}(\mathbf{x}_1) - \mathrm{RMSNorm}(\mathbf{x}_2)\|_2 \nonumber \\ &= \|\mathbf{G} (y(\mathbf{x}_1) - y(\mathbf{x}_2))\|_2 \nonumber \\
&\le \|\mathbf{g}\|_\infty \, \|y(\mathbf{x}_1) - y(\mathbf{x}_2)\|_2. \nonumber
\end{align}
To establish a tight Lipschitz constant for $y(\mathbf{x})$, we evaluate the operator norm of its Jacobian matrix $J_y(\mathbf{x}) \in \mathbb{R}^{d \times d}$. Differentiating $y(\mathbf{x})$ with respect to $\mathbf{x}$ using the quotient rule yields:
\[
J_y(\mathbf{x}) = \frac{1}{r(\mathbf{x})} \mathbf{I}_d - \frac{\mathbf{x}}{r(\mathbf{x})^2} \left( \nabla r(\mathbf{x}) \right)^T.
\]
The gradient of the root mean square operator is $\nabla r(\mathbf{x}) = \frac{\mathbf{x}}{d \, r(\mathbf{x})}$. Substituting this back into the expression gives:
\[
J_y(\mathbf{x}) = \frac{1}{r(\mathbf{x})} \left( \mathbf{I}_d - \frac{\mathbf{x}\mathbf{x}^T}{d \, r(\mathbf{x})^2} \right).
\]
We now determine the spectral norm (maximum singular value) of $J_y(\mathbf{x})$, denoted as $\|J_y(\mathbf{x})\|_2$. Let $\mathbf{u} \in \mathbb{R}^d$ be an arbitrary vector such that $\|\mathbf{u}\|_2 = 1$. We decompose $\mathbf{u}$ into components parallel and orthogonal to $\mathbf{x}$. The matrix $\mathbf{I}_d - \frac{\mathbf{x}\mathbf{x}^T}{d \, r(\mathbf{x})^2}$ possesses two distinct types of eigenvectors:
\begin{enumerate}
    \item Any vector $\mathbf{v}_\perp$ orthogonal to $\mathbf{x}$ (i.e., $\mathbf{x}^T\mathbf{v}_\perp = 0$) is an eigenvector with eigenvalue $\lambda_\perp = 1$.
    \item The vector $\mathbf{v}_\parallel = \mathbf{x}$ itself is an eigenvector, yielding:
    \begin{align}
        &\left( \mathbf{I}_d - \frac{\mathbf{x}\mathbf{x}^T}{d \, r(\mathbf{x})^2} \right) \mathbf{x} = \mathbf{x} - \frac{\|\mathbf{x}\|_2^2}{d \, r(\mathbf{x})^2}\mathbf{x} \nonumber \\ &= \left( 1 - \frac{\|\mathbf{x}\|_2^2}{\|\mathbf{x}\|_2^2 + d\epsilon} \right)\mathbf{x} = \left( \frac{d\epsilon}{\|\mathbf{x}\|_2^2 + d\epsilon} \right)\mathbf{x} \nonumber.
    \end{align}

    Thus, the corresponding eigenvalue is $\lambda_\parallel = \frac{d\epsilon}{\|\mathbf{x}\|_2^2 + d\epsilon}$.
\end{enumerate}
Since $\epsilon > 0$, the eigenvalues lie strictly within the interval $(0, 1]$, and the maximum eigenvalue is exactly $1$ (associated with the orthogonal subspace). Therefore, the spectral norm of the bracketed matrix is exactly $1$, which implies:
\[
\|J_y(\mathbf{x})\|_2 = \frac{1}{r(\mathbf{x})} \left\| \mathbf{I}_d - \frac{\mathbf{x}\mathbf{x}^T}{d \, r(\mathbf{x})^2} \right\|_2 = \frac{1}{r(\mathbf{x})}.
\]
Using the absolute lower bound $r(\mathbf{x}) = \sqrt{\frac{1}{d}\|\mathbf{x}\|_2^2 + \epsilon} \ge \sqrt{\epsilon}$, the spectral norm of the Jacobian is uniformly bounded across the entire domain:
\[
\|J_y(\mathbf{x})\|_2 \le \frac{1}{\sqrt{\epsilon}}.
\]
By the mean value theorem for vector-valued functions, this uniform bound on the Jacobian implies that $y(\mathbf{x})$ is globally $(1/\sqrt{\epsilon})$-Lipschitz continuous:
\[
\|y(\mathbf{x}_1) - y(\mathbf{x}_2)\|_2 \le \frac{1}{\sqrt{\epsilon}} \|\mathbf{x}_1 - \mathbf{x}_2\|_2.
\]
Finally, scaling by the weight matrix parameter yields:
\[
\|\mathrm{RMSNorm}(\mathbf{x}_1) - \mathrm{RMSNorm}(\mathbf{x}_2)\|_2 \le \frac{\|\mathbf{g}\|_\infty}{\sqrt{\epsilon}} \|\mathbf{x}_1 - \mathbf{x}_2\|_2,
\]
which completes the proof.
\end{proof}

\begin{lemma}[Bounded Magnitude of RMSNorm] \label{lem:bound_rmsnorm}
For any $\mathbf{x} \in \mathbb{R}^d \setminus \{\mathbf{0}\}$, the Root Mean Square Layer Normalization satisfies the following magnitude bound:
\[
\|\mathrm{RMSNorm}(\mathbf{x})\|_2 \le \sqrt{d} \cdot \|\mathbf{g}\|_\infty,
\]
where $\|\mathbf{g}\|_\infty = \max_i |g^l_i|$. Specifically, when $\mathbf{G}^l = \mathbf{I}$ and $\epsilon = 0$, we have $\|\mathrm{RMSNorm}(\mathbf{x})\|_2 = \sqrt{d}$.
\end{lemma}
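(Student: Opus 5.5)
We write $\mathrm{RMSNorm}(\mathbf{x}) = \mathbf{G}\, y(\mathbf{x})$, with $y(\mathbf{x}) = \mathbf{x}/r(\mathbf{x})$ and $r(\mathbf{x}) = \sqrt{\frac{1}{d}\|\mathbf{x}\|_2^2 + \epsilon}$, exactly as in the proof of Lemma~\ref{lem: continuity_rmsnorm}. The argument then splits into two steps: bound the effect of the diagonal scaling, and bound the norm of the unscaled vector $y(\mathbf{x})$.

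\textbf{Step 1 (scaling).} Since $\mathbf{G} = \mathrm{diag}(\mathbf{g})$, we have
\[
\|\mathbf{G}\mathbf{v}\|_2^2 = \sum_i g_i^2 v_i^2 \le \|\mathbf{g}\|_\infty^2 \|\mathbf{v}\|_2^2 ,
\]
so $\|\mathrm{RMSNorm}(\mathbf{x})\|_2 \le \|\mathbf{g}\|_\infty \|y(\mathbf{x})\|_2$.

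\textbf{Step 2 (unscaled vector).} Compute directly:
\[
\|y(\mathbf{x})\|_2 = \frac{\|\mathbf{x}\|_2}{\sqrt{\frac{1}{d}\|\mathbf{x}\|_2^2 + \epsilon}} \le \frac{\|\mathbf{x}\|_2}{\sqrt{\frac{1}{d}\|\mathbf{x}\|_2^2}} = \sqrt{d}.
\]
The inequality holds because $\epsilon \ge 0$, so dropping it only shrinks the denominator. The hypothesis $\mathbf{x} \neq \mathbf{0}$ keeps the denominator nonzero, which matters when $\epsilon = 0$. Combining the two steps gives $\|\mathrm{RMSNorm}(\mathbf{x})\|_2 \le \sqrt{d}\,\|\mathbf{g}\|_\infty$.

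\textbf{Special case.} When $\mathbf{G} = \mathbf{I}$ and $\epsilon = 0$, both inequalities become equalities: $\|\mathbf{I}\mathbf{v}\|_2 = \|\mathbf{v}\|_2$, and $\|\mathbf{x}\|_2 / (\|\mathbf{x}\|_2/\sqrt{d}) = \sqrt{d}$ exactly.

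There is no substantive obstacle here. The only care needed is the $\mathbf{x} \neq \mathbf{0}$ condition: for $\epsilon > 0$ and $\mathbf{x} = \mathbf{0}$ the output is $\mathbf{0}$, so the bound is trivial anyway, but the equality case requires $\mathbf{x} \neq \mathbf{0}$.
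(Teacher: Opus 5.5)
Your proof is correct and takes essentially the same route as the paper. Both write $\mathrm{RMSNorm}(\mathbf{x}) = \mathbf{G}\bar{\mathbf{x}}$, bound the diagonal scaling by $\|\mathbf{g}\|_\infty$, and show $\|\bar{\mathbf{x}}\|_2 = \sqrt{d\|\mathbf{x}\|_2^2/(\|\mathbf{x}\|_2^2 + d\epsilon)} \le \sqrt{d}$, with equality exactly when $\epsilon = 0$ and $\mathbf{x} \neq \mathbf{0}$.
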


\begin{proof}
Let $\bar{\mathbf{x}} = \frac{\mathbf{x}}{\sqrt{\frac{1}{d}\|\mathbf{x}\|_2^2 + \epsilon}}$. Its $\ell_2$-norm satisfies:
\[
\quad \|\bar{\mathbf{x}}\|_2 = \sqrt{\frac{d \cdot \|\mathbf{x}\|_2^2}{\|\mathbf{x}\|_2^2 + d\epsilon}} \le \sqrt{d},
\]
where the inequality holds holds with equality if and only if $\epsilon = 0$. Since $\mathrm{RMSNorm}(\mathbf{x}) = \mathbf{G}^l \bar{\mathbf{x}}$ and $\mathbf{G}^l = \mathrm{diag}(\mathbf{g}^l)$, by the compatibility of the matrix spectral norm and vector $\ell_2$-norm, we immediately obtain:
\[
\|\mathrm{RMSNorm}(\mathbf{x})\|_2 \le \|\mathbf{G}^l\|_2 \|\bar{\mathbf{x}}\|_2 \le \|\mathbf{g}\|_\infty \cdot \sqrt{d}.
\]
\end{proof}

After the RMSNorm, there are three projection matrix, and the bound of difference between output can be easily proved.

\begin{lemma}[Lipschitz Continuity of Linear Transformation] \label{lem: continuity_linear_transform}
Let $\mathbf{x}_1, \mathbf{x}_2 \in \mathbb{R}^d$ and let $\mathbf{W} \in \mathbb{R}^{d \times d}$ be a matrix. Then the Euclidean distance between the transformed vectors is bounded by
\[
\|\mathbf{x}_1 \mathbf{W} - \mathbf{x}_2 \mathbf{W}\|_2 \le \|\mathbf{W}\|_2 \cdot \|\mathbf{x}_1 - \mathbf{x}_2\|_2,
\]
where $\|\mathbf{W}\|_2$ denotes the spectral norm (largest singular value) of $W$.
\end{lemma}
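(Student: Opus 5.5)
The plan is to prove the bound by linearity followed by the definition of the spectral norm. Write the difference of the transformed row vectors as a single product,
\[
\mathbf{x}_1 \mathbf{W} - \mathbf{x}_2 \mathbf{W} = (\mathbf{x}_1 - \mathbf{x}_2)\mathbf{W},
\]
which is just distributivity of matrix multiplication over vector subtraction. This reduces the claim to bounding $\|\mathbf{v}\mathbf{W}\|_2$ for the single row vector $\mathbf{v} = \mathbf{x}_1 - \mathbf{x}_2$.

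Next, handle the row-vector convention. Since $\mathbf{v}\mathbf{W} = (\mathbf{W}^\top \mathbf{v}^\top)^\top$, we have $\|\mathbf{v}\mathbf{W}\|_2 = \|\mathbf{W}^\top \mathbf{v}^\top\|_2$. By the definition of the induced operator norm, $\|\mathbf{W}^\top \mathbf{u}\|_2 \le \|\mathbf{W}^\top\|_2 \|\mathbf{u}\|_2$ for every column vector $\mathbf{u}$. Moreover, $\|\mathbf{W}^\top\|_2 = \|\mathbf{W}\|_2$, because a matrix and its transpose have the same singular values. Combining these gives $\|(\mathbf{x}_1-\mathbf{x}_2)\mathbf{W}\|_2 \le \|\mathbf{W}\|_2\|\mathbf{x}_1-\mathbf{x}_2\|_2$. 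If one prefers not to rely on the operator-norm definition directly, the SVD $\mathbf{W} = \mathbf{U}\Sigma\mathbf{V}^\top$ together with orthogonal invariance of the $\ell_2$ norm yields the same inequality, since $\|\mathbf{v}\mathbf{U}\Sigma\|_2 \le \sigma_{\max}\|\mathbf{v}\mathbf{U}\|_2 = \sigma_{\max}\|\mathbf{v}\|_2$.

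There is no substantive obstacle here. The only point needing care is the row-vector convention $\mathbf{x}\mathbf{W}$ used throughout the appendix, which is exactly why the transpose and singular-value invariance step is stated explicitly. Once established, this lemma will be applied with $\mathbf{W}\in\{\mathbf{W}_Q,\mathbf{W}_K,\mathbf{W}_V,\mathbf{W}_O,\mathbf{W}_1,\mathbf{W}_2\}$ and chained with Lemma~\ref{lem: continuity_rmsnorm} in the layer-wise induction.
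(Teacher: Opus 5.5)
Your proof is correct and follows the same route as the paper. Both factor the difference as $(\mathbf{x}_1-\mathbf{x}_2)\mathbf{W}$ by linearity and then apply the induced-norm (submultiplicativity) bound $\|\mathbf{v}\mathbf{W}\|_2 \le \|\mathbf{v}\|_2\|\mathbf{W}\|_2$, with your remark that $\|\mathbf{W}^\top\|_2 = \|\mathbf{W}\|_2$ making explicit the row-vector step the paper leaves implicit.
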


\begin{proof}
By linearity of matrix multiplication, we have
\[
\mathbf{x}_1 \mathbf{W} - \mathbf{x}_2 \mathbf{W} = (\mathbf{x}_1 - \mathbf{x}_2) \mathbf{W}.
\]
Taking the Euclidean norm on both sides yields
\[
\|\mathbf{x}_1 \mathbf{W} - \mathbf{x}_2 \mathbf{W}\|_2 = \|(\mathbf{x}_1 - \mathbf{x}_2) \mathbf{W}\|_2.
\]
By the submultiplicativity property of induced matrix norms, we obtain
\[
\|(\mathbf{x}_1 - \mathbf{x}_2) \mathbf{W}\|_2 \le \|\mathbf{x}_1 - \mathbf{x}_2\|_2 \cdot \|\mathbf{W}\|_2.
\]
Combining the above inequalities completes the proof.
\end{proof}

Next component is the attention computation, and we will first derive the bound on difference between inner product before being applied softmax.

\begin{lemma}[Stability of Inner Product] \label{lem: stability_inner_product}
Let $\mathbf{q}_1, \mathbf{q}_2, \mathbf{k}_1, \mathbf{k}_2 \in \mathbb{R}^d$. The difference between the corresponding inner products is bounded by
\[
\big| \mathbf{q}_1^\top \mathbf{k}_1 - \mathbf{q}_2^\top \mathbf{k}_2 \big|
\le \varepsilon_{qk}(\|\mathbf{q}_1 - \mathbf{q}_2\|_2 + \|\mathbf{k}_1 - \mathbf{k}_2\|_2).
\]
where $\varepsilon_{qk} = \max(\|\mathbf{q}_1\|_2, \|\mathbf{q}_2\|_2, \|\mathbf{k}_1\|_2, \|\mathbf{k}_2\|_2)$
\end{lemma}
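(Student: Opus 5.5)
The plan is the standard add-and-subtract decomposition of a bilinear form, followed by Cauchy--Schwarz. First rewrite the difference of inner products by inserting the mixed term $\mathbf{q}_1^\top \mathbf{k}_2$:
\[
\mathbf{q}_1^\top \mathbf{k}_1 - \mathbf{q}_2^\top \mathbf{k}_2
= \mathbf{q}_1^\top(\mathbf{k}_1 - \mathbf{k}_2) + (\mathbf{q}_1 - \mathbf{q}_2)^\top \mathbf{k}_2 .
\]
This identity is checked by expanding the right-hand side: the cross terms $\mathbf{q}_1^\top\mathbf{k}_2$ cancel. Each piece is now an inner product in which one factor is a difference of two vectors.

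Next, take absolute values and apply the triangle inequality. Then apply the Cauchy--Schwarz inequality to each term separately, which gives
\[
\big|\mathbf{q}_1^\top \mathbf{k}_1 - \mathbf{q}_2^\top \mathbf{k}_2\big|
\le \|\mathbf{q}_1\|_2\,\|\mathbf{k}_1-\mathbf{k}_2\|_2 + \|\mathbf{k}_2\|_2\,\|\mathbf{q}_1-\mathbf{q}_2\|_2 .
\]

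Finally, bound both prefactors $\|\mathbf{q}_1\|_2$ and $\|\mathbf{k}_2\|_2$ by their common upper bound $\varepsilon_{qk}=\max(\|\mathbf{q}_1\|_2,\|\mathbf{q}_2\|_2,\|\mathbf{k}_1\|_2,\|\mathbf{k}_2\|_2)$. Factoring out $\varepsilon_{qk}$ yields exactly the claimed bound.

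There is no genuine obstacle here; the only point needing care is choosing the mixed term so that each remaining factor is one of the four vectors appearing in $\varepsilon_{qk}$. Using $\mathbf{q}_2^\top\mathbf{k}_1$ instead of $\mathbf{q}_1^\top\mathbf{k}_2$ works equally well, by symmetry.

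For later use in the attention analysis, this lemma will be paired with two earlier results. Lemma~\ref{lem:bound_rmsnorm} combined with Lemma~\ref{lem: continuity_linear_transform} bounds $\varepsilon_{qk}$ by $\sqrt{d}\,\|\mathbf{g}\|_\infty\max(\|\mathbf{W}_Q\|_2,\|\mathbf{W}_K\|_2)$. Lemma~\ref{lem: continuity_rmsnorm} together with Lemma~\ref{lem: continuity_linear_transform} controls the difference terms $\|\mathbf{q}_1-\mathbf{q}_2\|_2$ and $\|\mathbf{k}_1-\mathbf{k}_2\|_2$ in terms of the input hidden-state distance.
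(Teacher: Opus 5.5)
Your proof is correct and is essentially the paper's argument: add and subtract a mixed term, apply the triangle inequality and Cauchy--Schwarz, then bound both prefactors by $\varepsilon_{qk}$. The only difference is that the paper inserts $\mathbf{q}_2^\top\mathbf{k}_1$ rather than $\mathbf{q}_1^\top\mathbf{k}_2$, which is the symmetric alternative you already mention.
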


\begin{proof}
We decompose the difference as follows:
\[
\mathbf{q}_1^\top \mathbf{k}_1 - \mathbf{q}_2^\top \mathbf{k}_2
= \mathbf{q}_1^\top \mathbf{k}_1 - \mathbf{q}_2^\top \mathbf{k}_1
+ \mathbf{q}_2^\top \mathbf{k}_1 - \mathbf{q}_2^\top \mathbf{k}_2.
\]
Rearranging terms gives
\[
= (\mathbf{q}_1 - \mathbf{q}_2)^\top \mathbf{k}_1
+ \mathbf{q}_2^\top (\mathbf{k}_1 - \mathbf{k}_2).
\]
Taking absolute values and applying the triangle inequality yields
\[
\big| \mathbf{q}_1^\top \mathbf{k}_1 - \mathbf{q}_2^\top \mathbf{k}_2 \big|
\le \big| (\mathbf{q}_1 - \mathbf{q}_2)^\top \mathbf{k}_1 \big|
+ \big| \mathbf{q}_2^\top (\mathbf{k}_1 - \mathbf{k}_2) \big|.
\]
Using the Cauchy--Schwarz inequality on each term, we obtain
\[
\le \|\mathbf{q}_1 - \mathbf{q}_2\|_2 \, \|\mathbf{k}_1\|_2
+ \|\mathbf{q}_2\|_2 \, \|\mathbf{k}_1 - \mathbf{k}_2\|_2.
\]
Applying the $\varepsilon_{qk}$ completes the proof:
\[
\le \varepsilon_{qk}(\|\mathbf{q}_1 - \mathbf{q}_2\|_2 +\|\mathbf{k}_1 - \mathbf{k}_2\|_2).
\]
\end{proof}

Then, we will show that after being applied softmax function and weighted average the value, there is still a bound between two outputs.

\begin{lemma}[Softmax Stability] \label{lem: stability_softmax}
Let $\mathbf{p}_1, \mathbf{p}_2 \in \mathbb{R}^\eta$ be two vectors of pre-softmax scores, and define
\[
\mathbf{a}_1 = \mathrm{softmax}(\mathbf{p}_1), \quad 
\mathbf{a}_2 = \mathrm{softmax}(\mathbf{p}_2),
\]
where
\[
\mathrm{softmax}(\mathbf{p})_i = \frac{e^{p_i}}{\sum_{j=1}^n e^{p_j}}.
\]
Then the softmax outputs satisfy
\[
\|\mathbf{a}_1 - \mathbf{a}_2\|_2 \le \frac{1}{2} \| \mathbf{p}_1 - \mathbf{p}_2 \|_2.
\]
\end{lemma}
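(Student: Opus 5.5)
The plan is to bound the spectral norm of the softmax Jacobian uniformly and then apply the mean value theorem along the segment joining $\mathbf{p}_1$ and $\mathbf{p}_2$. This mirrors the argument used for Lemma~\ref{lem: continuity_rmsnorm}.

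\textbf{Step 1: the Jacobian.} For $\mathbf{a}=\mathrm{softmax}(\mathbf{p})$, differentiating gives
\[
J(\mathbf{p}) = \mathrm{diag}(\mathbf{a}) - \mathbf{a}\mathbf{a}^\top .
\]
This matrix is symmetric and positive semidefinite. Indeed, for any $\mathbf{u}$,
\[
\mathbf{u}^\top J\mathbf{u} = \sum_i a_i u_i^2 - \Big(\sum_i a_i u_i\Big)^2,
\]
which is the variance of $u$ under the distribution $\mathbf{a}$. Hence $\|J\|_2 = \lambda_{\max}(J) = \max_{\|\mathbf{u}\|_2=1}\mathrm{Var}_{\mathbf{a}}(u)$.

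\textbf{Step 2: bound the variance (the main obstacle).} We need $\mathrm{Var}_{\mathbf{a}}(u)\le \tfrac12$ whenever $\|\mathbf{u}\|_2=1$, and the bound must hold uniformly in $\mathbf{a}$. First, the variance is at most the variance about any constant $c$:
\[
\mathrm{Var}_{\mathbf{a}}(u)\le \sum_i a_i (u_i-c)^2 .
\]
Choose $c=(u_{\max}+u_{\min})/2$. Then every $|u_i-c|$ is at most $(u_{\max}-u_{\min})/2$, so
\[
\mathrm{Var}_{\mathbf{a}}(u)\le \frac{(u_{\max}-u_{\min})^2}{4}.
\]
Next, since $u_{\max}$ and $u_{\min}$ are two distinct coordinates, $(u_{\max}-u_{\min})^2\le 2(u_{\max}^2+u_{\min}^2)\le 2\|\mathbf{u}\|_2^2 = 2$. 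Combining these gives $\mathrm{Var}_{\mathbf{a}}(u)\le \tfrac12$.

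If $\eta=1$, then $J=0$ and the bound is trivial. The constant is tight: take two equal weights and $\mathbf{u}=(1,-1,0,\dots)/\sqrt2$. This is the delicate step, since the cruder Gershgorin bound gives only $\max_i 2a_i(1-a_i)\le \tfrac12$. That bound happens to give the same constant, but the variance argument is cleaner, so I will use it.

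\textbf{Step 3: conclude.} Apply the mean value inequality to $\mathrm{softmax}$ on the convex segment $\mathbf{p}(t)=\mathbf{p}_2+t(\mathbf{p}_1-\mathbf{p}_2)$, $t\in[0,1]$:
\[
\|\mathbf{a}_1-\mathbf{a}_2\|_2 = \Big\|\int_0^1 J(\mathbf{p}(t))(\mathbf{p}_1-\mathbf{p}_2)\,dt\Big\|_2 \le \sup_t \|J(\mathbf{p}(t))\|_2\,\|\mathbf{p}_1-\mathbf{p}_2\|_2 \le \tfrac12\|\mathbf{p}_1-\mathbf{p}_2\|_2 .
\]
This completes the proof.
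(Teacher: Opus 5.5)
Your proposal is correct and follows the same route as the paper: compute the softmax Jacobian $\mathrm{diag}(\mathbf{a})-\mathbf{a}\mathbf{a}^\top$, bound its spectral norm uniformly by $\tfrac12$, and conclude with the mean value theorem. The one difference is that the paper only asserts $\|J(\mathbf{p})\|_2\le\tfrac12$ as ``known,'' whereas your Step 2 proves it (positive semidefiniteness plus the range bound on the variance), so your version is more self-contained.
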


\begin{proof}
The Jacobian matrix of the softmax function at $\mathbf{p}$ is given by
\[
J_{\text{softmax}}(\mathbf{p}) = \nabla \mathrm{softmax}(\mathbf{p})
= \mathrm{diag}(\mathbf{a}) - \mathbf{a}\mathbf{a}^\top,
\]
where $\mathbf{a} = \mathrm{softmax}(\mathbf{p})$.

It is known that the spectral norm of this Jacobian is uniformly bounded:
\[
\|J(\mathbf{p})\|_2 \le \frac{1}{2}
\]
for all $\mathbf{p} \in \mathbb{R}^\eta$.

By the mean value theorem for vector-valued functions, this uniform bound on the Jacobian implies that $\mathrm{softmax}(\mathbf{p})$ is globally $(\frac{1}{2})$-Lipschitz continuous:
\[
\|\mathbf{a}_1 - \mathbf{a}_2\|_2 
\le \frac{1}{2} \| \mathbf{p}_1 - \mathbf{p}_2\|_2.
\]

which completes the proof.
\end{proof}

\begin{lemma}[Stability of Attention Value Aggregation ] \label{lem:stability_attention_value_aggregation}
Let $\mathbf a_1, \mathbf a_2 \in \mathbb R^\eta$ be attention weight vectors and $\mathbf V_1, \mathbf V_2 \in \mathbb R^{\eta \times d}$ be value matrices, we assume any row vector $\mathbf v_{1,i}, \mathbf v _{2, i}$ in $\mathbf V_1, \mathbf V_2$ satisfy:
\[
\|\mathbf v_{1,i} \|_2 \leq \varepsilon_v, \|\mathbf v_{2,i} \|_2 \leq \varepsilon_v
\]
Then the attention outputs satisfy the following bound:
\begin{align}
&\|\mathbf a_1 \mathbf V_1  -  \mathbf a_2\mathbf V_2\|_2
\; \nonumber \\ &\le\;
\varepsilon_v \sqrt{\eta} \, \|\mathbf{a}_1 - \mathbf{a}_2\|_2
\;+\;
\sqrt{\sum_{i=1}^\eta \|\mathbf v_{1,i} - \mathbf v_{2,i}\|_2^2}.\nonumber
\end{align}
\end{lemma}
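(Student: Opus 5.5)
The plan is the standard add-and-subtract decomposition, treating $\mathbf a\mathbf V=\sum_{i=1}^\eta a_i\mathbf v_i$ as a weighted sum of row vectors. Write
\[
\mathbf a_1\mathbf V_1-\mathbf a_2\mathbf V_2=(\mathbf a_1-\mathbf a_2)\mathbf V_1+\mathbf a_2(\mathbf V_1-\mathbf V_2).
\]
By the triangle inequality it suffices to bound each term separately. The first should produce $\varepsilon_v\sqrt{\eta}\,\|\mathbf a_1-\mathbf a_2\|_2$ and the second the Frobenius-type term.

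\textbf{First term.} Expand $(\mathbf a_1-\mathbf a_2)\mathbf V_1=\sum_i (a_{1,i}-a_{2,i})\mathbf v_{1,i}$. The triangle inequality together with the row bound $\|\mathbf v_{1,i}\|_2\le\varepsilon_v$ gives at most $\varepsilon_v\sum_i|a_{1,i}-a_{2,i}|=\varepsilon_v\|\mathbf a_1-\mathbf a_2\|_1$. Cauchy--Schwarz, $\|\cdot\|_1\le\sqrt{\eta}\|\cdot\|_2$, then yields exactly $\varepsilon_v\sqrt{\eta}\,\|\mathbf a_1-\mathbf a_2\|_2$.

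\textbf{Second term.} Write $\mathbf a_2(\mathbf V_1-\mathbf V_2)=\sum_i a_{2,i}(\mathbf v_{1,i}-\mathbf v_{2,i})$. Its norm is at most $\sum_i a_{2,i}\|\mathbf v_{1,i}-\mathbf v_{2,i}\|_2$. Cauchy--Schwarz bounds this by $\|\mathbf a_2\|_2\sqrt{\sum_i\|\mathbf v_{1,i}-\mathbf v_{2,i}\|_2^2}$.

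The one point needing care, and the only place the structure of attention enters, is the factor $\|\mathbf a_2\|_2$. In the setting where the lemma is used, $\mathbf a_2$ is a row of the softmax matrix, so its entries are nonnegative and sum to $1$. Hence $\|\mathbf a_2\|_2\le\|\mathbf a_2\|_1=1$, which removes the factor and gives the stated bound. I will make explicit in the proof that "attention weight vector" means a probability vector. Alternatively, the weaker Jensen-type estimate $\sum_i a_{2,i}\|\cdot\|\le\max_i\|\cdot\|$ also works, since the max is bounded by the root-sum-of-squares.

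Combining the two bounds completes the argument. I expect no real obstacle beyond this use of the simplex property.
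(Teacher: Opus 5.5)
Your proposal is correct and follows the paper's proof essentially line for line. It uses the same split $(\mathbf a_1-\mathbf a_2)\mathbf V_1+\mathbf a_2(\mathbf V_1-\mathbf V_2)$, the $\ell_1$-to-$\sqrt{\eta}\,\ell_2$ step for the first term, and Cauchy--Schwarz with $\|\mathbf a_2\|_2\le 1$ for the second; like the paper, you need $\mathbf a_2$ to be a probability vector, which the lemma's stated hypotheses omit, so stating that assumption explicitly as you plan is the right call.
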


\begin{proof}
We decompose the difference as
\[
\mathbf a_1 \mathbf V_1  - \mathbf a_2 \mathbf V_2 
=
(\mathbf a_1 - \mathbf a_2) \mathbf V_1
+
\mathbf a_2 (\mathbf V_1 - \mathbf V_2).
\]
Applying the triangle inequality yields
\[
\|\mathbf a_1 \mathbf V_1  - \mathbf a_2 \mathbf V_2 \|_2
\le
\|(\mathbf a_1 - \mathbf a_2)\mathbf V_1\|_2
+
\|\mathbf a_2(\mathbf V_1 - \mathbf V_2)\|_2.
\]

\paragraph{Bounding the first term.}
Using the column expansion and the bounded norm assumption,
\[
(\mathbf a_1 - \mathbf a_2)\mathbf V_1
=
\sum_{i=1}^\eta (a_{1,i} - a_{2,i}) \mathbf v_{1,i}.
\]
Thus, with the norm assumption
\[
\|(\mathbf a_1 - \mathbf a_2)\mathbf V_1\|_2
\le
\sum_{i=1}^\eta |a_{1,i} - a_{2,i}| \, \|\mathbf v_{1,i}\|_2
\le
\varepsilon_v \|\mathbf a_1 - \mathbf a_2\|_1.
\]
Applying the inequality $\|\mathbf x\|_1 \le \sqrt \eta \|\mathbf x\|_2$ gives
\[
\|(\mathbf a_1 - \mathbf a_2)\mathbf V_1\|_2
\le
\varepsilon_v \sqrt \eta \|\mathbf a_1 - \mathbf a_2\|_2
\]

\paragraph{Bounding the second term.}
Similarly,
\[
\mathbf a_2(\mathbf V_1 - \mathbf V_2)
=
\sum_{i=1}^\eta a_{2,i} (\mathbf v_{1,i} - \mathbf v_{2,i}).
\]
Applying the Cauchy--Schwarz inequality,
\[
\|\mathbf a_2(\mathbf V_1 - \mathbf V_2)\|_2
\le
\sqrt{\sum_{i=1}^\eta a_{2,i}^2}
\sqrt{\sum_{i=1}^\eta \|\mathbf v_{1,i} - \mathbf v_{2,i}\|_2^2}.
\]
Since $\mathbf a_2$ is a probability vector, we have $\|\mathbf a_2\|_2 \le 1$, therefore
\[
\|\mathbf a_2(\mathbf V_1 - \mathbf V_2)\|_2
\le
\sqrt{\sum_{i=1}^\eta \|\mathbf v_{1,i} - \mathbf v_{2,i}\|_2^2}.
\]

\paragraph{Combining the bounds.}
Summing the two terms yields
\begin{align}
&\|\mathbf a_1 \mathbf V_1  -  \mathbf a_2\mathbf V_2\|_2
\; \nonumber \\ &\le\;
\varepsilon_v \sqrt{\eta} \, \|\mathbf{a}_1 - \mathbf{a}_2\|_2
\;+\;
\sqrt{\sum_{i=1}^\eta \|\mathbf v_{1,i} - \mathbf v_{2,i}\|_2^2}.\nonumber
\end{align}
This completes the proof.
\end{proof}

The final component of a transformer block is feed-forward network.

\begin{lemma}[Lipschitz Continuity of SwiGLU Feed-Forward Network] \label{lem: continuity_MLP}
Let $\mathbf{x}_1, \mathbf{x}_2 \in \mathbb{R}^d$ be two position-wise row vectors from the normalized intermediate representation matrix $\widetilde{\mathbf{U}}^l$. Consider the feed-forward network sublayer operation:
\[
\mathrm{FFN}(\mathbf{x}) = \mathrm{SwiGLU}(\mathbf{x}\mathbf{W}_1^l + \mathbf{b}_1^l)\mathbf{W}_2^l + \mathbf{b}_2^l,
\]
where $\mathbf{W}_1^l, \mathbf{W}_2^l \in \mathbb{R}^{d \times d}$, and the non-linear activation $\sigma(\cdot)$ is specified as the SwiGLU function:
\[
\mathrm{SwiGLU}(\mathbf{z}) = \mathrm{Swish}(\mathbf{z}_1) \odot \mathbf{z}_2,
\]
with $\mathbf{z} = [\mathbf{z}_1, \mathbf{z}_2] \in \mathbb{R}^d$ being an equal split of channels ($\mathbf{z}_1, \mathbf{z}_2 \in \mathbb{R}^{d/2}$) and $\mathrm{Swish}(t)= t \cdot \sigma(t)$.

Then the position-wise output difference is bounded by:
\begin{align}
&\|\mathrm{FFN}(\mathbf{x}_1) - \mathrm{FFN}(\mathbf{x}_2)\|_2 \nonumber \\ &\le \|\mathbf{W}_2^l\|_2 \cdot L_{\mathrm{SwiGLU}} \cdot \|\mathbf{W}_1^l\|_2 \cdot \|\mathbf{x}_1 - \mathbf{x}_2\|_2 \nonumber,
\end{align}
where $L_{\mathrm{SwiGLU}}$ is the local Lipschitz constant of the SwiGLU activation evaluated over the bounded domain of the intermediate representations.
\end{lemma}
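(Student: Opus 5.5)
The FFN is a composition of four maps: the affine map $\mathbf{x}\mapsto \mathbf{x}\mathbf{W}_1^l+\mathbf{b}_1^l$, the SwiGLU activation, the linear map $\mathbf{W}_2^l$, and the bias shift $\mathbf{b}_2^l$. I would bound the Lipschitz constant of each map and multiply the constants.

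First, write
\[
\mathrm{FFN}(\mathbf{x}_1)-\mathrm{FFN}(\mathbf{x}_2)=\big(\mathrm{SwiGLU}(\mathbf{z}^{(1)})-\mathrm{SwiGLU}(\mathbf{z}^{(2)})\big)\mathbf{W}_2^l,
\]
where $\mathbf{z}^{(k)}=\mathbf{x}_k\mathbf{W}_1^l+\mathbf{b}_1^l$. The bias $\mathbf{b}_2^l$ cancels exactly in the difference. Lemma~\ref{lem: continuity_linear_transform} then gives a factor $\|\mathbf{W}_2^l\|_2$ on the outside. Similarly, $\mathbf{z}^{(1)}-\mathbf{z}^{(2)}=(\mathbf{x}_1-\mathbf{x}_2)\mathbf{W}_1^l$ because $\mathbf{b}_1^l$ cancels. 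Hence $\|\mathbf{z}^{(1)}-\mathbf{z}^{(2)}\|_2\le\|\mathbf{W}_1^l\|_2\|\mathbf{x}_1-\mathbf{x}_2\|_2$, again by Lemma~\ref{lem: continuity_linear_transform}.

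The remaining task is to show that SwiGLU is Lipschitz on the relevant domain, with some constant $L_{\mathrm{SwiGLU}}$. This is the main obstacle. SwiGLU is a product of two unbounded factors, so it is not globally Lipschitz, and the bound must be local.

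To localize, I would use Lemma~\ref{lem:bound_rmsnorm}. The inputs $\mathbf{x}_k$ are rows of $\widetilde{\mathbf{U}}^l$, so they satisfy $\|\mathbf{x}_k\|_2\le\sqrt d\,\|\mathbf{g}\|_\infty$. It follows that $\|\mathbf{z}^{(k)}\|_2\le B:=\|\mathbf{W}_1^l\|_2\sqrt d\|\mathbf{g}\|_\infty+\|\mathbf{b}_1^l\|_2$. Since this ball is convex, the segment between $\mathbf{z}^{(1)}$ and $\mathbf{z}^{(2)}$ stays inside it, and the mean value argument already used in Lemma~\ref{lem: continuity_rmsnorm} applies.

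On that ball, split the SwiGLU difference as
\[
\mathrm{Swish}(\mathbf{z}_1^{(1)})\odot(\mathbf{z}_2^{(1)}-\mathbf{z}_2^{(2)})+\big(\mathrm{Swish}(\mathbf{z}_1^{(1)})-\mathrm{Swish}(\mathbf{z}_1^{(2)})\big)\odot\mathbf{z}_2^{(2)}.
\]
Each term is then bounded using two scalar facts:
\begin{itemize}
\item[(i)] $|\mathrm{Swish}(t)|\le|t|$, so $\|\mathrm{Swish}(\mathbf{z}_1)\|_\infty\le B$.
\item[(ii)] $\mathrm{Swish}'(t)=\sigma(t)+t\sigma(t)(1-\sigma(t))$ is bounded by an absolute constant $c\approx1.1$, so $\mathrm{Swish}$ is $c$-Lipschitz coordinatewise.
\end{itemize}
Using $\|\mathbf{a}\odot\mathbf{b}\|_2\le\|\mathbf{a}\|_\infty\|\mathbf{b}\|_2$, the two terms give at most $B\|\Delta\mathbf{z}_2\|_2+cB\|\Delta\mathbf{z}_1\|_2$. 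Since $\|\Delta\mathbf{z}_1\|_2,\|\Delta\mathbf{z}_2\|_2\le\|\Delta\mathbf{z}\|_2$, this yields $L_{\mathrm{SwiGLU}}\le(1+c)B$.

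Chaining the three factors gives the stated bound. I would also remark that the exact value of $L_{\mathrm{SwiGLU}}$ does not matter for the downstream induction; it only has to be finite, and its finiteness depends on the boundedness supplied by RMSNorm.
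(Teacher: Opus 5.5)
Your proof is correct and follows the paper's argument. It uses the same affine--SwiGLU--linear decomposition with the biases cancelling, Lemma~\ref{lem: continuity_linear_transform} for both projections, and Lemma~\ref{lem:bound_rmsnorm} to confine $\mathbf{x}_k\mathbf{W}_1^l+\mathbf{b}_1^l$ to the ball of radius $M_h$ (your $B$). The only difference is in how $L_{\mathrm{SwiGLU}}$ is obtained: the paper gets it non-constructively as $\sup_{\mathbf{h}\in\Omega}\|\mathcal{J}_{\mathrm{SwiGLU}}(\mathbf{h})\|_2<\infty$, from continuity of the Jacobian on a compact set plus the mean value theorem, whereas your product splitting gives the explicit value $L_{\mathrm{SwiGLU}}\le(1+c)B$. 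Your splitting uses only pointwise bounds at the two endpoints and the global Lipschitz property of scalar Swish, so your convexity and mean-value remark is not actually needed.
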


\begin{proof}
We decompose the position-wise FFN mapping into three sequential operations:
\begin{align}
\mathbf{h}(\mathbf{x}) &= \mathbf{x}\mathbf{W}_1^l + \mathbf{b}_1^l, \nonumber \\
\mathbf{g}(\mathbf{h}) &= \mathrm{SwiGLU}(\mathbf{h}), \nonumber \\
\mathbf{y}(\mathbf{g}) &= \mathbf{g}\mathbf{W}_2^l + \mathbf{b}_2^l. \nonumber
\end{align}

\textbf{Step 1: Linear projection and domain boundedness.}

By Lemma~\ref{lem: continuity_linear_transform} and the property of the matrix spectral norm, the difference after the first linear projection satisfies:
\begin{align}
\|\mathbf{h}(\mathbf{x}_1) - \mathbf{h}(\mathbf{x}_2)\|_2 &= \|(\mathbf{x}_1 - \mathbf{x}_2)\mathbf{W}_1^l\|_2 \nonumber \\
&\le \|\mathbf{W}_1^l\|_2 \|\mathbf{x}_1 - \mathbf{x}_2\|_2. \nonumber
\end{align}

Furthermore, since the inputs $\mathbf{x}_1, \mathbf{x}_2$ are rows of the normalized matrix $\widetilde{\mathbf{U}}^l = \mathrm{RMSNorm}_2(\mathbf{U}^l)$, it follows directly from Lemma~\ref{lem:bound_rmsnorm} that their magnitudes are globally bounded:
\[
\|\mathbf{x}_i\|_2 \le \sqrt{d} \cdot \|\mathbf{g}\|_\infty =: M_x, \quad \text{for } i \in \{1, 2\}.
\]

Consequently, the intermediate representation $\mathbf{h}_i = \mathbf{h}(\mathbf{x}_i)$ is restricted to a compact (bounded and closed) domain $\Omega \subset \mathbb{R}^d$, where the norm of each element is bounded by:
\[
\|\mathbf{h}_i\|_2 \le M_x \|\mathbf{W}_1^l\|_2 + \|\mathbf{b}_1^l\|_2 =: M_h.
\]

\textbf{Step 2: SwiGLU nonlinearity on a compact domain.}

The SwiGLU activation function is continuously differentiable ($\mathcal{C}^1$), implying its Jacobian matrix $\mathcal{J}_{\mathrm{SwiGLU}}$ is continuous. While its Jacobian norm is unbounded on the entire space $\mathbb{R}^d$, it is strictly bounded on any compact subset. Since $\mathbf{h}_1, \mathbf{h}_2 \in \Omega$, by the Mean Value Theorem, there exists a well-defined local Lipschitz constant $L_{\mathrm{SwiGLU}} = \sup_{\mathbf{h} \in \Omega} \|\mathcal{J}_{\mathrm{SwiGLU}}(\mathbf{h})\|_2 < \infty$ such that:
\[
\|\mathbf{g}(\mathbf{h}_1) - \mathbf{g}(\mathbf{h}_2)\|_2 \le L_{\mathrm{SwiGLU}} \|\mathbf{h}_1 - \mathbf{h}_2\|_2.
\]

\textbf{Step 3: Output projection.}

Again, by Lemma~\ref{lem: continuity_linear_transform}, the final linear layer yields:
\begin{align}
\|\mathbf{y}(\mathbf{g}_1) - \mathbf{y}(\mathbf{g}_2)\|_2 &= \|(\mathbf{g}_1 - \mathbf{g}_2)\mathbf{W}_2^l\|_2 \nonumber \\
&\le \|\mathbf{W}_2^l\|_2 \|\mathbf{g}_1 - \mathbf{g}_2\|_2 \nonumber.
\end{align}

\textbf{Step 4: Combine all bounds.}

Combining the inequalities from Steps 1 to 3 via sequential substitution, we obtain:
\begin{align}
&\|\mathrm{FFN}(\mathbf{x}_1) - \mathrm{FFN}(\mathbf{x}_2)\|_2 \nonumber \\ &\le \|\mathbf{W}_2^l\|_2 \cdot L_{\mathrm{SwiGLU}} \cdot \|\mathbf{W}_1^l\|_2 \cdot \|\mathbf{x}_1 - \mathbf{x}_2\|_2 \nonumber.
\end{align}
\end{proof}

Now, we have constructed the bound for each components in a transformer block, and the last bound we need is for the language model head.

\begin{lemma} (Stability of LM Head) \label{lem: stability_lm_head}
Let $\mathbf{x}_1, \mathbf{x}_2 \in \mathbb{R}^d$ be two embeddings. Let $P_1$ and $P_2$ be the probability distributions produced by a Language Model head with weight matrix $\mathbf{W} \in \mathbb{R}^{V \times d}$ and bias $\mathbf{b} \in \mathbb{R}^V$, such that $P_i = \text{softmax}(\mathbf{W}\mathbf{x}_i + \mathbf{b})$. 

The probability that the same token is sampled from both distributions, $P(\text{same}) = \sum_{k=1}^V P_1(k)P_2(k)$, is lower-bounded by:
$$P(\text{same}) \geq \sum_{k=1}^V P_1(k)^2 - \max_k \|\mathbf{w}_k\|_2 \cdot \|\mathbf{x}_1 - \mathbf{x}_2\|_2$$
where $\mathbf{w}_k$ is the $k$-th row of $\mathbf{W}$.
\end{lemma}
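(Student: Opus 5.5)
The plan is to write $P(\text{same}) - \sum_k P_1(k)^2 = \sum_{k} P_1(k)\bigl(P_2(k)-P_1(k)\bigr)$. Hölder's inequality with the pair $(\ell_\infty,\ell_1)$ then bounds its absolute value by $\|P_1\|_\infty \,\|P_2-P_1\|_1 \le \|P_2-P_1\|_1$, since $P_1$ is a probability vector. So it suffices to show
\[
\|P_1-P_2\|_1 \le \max_k \|\mathbf{w}_k\|_2\,\|\mathbf{x}_1-\mathbf{x}_2\|_2 .
\]
This is an $\ell_\infty\to\ell_1$ stability statement for softmax. It has the same flavour as Lemma~\ref{lem: stability_softmax}, but in different norms.

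First I would control the logits. With $\mathbf{z}_i=\mathbf{W}\mathbf{x}_i+\mathbf{b}$, the bias cancels, so $\Delta := \mathbf{z}_2-\mathbf{z}_1 = \mathbf{W}(\mathbf{x}_2-\mathbf{x}_1)$. Each coordinate satisfies $|\Delta_k| = |\mathbf{w}_k^\top(\mathbf{x}_2-\mathbf{x}_1)| \le \|\mathbf{w}_k\|_2\|\mathbf{x}_1-\mathbf{x}_2\|_2$ by Cauchy--Schwarz. Hence every coordinate of $\Delta$ lies in $[-\delta,\delta]$ with $\delta:=\max_k\|\mathbf{w}_k\|_2\|\mathbf{x}_1-\mathbf{x}_2\|_2$, and the range satisfies $\max_k\Delta_k-\min_k\Delta_k\le 2\delta$.

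Next, as in the proofs of Lemmas~\ref{lem: continuity_rmsnorm} and \ref{lem: stability_softmax}, I would integrate along the segment $\mathbf{p}(t)=\mathbf{z}_1+t\Delta$ for $t\in[0,1]$. Using the softmax Jacobian $\mathrm{diag}(\mathbf{a})-\mathbf{a}\mathbf{a}^\top$, this gives
\[
P_2-P_1=\int_0^1\bigl(\mathrm{diag}(\mathbf{a}(t))-\mathbf{a}(t)\mathbf{a}(t)^\top\bigr)\Delta\,dt .
\]
The $k$-th entry of the integrand is $a_k(t)\bigl(\Delta_k-\bar\Delta(t)\bigr)$, where $\bar\Delta(t)=\mathbf{a}(t)^\top\Delta$. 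Its $\ell_1$ norm is therefore the mean absolute deviation of $\Delta$ under the distribution $\mathbf{a}(t)$.

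The key step is the elementary fact that the mean absolute deviation of a random variable supported in an interval of length $R$ is at most $R/2$. The extremal case is two atoms of mass $1/2$ at the endpoints. One way to see it: $\mathbb{E}|X-\mathbb{E}X| = 2\,\mathbb{E}(X-\mathbb{E}X)^+ \le 2\,p(M-\mathbb{E}X)$, where $p=P(X>\mathbb{E}X)$ and $M$ is the maximum, and this is then optimised over $p$. Applying it pointwise in $t$ gives $\|P_2-P_1\|_1\le \frac{1}{2}(2\delta)=\delta$. Combined with the Hölder step, this yields exactly the stated bound.

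I expect the main obstacle to be getting this constant right. The cruder route through Lemma~\ref{lem: stability_softmax} and the spectral norm of $\mathbf{W}$ loses a factor $\sqrt{V}$. Bounding $|\Delta_k-\bar\Delta|\le 2\delta$ directly loses a factor $2$. If the mean-absolute-deviation argument turns out delicate, a fallback would be to state the lemma with constant $2$ or with $\tfrac12\|\mathbf{W}(\mathbf{x}_1-\mathbf{x}_2)\|_2$. This would change nothing qualitative in the downstream use for Theorem~\ref{thm: correlation_hidden_states_reward}.
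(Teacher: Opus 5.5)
Your proof is correct, and it reaches the paper's constant by a different split of the work.

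\textbf{The paper's route.} Step 2 asserts, without proof, that softmax is $1$-Lipschitz from $\ell_\infty$ to $\ell_1$. It then uses only the weaker consequence $d_{TV}(P_1,P_2)\le\max_k|z_{1,k}-z_{2,k}|$. The gain comes in Step 4, which sharpens the H\"older step by centering. Since $\sum_k\delta_k=0$, one may subtract the midpoint $C$ of the range of $P_1$, which gives
\[
\Bigl|\sum_k P_1(k)\delta_k\Bigr|\le\tfrac12\,\mathrm{range}(P_1)\,\|\delta\|_1=\mathrm{range}(P_1)\,d_{TV}\le d_{TV}.
\]

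\textbf{Your route.} You do the reverse. Your uncentered H\"older step with $\|P_1\|_\infty\le1$ loses a factor $2$ relative to centering. You compensate by actually proving the softmax step, via the Jacobian integral and the mean-absolute-deviation bound, in the sharp form
\[
\|P_1-P_2\|_1\le\tfrac12\bigl(\max_k\Delta_k-\min_k\Delta_k\bigr)\le\delta,
\]
that is, $d_{TV}\le\delta/2$. This is twice as strong as what the paper uses.

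\textbf{What each buys.} Your approach supplies a proof of the softmax estimate that the paper only cites. The paper's centering trick is the cheaper way to handle the inner product. Combining your softmax bound with the paper's centering gives the lemma with $\tfrac12\max_k\|\mathbf{w}_k\|_2\|\mathbf{x}_1-\mathbf{x}_2\|_2$, so your fallback worry about the constant is unnecessary.

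\textbf{One small repair.} The one-sided estimate $\mathbb{E}|X-\mathbb{E}X|\le 2p(M-\mathbb{E}X)$ cannot by itself be optimised down to $R/2$. For example, $X\in\{0,0.1,1\}$ with probabilities $0.01,\,0.989,\,0.001$ gives $2p(M-\mathbb{E}X)\approx1.78$, while $R/2=0.5$. Pair it with the mirror bound $2(1-p)(\mathbb{E}X-m)$, which follows from $\mathbb{E}(X-\mathbb{E}X)^+=\mathbb{E}(X-\mathbb{E}X)^-$. Then, with $u=M-\mathbb{E}X$ and $v=\mathbb{E}X-m$, use
\[
\min\{pu,(1-p)v\}\le\sqrt{p(1-p)uv}\le R/4 .
\]
Alternatively, cite $\mathbb{E}|X-\mathbb{E}X|\le\sqrt{\operatorname{Var}X}\le R/2$ (Popoviciu's inequality).
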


\begin{proof} We prove this lemma through following steps

\textbf{Step 1: Define the Logits:} 

Let $\mathbf{z}_1 = \mathbf{W}\mathbf{x}_1 + \mathbf{b}$ and $\mathbf{z}_2 = \mathbf{W}\mathbf{x}_2 + \mathbf{b}$. The difference in logits for any token $k$ is:
\[
|z_{1,k} - z_{2,k}| = |\mathbf{w}_k^\top (\mathbf{x}_1 - \mathbf{x}_2)| \leq \|\mathbf{w}_k\|_2 \|\mathbf{x}_1 - \mathbf{x}_2\|_2
\]

\textbf{Step 2: Lipschitz Property of Softmax:} 

The Softmax function $\sigma(\mathbf{z})$ is 1-Lipschitz with respect to the $L_\infty$ norm on the logits and the $L_1$ norm on the output probabilities. Specifically, the Total Variation distance $d_{TV}(P_1, P_2) = \frac{1}{2} \sum_k |P_1(k) - P_2(k)|$ satisfies:
\begin{align}
&d_{TV}(P_1, P_2) \leq \max_k |z_{1,k} - z_{2,k}| \nonumber \\ &\leq (\max_k \|\mathbf{w}_k\|_2) \|\mathbf{x}_1 - \mathbf{x}_2\|_2
\end{align}

\textbf{Step 3: Relating to Collision Probability:} 

Let $\epsilon = d_{TV}(P_1, P_2)$. We can write $P_2(k) = P_1(k) + \delta_k$, where $\sum \delta_k = 0$ and $\frac{1}{2} \sum |\delta_k| = \epsilon$. 
The probability of sampling the same token is:
\begin{align}
&P(\text{same}) \nonumber \\
&= \sum_k P_1(k) P_2(k) \nonumber \\
&= \sum_k P_1(k)(P_1(k) + \delta_k) \nonumber \\
&= \sum_k P_1(k)^2 + \sum_k P_1(k)\delta_k \nonumber
\end{align}

\textbf{Step 4: Bounding the Error Term:} 

Since $\sum_k \delta_k = 0$, for any constant $C$, we have $\sum_k P_1(k)\delta_k = \sum_k (P_1(k) - C)\delta_k$. By choosing $C$ as the midpoint of the probability range, i.e., $$C = \frac{\max_j P_1(j) + \min_j P_1(j)}{2},$$ we can bound the absolute error term using H\"older's inequality:
\begin{align}
\left|\sum_k P_1(k)\delta_k\right| &= \left|\sum_k (P_1(k) - C)\delta_k\right| \nonumber \\
&\leq \max_k |P_1(k) - C| \sum_k |\delta_k| \nonumber \\
&= \left(\frac{\max_j P_1(j) - \min_j P_1(j)}{2}\right) \cdot (2\epsilon) \nonumber \\
&= \epsilon \cdot \text{range}(P_1) \leq \epsilon \nonumber
\end{align}
where $\text{range}(P_1) = \max_j P_1(j) - \min_j P_1(j)$. Since $P_1$ is a valid probability distribution, $P_1(k) \in [0, 1]$, which implies $\text{range}(P_1) \leq 1$.

\textbf{Conclusion}

Substituting $\epsilon \leq (\max_k \|\mathbf{w}_k\|_2) \|\mathbf{x}_1 - \mathbf{x}_2\|_2$:
$$P(\text{same}) \geq \sum_k P_1(k)^2 - (\max_k \|\mathbf{w}_k\|_2) \cdot \|\mathbf{x}_1 - \mathbf{x}_2\|_2$$
\end{proof}

Finally, we can prove the correlation in this simplified case.

\begin{theorem}[Correlation in Simplified Case] \label{thm: correlation_simple_case}
Suppose $R_t$ is the final reward of generated sequence from state $s_t$. Assume hidden states of $s_1, s_2$ are $\mathbf{X}^l_1, \mathbf{X}^l_2 \in \mathbb{R}^{\eta \times d}$. If hidden states of $s_1, s_2$ fulfill Assumption~\ref{assum:answer_parsing}~\ref{assum:answer_index} and following equation:
\[
i = \arg\min_{j \in \{1, \dots, \eta\}} \| \mathbf{x}^l_{1,i} - \mathbf{x}^l_{2,j} \|_2, \quad \forall i \in \{1, \dots, \eta\}.
\]
Then, the probability that $R_1 = R_2$ is correlated with the \textit{MinDistance} between the hidden states:
\[
P(R_1 = R_2) \propto \frac{1}{\mathrm{MD}(\mathbf{X}^l_1, \mathbf{X}^l_2)}, \quad \forall l\in[1, L]
\]
\end{theorem}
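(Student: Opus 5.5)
The plan is to interpret the proportionality ``$P(R_1=R_2)\propto 1/\mathrm{MD}$'' as a monotone lower bound: $P(R_1=R_2)\ge C_0 - K\,\mathrm{MD}(\mathbf{X}^l_1,\mathbf{X}^l_2)$ for constants $C_0,K>0$ that do not depend on the pair of states. The bound gets stronger as MD shrinks, which is the sense in which the probability and the inverse distance are correlated. By Assumptions~\ref{assum:answer_parsing} and~\ref{assum:answer_index}, the reward is a function of the single token sampled at index $\ell$. So $P(R_1=R_2)\ge P(a^{(1)}_\ell=a^{(2)}_\ell)$, and it is enough to lower-bound the collision probability of the two next-token distributions at position $\ell$. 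Lemma~\ref{lem: stability_lm_head} does exactly this once we control $\|\mathbf{x}^L_{1,\ell}-\mathbf{x}^L_{2,\ell}\|_2$ at the final layer. The alignment hypothesis makes $\mathrm{MD}(\mathbf{X}^l_1,\mathbf{X}^l_2)=\sum_i\|\mathbf{x}^l_{1,i}-\mathbf{x}^l_{2,i}\|_2$, and this dominates every positionwise gap. In particular it dominates the Frobenius-type quantity $\sqrt{\sum_i\|\cdot\|^2}$ that appears in Lemma~\ref{lem:stability_attention_value_aggregation}.

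The core step is a one-block propagation bound: $\|\mathbf{x}^{l+1}_{1,t}-\mathbf{x}^{l+1}_{2,t}\|_2\le c_l\,\mathrm{MD}(\mathbf{X}^l_1,\mathbf{X}^l_2)$ for every $t$, and therefore $\mathrm{MD}(\mathbf{X}^{l+1}_1,\mathbf{X}^{l+1}_2)\le \eta c_l\,\mathrm{MD}(\mathbf{X}^l_1,\mathbf{X}^l_2)$. I would build this by composing the earlier lemmas in the order the block computes:
\begin{enumerate}
\item Lemma~\ref{lem: continuity_rmsnorm} for the first RMSNorm.
\item Lemma~\ref{lem: continuity_linear_transform} for $\mathbf{W}_Q,\mathbf{W}_K,\mathbf{W}_V$.
\item Lemma~\ref{lem: stability_inner_product} for the scores. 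Here $\varepsilon_{qk}$ is bounded by $\sqrt d\|\mathbf g\|_\infty\max(\|\mathbf W_Q\|_2,\|\mathbf W_K\|_2)$ using Lemma~\ref{lem:bound_rmsnorm}. Summing over the $\eta$ keys in $\ell_2$ controls the whole score row by MD.
\item Lemma~\ref{lem: stability_softmax} for the softmax.
\item Lemma~\ref{lem:stability_attention_value_aggregation} for $\mathbf{A}\mathbf{V}$, with $\varepsilon_v\le\sqrt d\|\mathbf g\|_\infty\|\mathbf W_V\|_2$, again from Lemma~\ref{lem:bound_rmsnorm}.
\item $\mathbf{W}_O$, the residual via the triangle inequality, the second RMSNorm, Lemma~\ref{lem: continuity_MLP} for the FFN, and the second residual.
\end{enumerate}
Every constant depends only on weights, $d$, $\eta$ and $\epsilon$, so $c_l$ is uniform.

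Starting at layer $l$ and iterating up to $L$ gives $\|\mathbf{x}^L_{1,\ell}-\mathbf{x}^L_{2,\ell}\|_2\le K_l\,\mathrm{MD}(\mathbf{X}^l_1,\mathbf{X}^l_2)$, where $K_l=c_{L-1}\prod_{k=l}^{L-2}\eta c_k$. Combining this with Lemma~\ref{lem: stability_lm_head}, after the final RMSNorm handled by Lemma~\ref{lem: continuity_rmsnorm}, gives
\[
P(R_1=R_2)\;\ge\;\sum_k P_1(k)^2-\max_k\|\mathbf w_k\|_2\,\tfrac{\|\mathbf g\|_\infty}{\sqrt\epsilon}\,K_l\,\mathrm{MD}(\mathbf{X}^l_1,\mathbf{X}^l_2).
\]
This holds for every $l\in[1,L]$. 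It shows the guaranteed agreement probability increases as MD decreases, which is the claimed correlation.

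The main obstacle is interpretive rather than technical. A genuine proportionality to $1/\mathrm{MD}$ is not implied by Lipschitz bounds, and a lower bound of the form $C_0-K\,\mathrm{MD}$ is the honest content. I would state this reading explicitly and note that $1/\mathrm{MD}$ is a monotone surrogate for it. A secondary subtlety is causality: the answer token at $\ell$ may lie beyond the current prefix, so its hidden state is generated later. I would handle this by applying the argument to the completed sequences, aligned as in the hypothesis, under Assumption~\ref{assum:answer_index}.
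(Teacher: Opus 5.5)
Your skeleton matches the paper's. You reduce to the collision probability of the answer token via Lemma~\ref{lem: stability_lm_head}, push a difference through each block with Lemmas~\ref{lem: continuity_rmsnorm}--\ref{lem: continuity_MLP}, and read ``$\propto$'' as a monotone lower bound; that reading is also all the paper's argument delivers. The gap is your one-block claim $\|\mathbf{x}^{l+1}_{1,t}-\mathbf{x}^{l+1}_{2,t}\|_2\le c_l\,\mathrm{MD}(\mathbf{X}^l_1,\mathbf{X}^l_2)$ ``for every $t$'', which you then use at $t=\ell$. The states in the theorem have length $\eta$, and the answer index satisfies $\ell>\eta$. Under causal attention, the query at $\ell$ therefore sees keys and values at the generated positions $\eta+1,\dots,\ell$, and the residual stream at $\ell$ starts from the embedding of a generated token. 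None of this is controlled by the prefix MD. Concretely, take $s_1=s_2$: then $\mathrm{MD}=0$, and your chain forces $\mathbf{x}^L_{1,\ell}=\mathbf{x}^L_{2,\ell}$ for two independently sampled continuations, which fails whenever those continuations differ. Your repair, running the argument on completed sequences ``aligned as in the hypothesis'', does not prove the stated theorem. It replaces $\mathrm{MD}(\mathbf{X}^l_1,\mathbf{X}^l_2)$ over the given $\eta$ positions by an MD over random completions. It needs the alignment condition at positions the hypothesis does not constrain. And it yields only a bound conditional on those completions.

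The missing step, which is what the paper does, is to split every key/value sum at the answer position into two blocks:
\begin{itemize}
\item the prefix block $j\le\eta$, which under the alignment hypothesis contributes exactly $\mathrm{MD}(\mathbf{X}_1,\mathbf{X}_2)$;
\item the generated block $\eta<j\le\ell$, which is bounded by a uniform constant of order $(\ell-\eta)\sqrt{d}\,\|\mathbf{g}\|_\infty$ via Lemma~\ref{lem:bound_rmsnorm}, since keys and values are built from normalized states.
\end{itemize}
The query/residual term $\|\mathbf{x}_{1,\ell}-\mathbf{x}_{2,\ell}\|_2$ is carried separately across layers. This yields $P(R_1=R_2)\ge\sum_k P_1(k)^2-c_1\|\mathbf{x}^0_{1,\ell}-\mathbf{x}^0_{2,\ell}\|_2-\sum_k c_{2,k}\,\mathrm{MD}(\mathbf{X}^{k-1}_1,\mathbf{X}^{k-1}_2)-c_3$. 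Here the embedding term and $\sum_k P_1(k)^2$ are treated as intractable, and $c_3$ is a constant. So your target form $C_0-K\,\mathrm{MD}$, with $C_0$ independent of the continuation, is not attainable; note that $\sum_k P_1(k)^2$ in your own final display already depends on the continuation. Your device of iterating prefix distances from a single layer $l$ is still useful once corrected. By causality, prefix positions only see prefix positions. You should iterate the aligned sum $\sum_{i\le\eta}\|\mathbf{x}^k_{1,i}-\mathbf{x}^k_{2,i}\|_2$ rather than MD, which merely lower-bounds it off layer $l$. That gives dependence on MD at one layer instead of the paper's sum over all layers, but it must be combined with the additive terms above at the answer position.
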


\begin{proof}
Suppose we continue the generation of $s_1$ and $s_2$ until $(T-1)$-th state located at the token index $\ell$. The final answer will be output as $(T-1)$-th action. Then the enlongated hidden states at $0$-th layer are $\bar{\mathbf{X}}^0_1 = (\mathbf{x}^0_{1,1}, \mathbf{x}^0_{1,2}, ..., \mathbf{x}^0_{1,\eta}, \mathbf{x}^0_{1,\eta+1}, ...\mathbf{x}^0_{1,\ell})$ and $\bar{\mathbf{X}}^0_2 = (\mathbf{x}^0_{2,1}, \mathbf{x}^0_{2,2}, ..., \mathbf{x}^0_{2,\eta}, \mathbf{x}^0_{2,\eta+1}, ...\mathbf{x}^0_{2,\ell})$. According to Assumption~\ref{assum:answer_parsing}, the final answer will be derived from the forward result of $\mathbf{x}^0_{1,\ell}$ and $\mathbf{x}^0_{2,\ell}$. We can focus on the difference between them along the forward pass. We still begins by ignoring layer superscript $l$ to derive a general formula across layers.

Besides $\mathbf{x}_{1,\ell}$ and  $\mathbf{x}_{2,\ell}$, let $\mathbf{x}_{1,i}$ and $\mathbf{x}_{2,i}$ be any hidden state from $\bar{\mathbf{X}}_1$ and $\bar{\mathbf{X}}_2$. In casual attention mechanism, we calculate:
\begin{align}
&\mathbf{q}_{1,\ell} = \mathbf{x}_{1,\ell} \mathbf{W}_Q, \quad \mathbf{q}_{2,\ell} = \mathbf{x}_{2,\ell} \mathbf{W}_Q  \nonumber \\
&\mathbf{k}_{1,i} = \mathbf{x}_{1,i} \mathbf{W}_K, \quad \mathbf{k}_{2,i} = \mathbf{x}_{2,i} \mathbf{W}_K  \nonumber \\
&\mathbf{v}_{1,i} = \mathbf{x}_{1,i} \mathbf{W}_V, \quad \mathbf{v}_{2,i} = \mathbf{x}_{2,i} \mathbf{W}_V  \nonumber 
\end{align}
By applying Lemma \ref{lem: continuity_rmsnorm} and Lemma \ref{lem: continuity_linear_transform}, the difference between corresponding query and key can be bounded:
$$\|\mathbf{q}_{1,\ell} - \mathbf{q}_{2,\ell}\|_2 \leq \frac{\|\mathbf{g}_1\|_\infty}{\sqrt{\epsilon_{RMS1}}}\|\mathbf{W}_Q\|_2 \|\mathbf{x}_{1,\ell} - \mathbf{x}_{2,\ell}\|_2$$
$$\|\mathbf{k}_{1,i} - \mathbf{k}_{2,i}\|_2 \leq \frac{\|\mathbf{g}_1\|_\infty}{\sqrt{\epsilon_{RMS1}}}\|\mathbf{W}_K\|_2 \|\mathbf{x}_{1,i} - \mathbf{x}_{2,i}\|_2$$
Since the attention score is based on inner product $\mathbf{q}_{1,\ell}^T\mathbf{k}_{1,i}$ and $\mathbf{q}_{2,\ell}^T\mathbf{k}_{2,i}$, we apply Lemma \ref{lem: stability_inner_product} to derive the bound of their difference:
\begin{align}
&\| \frac{\mathbf{q}_{1,\ell}^T\mathbf{k}_{1,i}}{\sqrt{d}} - \frac{\mathbf{q}_{2,\ell}^T\mathbf{k}_{2,i}}{\sqrt{d}} \|_2  \nonumber \\ &\leq \frac{\|\mathbf{g}_1\|_\infty}{\sqrt{d\epsilon_{RMS1}}}\varepsilon_{qk}(\|\mathbf{W}_Q\|_2 \|\mathbf{x}_{1,\ell} - \mathbf{x}_{2,\ell}\|_2 \nonumber \\ &+ \|\mathbf{W}_K\|_2 \|\mathbf{x}_{1,i} - \mathbf{x}_{2,i}\|_2) \nonumber
\end{align}

By applying Lemma~\ref{lem:bound_rmsnorm}, $\varepsilon_{qk}$ can be bounded by:
\[
\varepsilon_{qk} \leq \sqrt{d} \cdot \|\mathbf{g}_1\|_\infty \cdot \max(\|\mathbf{W}_Q\|_2, \|\mathbf{W}_K\|_2)
\]
Before being applied softmax function, for simplicity, let's denote the bound of difference between their dot product score in $i$-th position as 
  \begin{align}
&\| \frac{\mathbf{q}_{1,\ell}^T\mathbf{k}_{1,i}}{\sqrt{d}} - \frac{\mathbf{q}_{2,\ell}^T\mathbf{k}_{2,i}}{\sqrt{d}} \|_2 \nonumber \\ &\leq  C(\|\mathbf{W}_Q\|_2 \|\mathbf{x}_{1,\ell} - \mathbf{x}_{2,\ell}\|_2 + \|\mathbf{W}_K\|_2 \|\mathbf{x}_{1,i} - \mathbf{x}_{2,i}\|_2) \nonumber \\ & = B_{p,i} \nonumber
  \end{align}
where 
\[C = \frac{\|\mathbf{g}_1\|_\infty \cdot \|\mathbf{g}_1\|_\infty}{\sqrt{\epsilon_{RMS1}}} \cdot \max(\|\mathbf{W}_Q\|_2, \|\mathbf{W}_K\|_2).\]
We can apply Lemma \ref{lem: stability_softmax} to derive the bound for difference between attention score vector after softmax:
\begin{align}
&\|\mathbf{a}_1 - \mathbf{a}_2\|_2  \leq \frac{1}{2} \left( \sum_{i=1}^\ell B_{p,i}^2 \right)^{1/2}\nonumber \\
& \leq \frac{C}{2} (\ell\|\mathbf{W}_Q\|_2 \|\mathbf{x}_{1,\ell} - \mathbf{x}_{2,\ell} \|_2 \nonumber \\
&+ \|\mathbf{W}_K\|_2 \sum_{j=1}^\eta \| \mathbf{x}_{1, j} - \mathbf{x}_{2,j} \|_2 \nonumber \\
&+ \|\mathbf{W}_K\|_2 \sum_{j=\eta+1}^\ell \| \mathbf{x}_{1, j} - \mathbf{x}_{2,j} \|_2) \nonumber \\
&=B_a  \nonumber
\end{align}
where $\sum_{j=1}^\eta \| \mathbf{x}_{1, j} - \mathbf{x}_{2,j} \|_2 = \text{MD}(\mathbf{X}_1, \mathbf{X}_2)$ in this simplified settings. $\sum_{j=\eta+1}^\ell \| \mathbf{x}_{1, j} - \mathbf{x}_{2,j} \|_2$ is from the stochastically generated tokens, which is difficult to measure, and we will substitute it with $\sum_{j=\eta+1}^\ell \| \mathbf{x}_{1, j} - \mathbf{x}_{2,j} \|_2 \leq 2(\ell - \eta) \sqrt{d} \cdot \|\mathbf{g}_1\|_\infty$ by applying Lemma \ref{lem:bound_rmsnorm}.

Then, by applying Lemma \ref{lem:stability_attention_value_aggregation}, the difference between the attention output \(\| \mathbf{h}_{1,\ell} - \mathbf{h}_{2,\ell} \|_2\) is bounded by 
\begin{align}
&\| \mathbf{h}_{1,\ell} - \mathbf{h}_{2,\ell} \|_2 \nonumber \\
&\leq \varepsilon_v\sqrt{\ell} B_a + \sqrt{\sum_{j=1}^\ell (\frac{\|\mathbf{g}_1\|_\infty}{\sqrt{\epsilon_{RMS1}}} \|\mathbf{W}_{V}\|_2\| \mathbf{x}_{1,j} - \mathbf{x}_{2,j} \|_2)^2} \nonumber \\
& \leq \varepsilon_v\sqrt{\ell} B_a + \frac{\|\mathbf{g}_1\|_\infty}{\sqrt{\epsilon_{RMS1}}} \|\mathbf{W}_{V}\|_2 (\text{MD}(\mathbf{X}_1, \mathbf{X}_2)\nonumber \\ &+ 2(\ell - \eta) \sqrt{d} \cdot \|\mathbf{g}_1\|_\infty) \nonumber \\
& = B_h \nonumber
\end{align}

where $$\varepsilon_v \leq \sqrt{d} \cdot \|\mathbf{g}_1\|_\infty \cdot \|\mathbf{W}_V\|_2$$

After the output projection matrix and residual connection, the bound of difference after the casual self-attention block is:
\[
B_u = \|\mathbf{W}_O\|_2 B_h + \|\mathbf{x}_{1,\ell} - \mathbf{x}_{2,\ell} \|_2
\]

The last part of a Transformer block is FFN with SwiGLU as activation function. Apply Lemma \ref{lem: continuity_MLP}, Lemma \ref{lem: continuity_rmsnorm},  and add the residual connection, the difference is bounded by
\[
B_x = B_u + \frac{\|\mathbf{g}_2\|_\infty}{\sqrt{d\epsilon_{RMS2}}}\|\mathbf{W}_2\|_2 \cdot L_{\mathrm{SwiGLU}} \cdot \|\mathbf{W}_1\|_2\cdot B_u
\]

Now, we introduce the notation for layer $l$ back to the expression since we have finish the general derivation. To simplify the expressions, for each layer, let's define following constant:
\begin{align}
& C^l_{\ell} = \|\mathbf{W}_O^l \|_2 \cdot \frac{ \|\mathbf{g}^l_1\|_\infty^3 \ell\sqrt{d\ell}}{2\sqrt{\epsilon^l_{RMS1}}}\cdot \|\mathbf{W}_V^l\|_2 \cdot \|\mathbf{W}_Q^l\|_2 \nonumber \\ & \cdot \max(\|\mathbf{W}_Q^l\|_2, \|\mathbf{W}_K^l\|_2) + 1 \nonumber \nonumber \\
& C^l_{MLP} = 1 + \frac{\|\mathbf{g}^l_2\|_\infty}{\sqrt{d\epsilon^l_{RMS2}}}\|\mathbf{W}^l_2\|_2 \cdot L_{\mathrm{SwiGLU}} \cdot \|\mathbf{W}^l_1\|_2 \nonumber \\ 
& C^l_{MD} = \| \mathbf{W}_O^l \|_2 \cdot \frac{\|\mathbf{g}^l_1\|_\infty}{\sqrt{\epsilon^l_{RMS1}}} \cdot \|\mathbf{W}^l_V\|_2 \nonumber\\ & \cdot (\frac{ \|\mathbf{g}^l_1\|_\infty^2 \sqrt{d\ell}}{2} \max(\|\mathbf{W}_Q^l\|_2, \|\mathbf{W}_K^l\|_2) \|\mathbf{W}_K^l\|_2 + 1) \nonumber
\end{align}

For $\mathbf{x}_{1,\ell}^{l-1}$ and $\mathbf{x}_{2,\ell}^{l-1}$, after one transformer block, the output satisfy following inequality
\begin{align}
&\| \mathbf{x}_{1,\ell}^{l} - \mathbf{x}_{2,\ell}^{l} \|_2 \leq C_{\ell}^lC^l_{MLP}\| \mathbf{x}_{1,\ell}^{l-1} - \mathbf{x}_{2,\ell}^{l-1} \|_2 \nonumber \\
&+ C_{MD}^lC^l_{MLP} \text{MD}(\mathbf{X}^{l-1}_1, \mathbf{X}^{l-1}_2) \nonumber \\
&+ C_{MD}^lC^l_{MLP}2(\ell - \eta) \sqrt{d} \cdot \|\mathbf{g}^l_1\|_\infty \nonumber
\end{align}
Suppose there is in total $L$ layers in the LLM. For $\ell$-th token of two states $s_1$ and $s_2$, begin from their embedding $\mathbf{x}^0_{1, \ell}$ and $\mathbf{x}^0_{2, \ell}$, their difference in the last layer is:
\begin{align}
&\| \mathbf{x}_{1,\ell}^{L} - \mathbf{x}_{2,\ell}^{L} \|_2
\le \nonumber \\
&\left( \prod_{l=1}^{L} C_{\ell}^l C_{MLP}^l \right)
\| \mathbf{x}_{1,\ell}^{0} - \mathbf{x}_{2,\ell}^{0} \|_2
\nonumber \\
&+ \sum_{k=1}^{L}
\left(
\prod_{l=k+1}^{L} C_{\ell}^l C_{MLP}^l
\right)
C_{MD}^k C_{MLP}^k \,
\mathrm{MD}(\mathbf{X}^{k-1}_1, \mathbf{X}^{k-1}_2)
\nonumber \\
&+ \sum_{k=1}^{L}
\left(
\prod_{l=k+1}^{L} C_{\ell}^l C_{MLP}^l
\right)
C_{MD}^k C_{MLP}^k \,
2(\ell - \eta) \sqrt{d}\|\mathbf{g}^{k-1}_1\|_\infty \nonumber
\end{align}

Suppose $P_1 = \text{softmax}(\mathbf{W}_l\mathbf{x}_{1, \ell} + \mathbf{b}_l)$, where $\mathbf{W}_l$ and $\mathbf{b}_l$ is the weight and bias for language model head. Suppose final reward of $s_1$ is $R_1$ and final reward of $s_2$ is $R_2$. We applied Lemma \ref{lem: stability_lm_head} to derive
\begin{align}
\label{eq: probability_same_reward}
    &P(R_1 = R_2) \nonumber \\ &\geq \sum_{j=1}^V P_1(j)^2 - \max_j \|\mathbf{w}_{a,j}\|_2 \cdot \| \mathbf{x}_{1,\ell}^{L} - \mathbf{x}_{2,\ell}^{L} \|_2
\end{align}

In this expression, following terms are intractable because of the stochastic nature of generated output:
\begin{align}
&( \prod_{l=1}^{L} C_{\ell}^l C_{MLP}^l)\| \mathbf{x}_{1,\ell}^{0} - \mathbf{x}_{2,\ell}^{0} \|_2 \nonumber \\
&\sum_{j=1}^V P_1(j)^2 \nonumber
\end{align}

and following terms are constant
\begin{align}
&\sum_{k=1}^{L}(\prod_{l=k+1}^{L} C_{\ell}^l C_{MLP}^l)C_{MD}^k C_{MLP}^k2(\ell - \eta) \sqrt{d}\|\mathbf{g}^{k-1}_1\|_\infty \nonumber \\
&\max_j \|\mathbf{w}_{a,j}\|_2\nonumber
\end{align}

The only tractable term is 
\[
\sum_{k=1}^{L}
\left(
\prod_{l=k+1}^{L} C_{\ell}^l C_{MLP}^l
\right)
C_{MD}^k C_{MLP}^k \,
\mathrm{MD}(\mathbf{X}^{k-1}_1, \mathbf{X}^{k-1}_2)
\]

Therefore, we can conclude that probability of $s_1$ and $s_2$ share the same reward is correlated with \textit{MinDistance} between their hidden states:
\[
P(R_1 = R_2) \propto \frac{1}{\mathrm{MD}(\mathbf{X}^l_1, \mathbf{X}^l_2)}, \quad \forall l\in[1, L]
\]
\end{proof}

\subsection{A General Case}

In previous section, we only prove the correlation between \textit{MinDistance} of hidden states and final reward in a simple case, where 
\begin{enumerate}
    \item Hidden states of $s_1, s_2$ are $\mathbf{X}^l_1, \mathbf{X}^l_2 \in \mathbb{R}^{\eta \times d}$.
    \item The equation \(i = \arg\min_{j \in \{1, \dots, \eta\}} \| \mathbf{x}^l_{1,i} - \mathbf{x}^l_{2,j} \|_2\) always holds for $\forall i \in \{1, \dots, \eta\}.$.
    \item The final answer of both states follow Assumption~\ref{assum:answer_parsing} and ~\ref{assum:answer_index}.
\end{enumerate}

To expand Theorem \ref{thm: correlation_simple_case} to general case, we need to review these three assumptions. Let's begin with the Assumption~\ref{assum:answer_index}. Because we can control the generation length by controlling the sampling of output token, we can always ensure generated sequences from two states have certain length and the length $\ell$ can take any value, which means we can safely apply this assumption to the general case.

For the first two assumption, let's assume state $s_1$ has length of $\eta_1$ and $s_2$ has length of $\eta_2$. Without loss of generality, suppose $\eta_1 \geq \eta_2$, then we can construct a "fake hidden states" using $\mathbf{X}^l_2$:
$$\hat{\mathbf{X}}^l_2= (\hat{\mathbf{x}}^l_{2,1}, \hat{\mathbf{x}}^l_{2,2}, ..., \hat{\mathbf{x}}^l_{2,\eta_1})$$ 

such that for $i$-th token in  $\hat{\mathbf{X}}^l_2$,it satisfies
\[
\hat{\mathbf{x}}^l_{2,i} = \text{argmin}_{\mathbf{x}^l_{2, j}\in \mathbf{X}^l_2} \|\mathbf{x}^l_{1,i} - \mathbf{x}^l_{2, j} \|_2
\]

which means 
$$\text{MD}(\mathbf{X}^l_1, \mathbf{X}^l_2) = \text{MD}(\mathbf{X}^l_1, \hat{\mathbf{X}}^l_2) = \sum_{i=1}^{\eta_1} \| \mathbf{x}^l_{1,i} - \hat{\mathbf{x}}^l_{2,i} \|_2$$ 
Then, we can apply the Theorem \ref{thm: correlation_simple_case} to $\mathbf{X}^l_1$ and $\hat{\mathbf{X}}^l_2$ to derive the correlation. Now, the problem becomes, the relation between the final reward of $\hat{s}_2$ and $s_2$. Let's control the generated sequence length of $\hat{s}_2$ and $s_2$ so that the final answer of $\hat{s}_2$ is the forward result of $(\ell + \eta_1 - \eta_2)$-th token and the final answer of $s_2$ is the forward result of $\ell$-th token. Thus, each sequence can divided into two block. First block is "shared", because the first $\eta_1$-th tokens of $\hat{s}_2$ is always from the first $\eta_2$-th tokens of $s_2$. The second block is "generated", because the last $\ell - \eta_1$ tokens of $\hat{s}_2$ and the last $\ell - \eta_2$ tokens of $s_2$ is stochastically generated and intractable. Then, since part of the $\mathbf{Q}^l$, $\mathbf{K}^l$, and $\mathbf{V}^l$ for those two states are the same, by grouping the same parts together, it can be treated as a special pattern during attention computation, which has following property:

\begin{lemma}[Structured attention score after softmax aggregation]\label{lem: structured_softmax_stability}
Let \(\mathbf{p}_1\in\mathbb{R}^{\ell}\). Construct \(\mathbf{p}_2\in\mathbb{R}^{\ell+\eta_1-\eta_2}\) by two blocks:
\begin{enumerate}
  \item For each \(i\in\{1,\dots,\eta_2\}\), the dot product score \(p_{1,i}\) is replicated \(\hat{r}_i\) times in the first block of \(\mathbf{p}_2\), where \(\hat{r}_i\) are nonnegative integers satisfying \(\sum_{i=1}^{\eta_2} \hat{r}_i = \eta_1\). (We allow \(\hat{r}_i=0\) for some \(i\).)
  \item For each \(i\in\{\eta_2+1,\dots,\ell\}\), the entry $p_{2,\eta_1+i-\eta_2}$ in \(\mathbf{p}_2\) equals \(p_{1,i}+\delta_i\) for some \(\delta_i\in\mathbb{R}\).
\end{enumerate}
Define
\[
\mathbf{a}_1 := \operatorname{softmax}(\mathbf{p}_1),\qquad
\mathbf{a}_2 := \operatorname{softmax}(\mathbf{p}_2),
\]
and let \(\mathbf{a}'_2\in\mathbb{R}^\ell\) be the vector obtained by aggregating the scores of \(\mathbf{a}_2\)'s first block according to the originating \(p_{1,i}\) and placing the aggregated mass in coordinate \(i\) (coordinates \(i>\eta_2\) remain aligned as described). Define
\[
\kappa_i := \begin{cases}
\hat{r}_i, & i=1,\dots,\eta_2,\\[4pt]
e^{\delta_i}, & i=\eta_2+1,\dots,\ell,
\end{cases}
\qquad
\bar{\kappa} := \sum_{i=1}^\ell a_{1,i}\, \kappa_i .
\]

Then for every \(i\in\{1,\dots,\ell\}\),
\[
 \; a'_{2,i} \;=\; \dfrac{a_{1,i}\, \kappa_i}{\bar{\kappa}} \; 
\]
(with the convention \(a'_{2,i}=0\) when \(\hat{r}_i=0\)), and the distance satisfies
\[
\;
\|\mathbf{a}_1 - \mathbf{a}'_2\|_2^2
\;=\;
\sum_{i=1}^\ell
a_{1,i}^2
\!\left(1-\frac{\kappa_i}{\bar{\kappa}}\right)^{\!2}.
\;
\]
\end{lemma}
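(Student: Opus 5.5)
The plan is a direct computation with the softmax normalizers; no earlier lemma is needed except the definition of softmax. Let $Z_1 := \sum_{j=1}^{\ell} e^{p_{1,j}}$ be the normalizer of $\mathbf{p}_1$, so that $a_{1,i} = e^{p_{1,i}}/Z_1$. First I will compute the normalizer $Z_2$ of $\mathbf{p}_2$ by grouping its entries according to the two blocks in the construction.

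In the first block, the score $p_{1,i}$ appears $\hat r_i$ times for $i\le\eta_2$, so it contributes $\hat r_i e^{p_{1,i}}$. In the second block, the entry aligned with $i>\eta_2$ contributes $e^{p_{1,i}+\delta_i}=e^{\delta_i}e^{p_{1,i}}$. Both cases read $\kappa_i e^{p_{1,i}}$, which is exactly why $\kappa_i$ is defined piecewise. Summing over $i$ gives
\[
Z_2=\sum_{i=1}^{\ell}\kappa_i e^{p_{1,i}} = Z_1\sum_{i=1}^{\ell} a_{1,i}\kappa_i = Z_1\bar\kappa .
\]
Counting entries also confirms that $\mathbf{p}_2$ has length $\eta_1+(\ell-\eta_2)$, consistent with the statement.

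Next I aggregate $\mathbf{a}_2$ into $\mathbf{a}'_2$. For $i\le\eta_2$, the $\hat r_i$ copies each carry mass $e^{p_{1,i}}/Z_2$, so
\[
a'_{2,i}=\frac{\hat r_i e^{p_{1,i}}}{Z_2}=\frac{a_{1,i}\kappa_i}{\bar\kappa}.
\]
This quantity is zero automatically when $\hat r_i=0$, matching the stated convention. For $i>\eta_2$, the single aligned entry gives $e^{\delta_i}e^{p_{1,i}}/Z_2=a_{1,i}\kappa_i/\bar\kappa$. The distance formula then follows immediately by writing $a_{1,i}-a'_{2,i}=a_{1,i}(1-\kappa_i/\bar\kappa)$, squaring, and summing over $i$.

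The only delicate point is well-definedness: I need $\bar\kappa>0$. This holds because $a_{1,i}>0$ for all $i$, every $\kappa_i\ge 0$, and at least one $\hat r_i\ge1$ (since $\sum_i\hat r_i=\eta_1\ge1$); alternatively, any $i>\eta_2$ has $\kappa_i=e^{\delta_i}>0$. I would also state explicitly the bookkeeping that the aggregation map sends each copy of $p_{1,i}$ back to coordinate $i$, so that $\mathbf{a}'_2$ is a probability vector in $\mathbb{R}^\ell$. Beyond this bookkeeping I do not expect a real obstacle.
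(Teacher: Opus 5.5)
Your proposal is correct and matches the paper's proof step for step: regroup the normalizer as $Z_2=\sum_i \kappa_i e^{p_{1,i}}=Z_1\bar\kappa$, read off $a'_{2,i}=a_{1,i}\kappa_i/\bar\kappa$, and then square and sum $a_{1,i}(1-\kappa_i/\bar\kappa)$. Your explicit check that $\bar\kappa>0$, which makes the division well defined, is a small detail the paper leaves implicit.
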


\begin{proof}
Let
\[
Z_1 := \sum_{i=1}^\ell e^{p_{1,i}},
\qquad
Z_2 := \sum_{i = 1}^{\ell + \eta_1 - \eta_2} e^{p_{2,i}}.
\]
Regrouping the terms of \(Z_2\) according to the original indices \(i=1,\dots,\ell\) of \(\mathbf{p}_1\) gives
\[
Z_2 \;=\; \sum_{i=1}^\ell e^{p_{1,i}} \kappa_i,
\]
because each \(i\le \eta_2\) contributes \(\hat{r}_i\) copies of \(e^{p_{1,i}}\) and each \(i>\eta_2\) contributes \(e^{p_{1,i}+\delta_i}=e^{p_{1,i}}e^{\delta_i}\).
Hence for \(\mathbf{a}_1\) we have \(a_{1,i} = e^{p_{1,i}}/Z_1\). The aggregated entry in \(\mathbf{a}'_2\) corresponding to index \(i\) is
\[
a'_{2,i} \;=\; \frac{e^{p_{1,i}} \kappa_i}{Z_2}.
\]
Define \(\bar{\kappa} = \sum_{i=1}^\ell a_{1,i} \kappa_i\). Substituting \(a_{1,i}=e^{p_{1,i}}/Z_1\) into the sum yields
\[
\bar{\kappa} \;=\; \sum_{i=1}^\ell \frac{e^{p_{1,i}}}{Z_1} \kappa_i
\;=\; \frac{1}{Z_1}\sum_{i=1}^\ell e^{p_{1,i}} \kappa_i
\;=\; \frac{Z_2}{Z_1}.
\]
Thus \(Z_2 = Z_1 \bar{q}\) and
\[
a'_{2,i} \;=\; \frac{e^{p_{1,i}} \kappa_i}{Z_2}
\;=\; \frac{(e^{p_{1,i}}/Z_1) \kappa_i}{\bar{\kappa}}
\;=\; \frac{a_{1,i} \kappa_i}{\bar{\kappa}}.
\]
If \(\kappa_i=0\) then \(a'_{2,i}=0\) as claimed. The coordinate-wise difference is therefore
\[
a_{1,i} - a'_{2,i} = a_{1,i}\Big(1-\frac{\kappa_i}{\bar{\kappa}}\Big),
\]
and squaring and summing over \(j=1,\dots,\alpha\) yields
\[
\|\mathbf{a}_1 - \mathbf{a}'_2\|_2^2
= \sum_{i=1}^\ell a_{1,i}^2 \!\left(1-\frac{\kappa_i}{\bar{\kappa}}\right)^{\!2},
\]
which completes the proof.
\end{proof}

We can use Lemma~\ref{lem: structured_softmax_stability} to build correlation between $\mathbf{X}^l_2$ and $\hat{\mathbf{X}}^l_2$ to prove the general case.

\begin{theorem}[Correlation in General Case] \label{thm: correlation_general_case}
Suppose $R_i$ is the final reward of generated sequence from state $s_i$, which fulfill the Assumption~\ref{assum:answer_parsing}. Then, the probability that $R_1 = R_2$ is correlated with the \textit{MinDistance} between the hidden states:
\[
P(R_1 = R_2) \propto \frac{1}{\mathrm{MD}(\mathbf{X}^l_1, \mathbf{X}^l_2)}, \quad \forall l\in[1, L]
\]
\end{theorem}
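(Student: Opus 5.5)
The plan is to reduce the general case to Theorem~\ref{thm: correlation_simple_case} through the ``fake hidden states'' construction introduced just above. We then pay for the substitution $\mathbf{X}^l_2 \to \hat{\mathbf{X}}^l_2$ with an additive error term that does not depend on the MinDistance.

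First, we dispose of Assumption~\ref{assum:answer_index}. As argued above, generation lengths can be controlled, so we fix the answer index $\ell$ for $s_1$ and $s_2$ and $\ell+\eta_1-\eta_2$ for $\hat{s}_2$. Without loss of generality take $\eta_1\ge\eta_2$, and construct $\hat{\mathbf{X}}^l_2$ row by row as the nearest neighbour in $\mathbf{X}^l_2$ of each row of $\mathbf{X}^l_1$. By construction the alignment condition of the simplified case holds and $\mathrm{MD}(\mathbf{X}^l_1,\mathbf{X}^l_2)=\mathrm{MD}(\mathbf{X}^l_1,\hat{\mathbf{X}}^l_2)$. Applying the layerwise recursion from the proof of Theorem~\ref{thm: correlation_simple_case} to the pair $(\mathbf{X}_1,\hat{\mathbf{X}}_2)$ then bounds $\|\mathbf{x}^L_{1,\ell}-\hat{\mathbf{x}}^L_{2,\ell+\eta_1-\eta_2}\|_2$ by a constant-weighted sum of the $\mathrm{MD}$ terms, plus constant and intractable terms.

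Second, we bound the gap between the answer-token representation of $\hat{s}_2$ and that of $s_2$. The key point is that the keys and values of $\hat{s}_2$'s shared block are exact replicas of rows of $s_2$, each row $i$ repeated $\hat r_i$ times. The generated block differs only by stochastic perturbations $\delta_i$ in the scores. Lemma~\ref{lem: structured_softmax_stability} applies directly. After aggregating the replicated mass, the attention vector $\mathbf{a}'_2$ is an explicit reweighting $a_{1,i}\kappa_i/\bar\kappa$ of $s_2$'s attention, and it acts on the \emph{same} value vectors. Hence the attention output difference reduces to $\|(\mathbf{a}_1-\mathbf{a}'_2)\mathbf{V}\|_2\le\varepsilon_v\sqrt{\ell}\,\|\mathbf{a}_1-\mathbf{a}'_2\|_2$, by the first half of Lemma~\ref{lem:stability_attention_value_aggregation} with the second term zero. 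The right-hand side is controlled by $\sum_i a_{1,i}^2(1-\kappa_i/\bar\kappa)^2$. That quantity depends only on the replication counts and the generated-token perturbations, not on $\mathrm{MD}$. Both are bounded: the $\kappa_i$ are bounded via Lemma~\ref{lem:bound_rmsnorm}, and $\|\mathbf{a}_1-\mathbf{a}'_2\|_2\le\sqrt2$ trivially. Propagating this through $\mathbf{W}_O$, the residual, and the FFN with Lemma~\ref{lem: continuity_MLP} and the constants $C^l_\ell,C^l_{MLP}$ gives a per-layer recursion. That recursion adds an MD-independent constant $E^l$ to the bound on $\|\hat{\mathbf{x}}^l_{2}-\mathbf{x}^l_{2}\|_2$ at the answer position.

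Third, we combine the two pieces. By the triangle inequality, $\|\mathbf{x}^L_{1,\ell}-\mathbf{x}^L_{2,\ell}\|_2$ is at most the simplified-case bound plus $\sum_k(\prod_{l>k}C^l_\ell C^l_{MLP})E^k$. We plug this into Lemma~\ref{lem: stability_lm_head} to get a lower bound on $P(R_1=R_2)$ of the same shape as \eqref{eq: probability_same_reward}. The terms again split into intractable ones (the collision probability and the embedding gap), constants (including the new $E^k$ contribution), and the single tractable term, which is linear in $\mathrm{MD}(\mathbf{X}^{k-1}_1,\mathbf{X}^{k-1}_2)$ with a negative sign. As in the simplified case, we conclude the stated inverse correlation for every layer $l$.

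The main obstacle is the second step. The replicated sequence $\hat{s}_2$ is a different forward pass from $s_2$, so the hidden states of the shared block themselves change across layers, not just the answer token. Lemma~\ref{lem: structured_softmax_stability} is only exact at one layer. I would first try an induction over layers showing that replicated rows stay replicated. Under causal masking this fails in general, because a replicated position attends to a different prefix. Failing that, I would simply bound each shared-row drift by the uniform RMSNorm magnitude bound and absorb it into the MD-independent constant $E^l$. That is weak, but it preserves the qualitative ``$\propto 1/\mathrm{MD}$'' conclusion.
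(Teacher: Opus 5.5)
Your strategy is the paper's. You build $\hat{\mathbf X}_2$ by nearest neighbours so that $\mathrm{MD}(\mathbf X_1,\mathbf X_2)=\mathrm{MD}(\mathbf X_1,\hat{\mathbf X}_2)$, apply Theorem~\ref{thm: correlation_simple_case} to $(s_1,\hat s_2)$, and use Lemma~\ref{lem: structured_softmax_stability} to show that the gap between $\hat s_2$ and $s_2$ at the answer position is an MD-independent constant. You then read off the lower bound as in the simplified case. Two details need repair, and your final combination step genuinely differs from the paper's.

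First, in your second step you claim the aggregated attention ``acts on the same value vectors'', so the second term of Lemma~\ref{lem:stability_attention_value_aggregation} vanishes. That holds only on the shared block. The generated blocks of $\hat s_2$ and $s_2$ are different stochastic continuations, so at the $\ell-\eta_2$ generated positions the values differ as well as the keys. Lemma~\ref{lem: structured_softmax_stability} records only the score change $\delta_i$ and says nothing about values. The paper keeps this term. It compares against an aggregated sequence $\hat{\mathbf X}'_2$ whose first $\eta_2$ rows are those of $\mathbf X_2$ and whose remaining rows are $\hat s_2$'s generated rows. It then bounds the generated-position differences with Lemma~\ref{lem:bound_rmsnorm}, which gives the $(\ell-\eta_2)$ summand in its constant $B_h$. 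That term is also MD-independent, so your argument survives once you add it to $E^l$.

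Second, $s_1$ must be continued to index $\ell+\eta_1-\eta_2$, like $\hat s_2$, not to $\ell$. The simplified case pairs position $j$ with position $j$. So the chain in your triangle inequality is $\mathbf x^L_{1,\ell+\eta_1-\eta_2}$, then $\hat{\mathbf x}^L_{2,\ell+\eta_1-\eta_2}$, then $\mathbf x^L_{2,\ell}$.

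The real difference is where you combine. The paper applies Lemma~\ref{lem: stability_lm_head} twice, lower-bounding $P(R_1=\hat R_2)$ and $P(R_2=\hat R_2)$ separately. It then adds the two bounds to get a bound on $P(R_1=R_2)$ containing both $\sum_j P_1(j)^2$ and $\sum_j P_2(j)^2$. You instead apply the triangle inequality to the answer-position representations and invoke Lemma~\ref{lem: stability_lm_head} once. This gives a bound of exactly the shape of \eqref{eq: probability_same_reward}, with only the constant enlarged. It also avoids combining events through the auxiliary $\hat R_2$: from $P(A)\ge a$ and $P(B)\ge b$ one only gets $P(A\cap B)\ge a+b-1$, and the paper's sum omits that correction. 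The price is that you use the representation-level bound from inside the proof of Theorem~\ref{thm: correlation_simple_case} rather than its final inequality, but you derive that bound anyway.

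Your remark that exact replication of the shared block need not persist across layers is a point the paper leaves implicit: it treats the first $\eta_2$ rows as identical to $\mathbf X_2$ at every layer. Your fallback of absorbing that drift into $E^l$ is consistent with how the paper handles every other intractable difference.
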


\begin{proof}

Without loss of generality, we assume $s_1$'s hidden states $\mathbf{X}_1^l \in \mathbb{R}^{\eta_1, d}$ and $s_2$'s hidden states $\mathbf{X}_2^l \in \mathbb{R}^{\eta_2, d}$, where $
\eta_1 > \eta_2$. To align $\mathbf{X}_1^l$ and $\mathbf{X}_2^l$, we construct a "fake hidden states" for $\mathbf{X}^l_2$:
$$\hat{\mathbf{X}}^l_2= (\hat{\mathbf{x}}^l_{2,1}, \hat{\mathbf{x}}^l_{2,2}, ..., \hat{\mathbf{x}}^l_{2,\eta_1})$$ 

such that for $i$-th token in  $\hat{\mathbf{X}}^l_2$,it satisfies
\[
\hat{\mathbf{x}}^l_{2,i} = \text{argmin}_{\mathbf{x}^l_{2, j}\in \mathbf{X}^l_2} \|\mathbf{x}^l_{1,i} - \mathbf{x}^l_{2, j} \|_2
\] 
Suppose we continue the generation of $s_1, \hat{s}_2$ until $(\ell+\eta_1-\eta_2)$-th token and $s_2$ until $\ell$-th token, then the elongated hidden states are 
\begin{align}
& \bar{\mathbf{X}}^l_1 = (\mathbf{x}^l_{1,1}, \mathbf{x}^l_{1,2}, ..., \mathbf{x}^l_{1,\eta_1}, \mathbf{x}^l_{1,\eta_1+1}, ...\mathbf{x}^l_{1,\ell + \eta_1 -\eta_2}) \nonumber \\ 
& \hat{\bar{\mathbf{X}}}^l_2 = (\hat{\mathbf{x}}^l_{2,1}, \hat{\mathbf{x}}^l_{2,2}, ..., \hat{\mathbf{x}}^l_{2,\eta_1}, \hat{\mathbf{x}}^l_{2,\eta_1+1}, ...\hat{\mathbf{x}}^l_{2,\ell + \eta_1 - \eta_2 }) \nonumber \\ 
&\bar{\mathbf{X}}^l_2 = (\mathbf{x}^l_{2,1}, \mathbf{x}^l_{2,2}, ..., \mathbf{x}^l_{2,\eta_2}, \mathbf{x}^l_{2,\eta_2+1}, ...\mathbf{x}^l_{2,\ell}) \nonumber
\end{align}

Under this setup, we can apply the Theorem~\ref{thm: correlation_simple_case} of simplified case to $\bar{\mathbf{X}}^l_1$ and $\hat{\bar{\mathbf{X}}}^l_2$ to derive:
\begin{align}
    &P(R_1 = \hat{R}_2) \geq \sum_{j=1}^V P_1(j)^2 - \nonumber \\ &\max_j \|\mathbf{w}_{a,j}\|_2 \cdot \| \mathbf{x}_{1,\ell + \eta_1 -\eta_2}^{L} - \hat{\mathbf{x}}_{2,\ell + \eta_1 -\eta_2}^{L} \|_2 \nonumber
\end{align}

where 
\[
\| \mathbf{x}_{1,\ell + \eta_1 -\eta_2}^{L} - \hat{\mathbf{x}}_{2,\ell + \eta_1 -\eta_2}^{L} \|_2 \propto \text{MD}(\mathbf{X}^l_1, \hat{\mathbf{X}}^l_2)
\]

and 
\[
\text{MD}(\mathbf{X}^l_1, \mathbf{X}^l_2) = \text{MD}(\mathbf{X}^l_1, \hat{\mathbf{X}}^l_2).
\]

Now, we need to build correlation between $\hat{\mathbf{X}}^l_2$ and $\mathbf{X}^l_2$. We first omit the layer superscript $l$ to build a general formula. According to Lemma~\ref{lem: structured_softmax_stability}, the difference between attention score $\mathbf{a}_2$ of $\mathbf{X}_2$ and the aggregated attention score $\hat{\mathbf{a}}_2'$ of $\hat{\mathbf{X}}_2$ is:
\[
\|\mathbf{a}_2 - \hat{\mathbf{a}}'_2\|_2^2
= \sum_{i=1}^\ell a_{2,i}^2 \!\left(1-\frac{\kappa_i}{\bar{\kappa}}\right)^{\!2} \leq 2 = B_a.
\]

According to the setup of $\hat{\mathbf{X}}_2$ and the definition of aggregated $\hat{\mathbf{a}}_2'$ in Lemma~\ref{lem: structured_softmax_stability}, it is obvious that
\[
(\hat{\mathbf{a}}_2')_{:\eta_2} (\mathbf{V}_2)_{:\eta_2} +  (\hat{\mathbf{a}}_2')_{\eta_2+1:\ell} (\hat{\mathbf{V}}_2)_{\eta_1+1:\ell}= \hat{\mathbf{a}}_2 \hat{\mathbf{V}}_2
\]

where $\hat{\mathbf{V}}_2 = \mathrm{RMSNorm}(\hat{\mathbf{X}}_2) \mathbf{W}_V$. Then, suppose aggregated $\hat{\mathbf{X}}_2$ is $\hat{\mathbf{X}}'_2 = \mathrm{concat}((\mathbf{X}_2)_{:\eta_2}, (\hat{\mathbf{X}}_2)_{\eta_1+1:\ell})$. In the same way as Theorem~\ref{thm: correlation_simple_case}, we apply Lemma~\ref{lem:stability_attention_value_aggregation} to derive the difference between the attention output \(\| \mathbf{h}_{2,\ell} - \hat{\mathbf{h}}'_{2,\ell} \|_2\):
\begin{align}
&\| \mathbf{h}_{2,\ell} - \hat{\mathbf{h}}'_{2,\ell} \|_2 \nonumber \\
&\leq \varepsilon_v\sqrt{\ell} B_a + \sqrt{\sum_{j=1}^\ell (\frac{\|\mathbf{g}_1\|_\infty}{\sqrt{\epsilon_{RMS1}}} \|\mathbf{W}_{V}\|_2\| \mathbf{x}_{2,j} - \hat{\mathbf{x}}'_{2,j} \|_2)^2} \nonumber \\
& \leq 2\varepsilon_v\sqrt{\ell}  + 2\frac{\|\mathbf{g}_1\|_\infty}{\sqrt{\epsilon_{RMS1}}} \|\mathbf{W}_{V}\|_2 (\ell - \eta_2) \sqrt{d} \cdot \|\mathbf{g}_1\|_\infty \nonumber \\
& \leq 2\sqrt{d} \cdot \|\mathbf{g}_1\|_\infty \|\mathbf{W}_V\|_2( \sqrt{\ell} + \frac{\|\mathbf{g}_1\|_\infty}{\sqrt{\epsilon_{RMS1}}} (\ell - \eta_2)) \nonumber \\
& = B_h \nonumber
\end{align}

So, $B_h$ is a constant error for each layer. The bound of difference after the casual self-attention block is:
\begin{align}
&B_u \nonumber = \|\mathbf{W}_O\|_2 B_h + \|\mathbf{x}_{1,\ell} - \mathbf{x}_{2,\ell} \|_2 \nonumber \\
&\leq \|\mathbf{W}_O\|_2 B_h + 2\sqrt{d} \cdot \|\mathbf{g}_1\|_\infty \nonumber
\end{align}

Since the $B_h$ is also an constant error, the final output after the FFN block $B_x$ is still a constant. Therefore, the norm of difference in inequality~\ref{eq: probability_same_reward} is substituted with a constant $\hat{\varepsilon}$:
\begin{align}
    &P(R_2 = \hat{R}_2) \nonumber \\ &\geq \sum_{j=1}^V P_2(j)^2 - \max_j \|\mathbf{w}_{a,j}\|_2 \cdot \hat{\varepsilon} \nonumber
\end{align}

Then, the probability $P(R_1 = R_2)$ is 
\begin{align}
     &P(R_1 = R_2) \nonumber \\ &\geq \sum_{j=1}^V P_2(j)^2 + \sum_{j=1}^V P_1(j)^2 - \max_j \|\mathbf{w}_{a,j}\|_2 \cdot \hat{\varepsilon}\nonumber \\ 
     & - \max_j \|\mathbf{w}_{a,j}\|_2 \cdot \| \mathbf{x}_{1,\ell + \eta_1 -\eta_2}^{L} - \hat{\mathbf{x}}_{2,\ell + \eta_1 -\eta_2}^{L} \|_2 \nonumber
\end{align}

Therefore,
\[
P(R_1 = R_2) \propto \frac{1}{\text{MD}(\mathbf{X}^l_1, \mathbf{X}^l_2)}
\]
\end{proof}

\section{Proof of Theorem \ref{thm:hista_better_than_grpo}}
\label{ap: proof_of_better}

We consider the problem of estimating the state value
\[
V(s_t)=\mathbb{E}[R_t]
\]
using terminal reward samples obtained from other states.
Let
\[
\mathcal S=\{s_1,\dots,s_n\}
\]
be a collection of sampled states, where each state \(s_i\) is associated with a terminal reward \(R_i\).
For each \(i\), define the similarity weight
\[
P_{t,i} = \mathbb P(R_i = R_t).
\]

We compare the following two estimators.

\paragraph{Average estimator.}
\[
\hat V_{\mathrm{avg}}(s_t)
=
\frac1n\sum_{i=1}^n R_i.
\]

\paragraph{Probability-weighted estimator.}
\[
\hat V_{\mathrm{PW}}(s_t)
=
\frac{\sum_{i=1}^n P_{t,i}R_i}
{\sum_{i=1}^n P_{t,i}}.
\]

\begin{lemma}[Bias Decomposition of the Probability-Weighted Estimator]
\label{lem:pw_bias_decomposition}
Let
\[
I \sim \mathrm{Unif}(\{1,\dots,n\}),
\]
and define
\[
X = V(s_I)-V(s_t),
\qquad
W = P_{t,I}.
\]
Then
\begin{align}
\mathbb E[\hat V_{\mathrm{avg}}(s_t)] - V(s_t)
&=
\mathbb E[X],
\label{eq:avg_bias}
\\
\mathbb E[\hat V_{\mathrm{PW}}(s_t)] - V(s_t)
&=
\mathbb E[X]
+
\frac{\mathrm{Cov}(W,X)}{\mathbb E[W]}.
\label{eq:pw_bias}
\end{align}
\end{lemma}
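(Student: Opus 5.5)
The plan is to treat the weights $P_{t,i}$ as deterministic and condition on the sampled states, so that the only randomness lies in the terminal rewards $R_i$. Each reward satisfies $\mathbb E[R_i \mid s_i] = V(s_i)$ by the definition of the state value in Section~\ref{sec:state_value_def}. The denominator $\sum_i P_{t,i}$ is then a constant, so expectation passes through both estimators by linearity. We also adopt the implicit convention that $\mathbb E[W] = \frac1n\sum_i P_{t,i} > 0$, which is needed for the weighted estimator to be well-defined.

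For \eqref{eq:avg_bias}, linearity gives
\[
\mathbb E[\hat V_{\mathrm{avg}}(s_t)] = \frac1n\sum_{i=1}^n V(s_i).
\]
Subtracting $V(s_t)$ yields $\frac1n\sum_i \bigl(V(s_i)-V(s_t)\bigr)$. Since $I$ is uniform on $\{1,\dots,n\}$, this average is exactly $\mathbb E[X]$.

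For \eqref{eq:pw_bias}, the same argument gives
\[
\mathbb E[\hat V_{\mathrm{PW}}(s_t)] - V(s_t) = \frac{\sum_i P_{t,i}\bigl(V(s_i)-V(s_t)\bigr)}{\sum_i P_{t,i}},
\]
which uses $\sum_i P_{t,i} V(s_t) / \sum_i P_{t,i} = V(s_t)$. Dividing numerator and denominator by $n$ rewrites this ratio as $\mathbb E[WX]/\mathbb E[W]$ under the uniform index $I$. The identity $\mathbb E[WX] = \mathrm{Cov}(W,X) + \mathbb E[W]\,\mathbb E[X]$ then splits it into $\mathbb E[X] + \mathrm{Cov}(W,X)/\mathbb E[W]$, which is the claim.

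The only delicate step is justifying that the weights may be treated as non-random, or at least as independent of the rewards. If $P_{t,i}$ were computed from the same rollouts that produce $R_i$, we could not pull the expectation through the ratio. I would handle this by stating explicitly that the expectation is taken conditional on the states $\{s_i\}$ and the target $s_t$. The weights $P_{t,i} = \mathbb P(R_i = R_t)$ are population quantities determined by those states, so they are fixed under this conditioning. The rest of the argument is routine algebra.
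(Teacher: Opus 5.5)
Your proposal is correct and follows the paper's proof essentially step for step. Both use linearity with $\mathbb E[R_i]=V(s_i)$, rewrite the weighted ratio as $\mathbb E[WX]/\mathbb E[W]$ under the uniform index $I$, and finish with the covariance identity; your only addition is stating explicitly that the $P_{t,i}$ are held fixed (conditioning on the states), which the paper assumes silently when it passes the expectation through $\sum_i P_{t,i}R_i/\sum_i P_{t,i}$.
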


\begin{proof}
For the average estimator,
\begin{align}
\mathbb E[\hat V_{\mathrm{avg}}(s_t)]
&=
\frac1n\sum_{i=1}^n V(s_i)
\nonumber\\
&=
\mathbb E[V(s_I)]
\nonumber\\
&=
V(s_t)+\mathbb E[X]. \nonumber
\end{align}
This proves~\eqref{eq:avg_bias}.

For the probability-weighted estimator,
\begin{align}
\mathbb E[\hat V_{\mathrm{PW}}(s_t)]
&=
\frac{\sum_{i=1}^n P_{t,i}V(s_i)}
{\sum_{i=1}^n P_{t,i}}
\nonumber\\
&=
\frac{\mathbb E[W(V(s_t)+X)]}
{\mathbb E[W]}
\nonumber\\
&=
V(s_t)
+
\frac{\mathbb E[WX]}{\mathbb E[W]}. \nonumber
\end{align}

Using the covariance identity
\[
\mathbb E[WX]
=
\mathbb E[W]\mathbb E[X]
+
\mathrm{Cov}(W,X),
\]
we obtain
\[
\mathbb E[\hat V_{\mathrm{PW}}(s_t)] - V(s_t)
=
\mathbb E[X]
+
\frac{\mathrm{Cov}(W,X)}{\mathbb E[W]}.
\]
This proves~\ref{eq:pw_bias}.
\end{proof}

\begin{assumption}[Bias-Corrective Similarity Weighting]
\label{assum:bias_corrective_weighting}
Let
\[
I \sim \mathrm{Unif}(\{1,\dots,n\}),
\]
and define
\[
X = V(s_I)-V(s_t),
\qquad
W=P_{t,I}.
\]

Assume the similarity weights satisfy the following two conditions:
\begin{align}
\mathbb E[X]\cdot \mathrm{Cov}(W,X)
&\le 0,
\label{eq:sign_condition}
\\
|\mathrm{Cov}(W,X)|
&\le
2\,\mathbb E[W]\,
|\mathbb E[X]|.
\label{eq:magnitude_condition}
\end{align}
\end{assumption}

\begin{theorem}[Bias Improvement by Similarity Weighting]
\label{thm:pw_bias_improvement}
Under Assumption~\ref{assum:bias_corrective_weighting},
the probability-weighted estimator has no larger absolute bias than the average estimator:
\[
\big|
\mathbb E[\hat V_{\mathrm{PW}}(s_t)] - V(s_t)
\big|
\le
\big|
\mathbb E[\hat V_{\mathrm{avg}}(s_t)] - V(s_t)
\big|.
\]
\end{theorem}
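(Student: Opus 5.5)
The plan is to reduce everything to a scalar inequality using Lemma~\ref{lem:pw_bias_decomposition}. Write $\mu = \mathbb E[X]$ and $c = \mathrm{Cov}(W,X)/\mathbb E[W]$; we will assume $\mathbb E[W] > 0$, i.e.\ not all weights vanish, which is implicit in the definition of $\hat V_{\mathrm{PW}}$. By \eqref{eq:avg_bias} and \eqref{eq:pw_bias}, the two biases are $\mu$ and $\mu + c$ respectively. The theorem is then equivalent to showing $|\mu + c| \le |\mu|$.

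Next, translate Assumption~\ref{assum:bias_corrective_weighting} into conditions on $\mu$ and $c$. Since $\mathbb E[W] > 0$, dividing \eqref{eq:sign_condition} by $\mathbb E[W]$ gives $\mu c \le 0$. Dividing \eqref{eq:magnitude_condition} by $\mathbb E[W]$ gives $|c| \le 2|\mu|$.

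The conclusion then follows by squaring:
\[
(\mu+c)^2 - \mu^2 = c\,(2\mu + c).
\]
We must show this quantity is $\le 0$. If $c = 0$ or $\mu = 0$, it is immediate; note that $\mu = 0$ forces $c = 0$ by the magnitude condition. Otherwise $c$ and $\mu$ have opposite signs. Suppose without loss of generality that $\mu > 0 > c$. Then $2\mu + c \ge 2\mu - |c| \ge 0$, so $c(2\mu + c) \le 0$. The symmetric case $\mu < 0 < c$ is handled the same way. Hence $|\mu + c| \le |\mu|$.

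There is essentially no hard step. The only point needing care is the positivity of $\mathbb E[W]$, which justifies both the division and the direction of the inequalities; this holds because the $P_{t,i}$ are probabilities and at least one is positive whenever the estimator is well-defined. One should also note that the expectation in the statement treats the weights $P_{t,i}$ as fixed, not random, exactly as in the proof of Lemma~\ref{lem:pw_bias_decomposition}. Main Theorem~\ref{thm:hista_better_than_grpo} is then this result restated with $\mathcal N = n$.
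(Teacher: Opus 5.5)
Your proof is correct and follows the paper's route: use Lemma~\ref{lem:pw_bias_decomposition} to reduce the claim to $|\mu+c|\le|\mu|$, then use $\mu c\le 0$ and $|c|\le 2|\mu|$. The paper finishes by sandwiching $\mu+c$ between $-|\mu|$ and $|\mu|$ in the two sign cases, whereas you use the identity $(\mu+c)^2-\mu^2=c(2\mu+c)$; the two endings are equivalent, and your explicit requirement that not all $P_{t,i}$ vanish is slightly more careful than the paper, which deduces $\mathbb E[W]>0$ from $W\ge 0$ alone.
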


\begin{proof}
From Lemma~\ref{lem:pw_bias_decomposition},
\[
\mathbb E[\hat V_{\mathrm{PW}}(s_t)] - V(s_t)
=
\mathbb E[X]
+
\frac{\mathrm{Cov}(W,X)}{\mathbb E[W]}.
\]

Since \(W=P_{t,I}\ge 0\), we have \(\mathbb E[W]>0\).
Condition~\eqref{eq:sign_condition} implies that
\[
\mathbb E[X]
\quad\text{and}\quad
\frac{\mathrm{Cov}(W,X)}{\mathbb E[W]}
\]
always have opposite signs (or one of them is zero).
Condition~\eqref{eq:magnitude_condition} further implies
\[
\left|
\frac{\mathrm{Cov}(W,X)}{\mathbb E[W]}
\right|
\le
2|\mathbb E[X]|.
\]

We now prove
\[
\left|
\mathbb E[X]
+
\frac{\mathrm{Cov}(W,X)}{\mathbb E[W]}
\right|
\le
|\mathbb E[X]|.
\]

If \(\mathbb E[X]\ge 0\), then
\[
-2\mathbb E[X]
\le
\frac{\mathrm{Cov}(W,X)}{\mathbb E[W]}
\le 0.
\]
Therefore,
\[
-\mathbb E[X]
\le
\mathbb E[X]
+
\frac{\mathrm{Cov}(W,X)}{\mathbb E[W]}
\le
\mathbb E[X],
\]
which implies
\[
\left|
\mathbb E[X]
+
\frac{\mathrm{Cov}(W,X)}{\mathbb E[W]}
\right|
\le
|\mathbb E[X]|.
\]

If \(\mathbb E[X]\le 0\), then
\[
0
\le
\frac{\mathrm{Cov}(W,X)}{\mathbb E[W]}
\le
-2\mathbb E[X].
\]
Therefore,
\[
\mathbb E[X]
\le
\mathbb E[X]
+
\frac{\mathrm{Cov}(W,X)}{\mathbb E[W]}
\le
-\mathbb E[X],
\]
which again implies
\[
\left|
\mathbb E[X]
+
\frac{\mathrm{Cov}(W,X)}{\mathbb E[W]}
\right|
\le
|\mathbb E[X]|.
\]

Finally, by Lemma~\ref{lem:pw_bias_decomposition},
\[
\mathbb E[\hat V_{\mathrm{avg}}(s_t)] - V(s_t)
=
\mathbb E[X].
\]
Hence,
\[
\big|
\mathbb E[\hat V_{\mathrm{PW}}(s_t)] - V(s_t)
\big|
\le
\big|
\mathbb E[\hat V_{\mathrm{avg}}(s_t)] - V(s_t)
\big|.
\]
\end{proof}

\begin{remark}[Interpretation of the Assumptions]
\label{interpretation-of-the-similarity-assumptions}
Condition~\eqref{eq:sign_condition} characterizes the alignment between the similarity weights and the estimation error.
Recall that the similarity weights are defined as
\[
P_{t,i}=\mathbb P(R_i=R_t),
\]
which measures the likelihood that state \(s_i\) shares the same terminal outcome as the target state \(s_t\).
Intuitively, states whose values are closer to \(V(s_t)\) are more likely to produce the same terminal reward and therefore tend to receive larger weights.

When the average estimator overestimates the target value, i.e.,
\[
\mathbb E[X]>0,
\]
states with smaller value discrepancy
\[
V(s_I)-V(s_t)
\]
are expected to receive larger similarity weights, leading naturally to
\[
\mathrm{Cov}(W,X)<0.
\]
Conversely, when the average estimator underestimates the target value, i.e.,
\[
\mathbb E[X]<0,
\]
states with relatively larger values are expected to receive larger weights, yielding
\[
\mathrm{Cov}(W,X)>0.
\]
Therefore, Condition~\eqref{eq:sign_condition} formalizes the intuition that the probability-weighted estimator tends to assign higher weights to states whose value estimates better align with the target state, thereby shifting the estimation bias toward zero.

Condition~\eqref{eq:magnitude_condition} controls the strength of this correction term.
In practice, this can be enforced by introducing a temperature or scaling hyperparameter in the similarity weighting function, preventing the covariance term from over-correcting the original estimation bias.
\end{remark}

\clearpage

\section{State Value Difference Distribution}
\label{ap-difference-distribution}

The Figure \ref{ap-ppo_distribution} shows the distribution of the differences between the state values predicted by PPO and GRPO across all SVEB domains using Qwen2.5-1.5B as the frozen actor and trained critic. We observe a consistent phenomenon of state value degeneration, which provides additional empirical support for our observations.

\begin{figure}[ht]
  \vskip -0.0in
  \begin{center}
    \centerline{\includegraphics[width=\columnwidth]{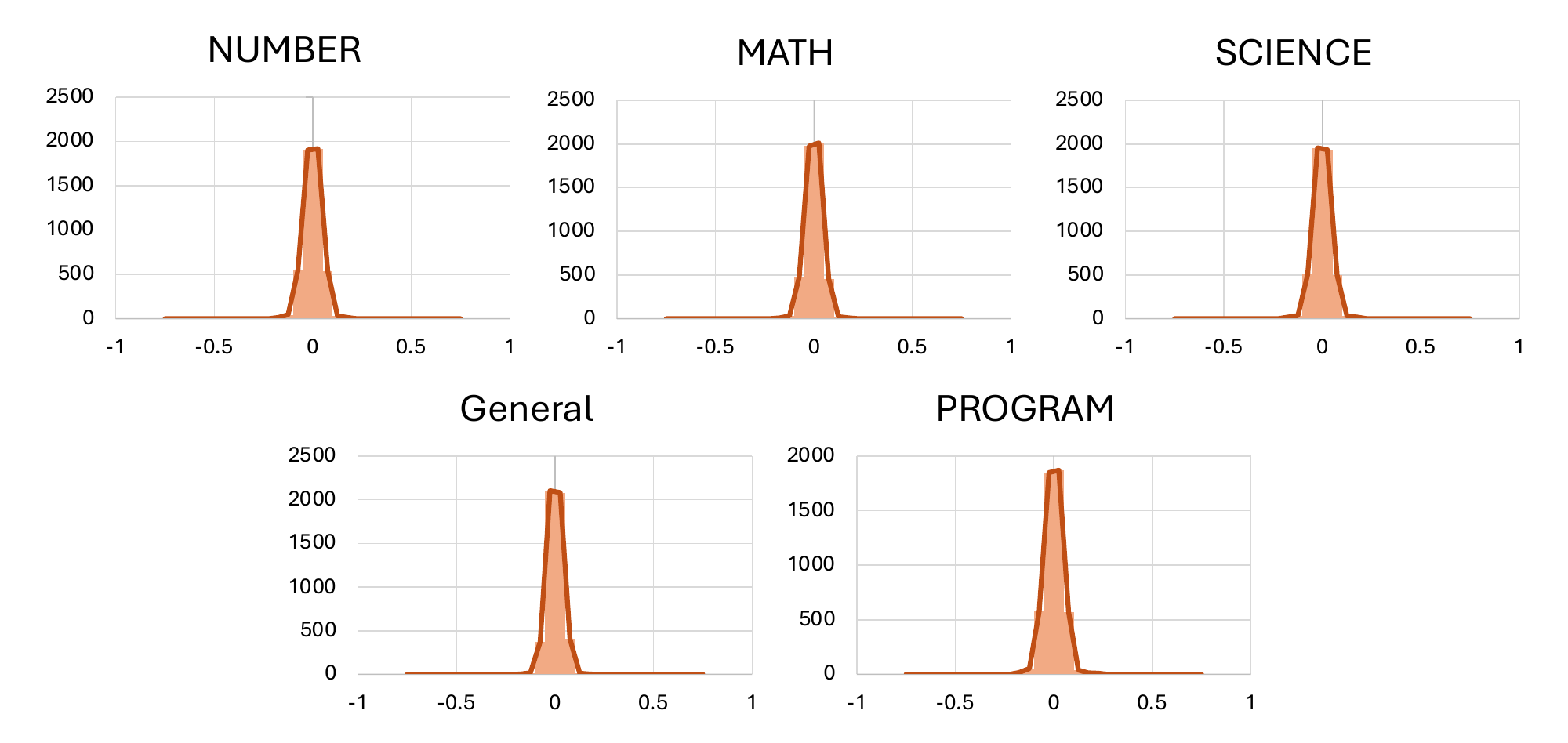}}
    \caption{
      Distribution of the difference between state value predicted by PPO with $\lambda = 1.0$ and the group averaged reward. The x-axis is the difference and y-axis is the number of prediction.
    }
    \label{ap-ppo_distribution}
  \end{center}
  \vskip -0.2in
\end{figure}

The Figure \ref{ap-numca_distribution} presents the distribution of differences between the state values predicted by \textit{Numca} and GRPO across all SVEB domains under the same Qwen2.5-1.5B setup. In the SVEB-NUMBER domain, although the state values are clustered around the group-averaged reward, a non-negligible number of predictions fall outside this central region. Notably, we observe there is prediction near $0.5$ and $-0.5$, suggesting that \textit{Numca} is able to identify ``important'' states that deterministically lead to correct or incorrect terminal outcomes. In contrast, for domains without explicit numerical milestones, \textit{Numca} degenerates to behavior similar to GRPO, exhibiting a sharper distribution than PPO-N.

\begin{figure}[ht]
  \vskip -0.0in
  \begin{center}
    \centerline{\includegraphics[width=\columnwidth]{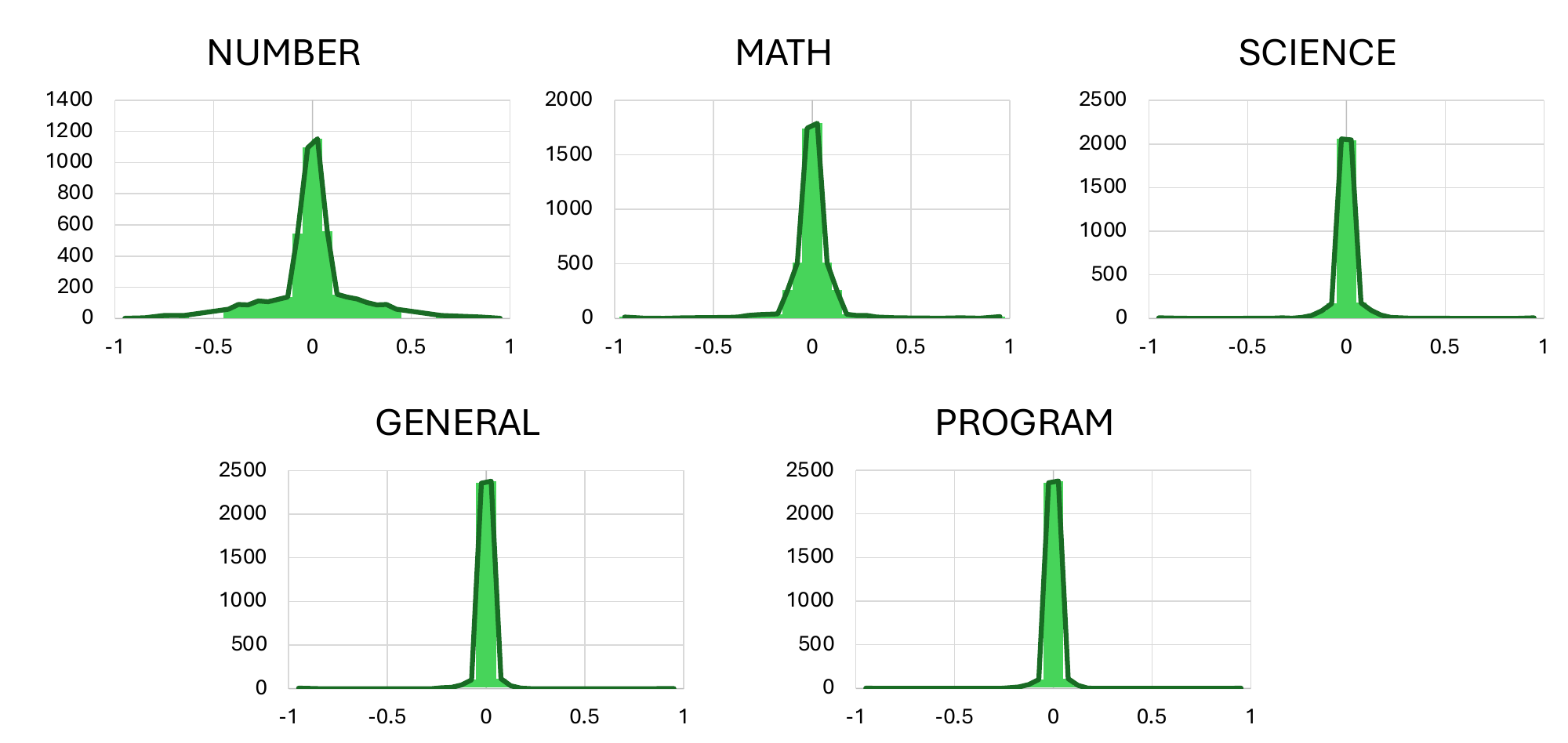}}
    \caption{
      Distribution of the difference between state value predicted by \textit{Numca} and the group averaged reward. The x-axis is the difference and y-axis is the number of prediction.
    }
    \label{ap-numca_distribution}
  \end{center}
  \vskip -0.2in
\end{figure}

The Figure \ref{ap-hista_distribution} illustrates the distribution of differences between the state values predicted by \textsc{Hista} and GRPO across all SVEB domains using Qwen2.5-1.5B. Similar to \textsc{Numca}, the state values predicted by \textsc{Hista} remain centered around the group-averaged reward. However, the resulting distributions are noticeably smoother, and predictions near $0.5$ and $-0.5$ are consistently observed across domains. Moreover, \textsc{Hista} maintains this smooth distribution across all SVEB tasks, demonstrating stronger generalization capability and corroborating the quantitative results reported in Table \ref{SVEB-all}.

\begin{figure}[ht]
  \vskip -0.0in
  \begin{center}
    \centerline{\includegraphics[width=\columnwidth]{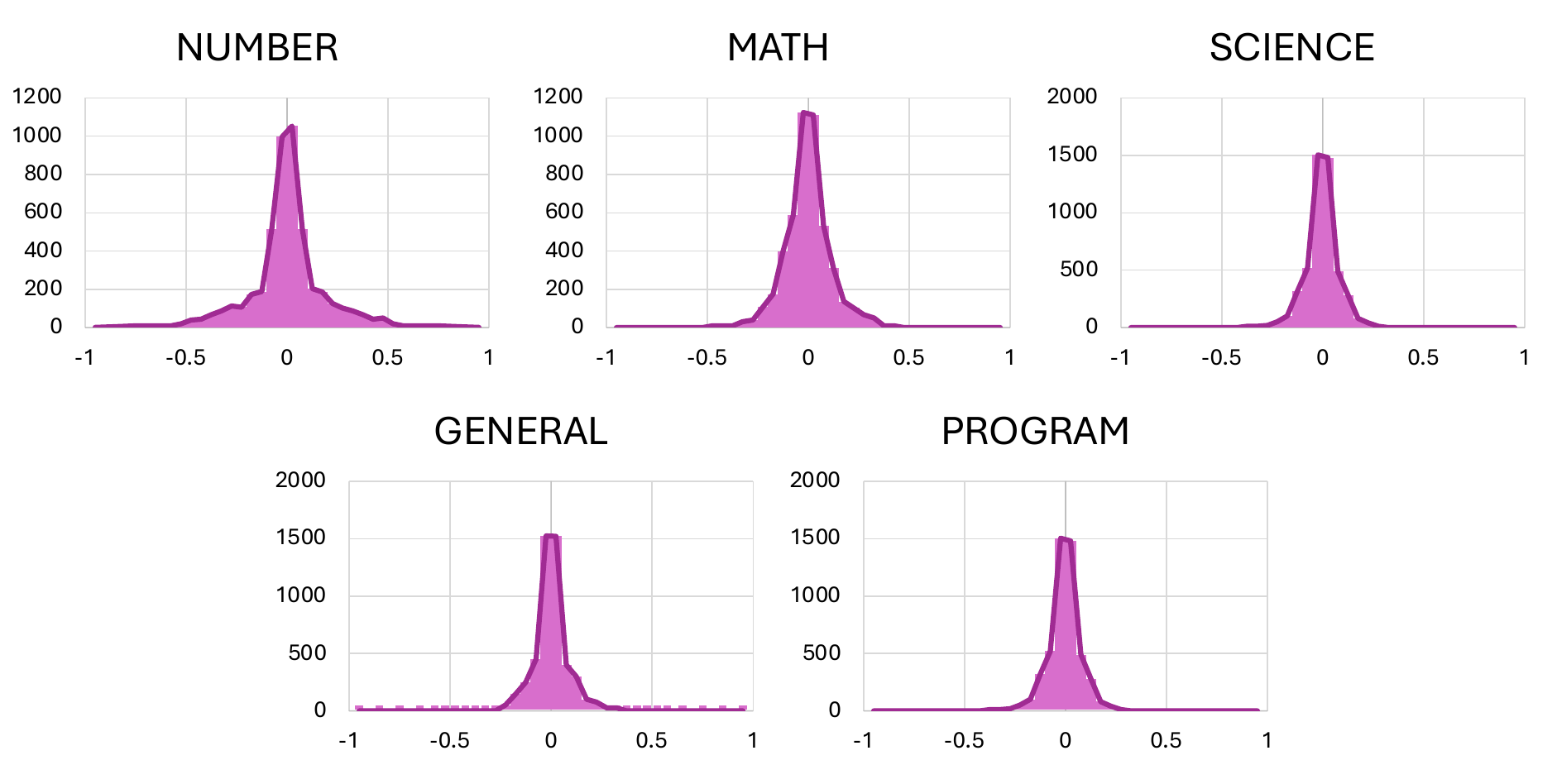}}
    \caption{
      Distribution of the difference between state value predicted by \textit{Hista} and the group averaged reward. The x-axis is the difference and y-axis is the number of prediction.
    }
    \label{ap-hista_distribution}
  \end{center}
  \vskip -0.2in
\end{figure}

\section{SVEB on Different Models}
\label{ap-SVEB-different-models}

\begin{table*}
\caption{Mean Absolute Error (MAE) for GRPO and \textit{Hista} across SVEB fields and different models, with lower values being better. All values are calculated relative to a reference value established by MCS@20.}
\vskip -0.1in
\label{SVEB-all-app}
\begin{center}
\begin{small}
\setlength{\aboverulesep}{0.0pt}
\setlength{\belowrulesep}{0.0pt}
\setlength{\extrarowheight}{3.5pt}
\begin{tabular}{lccccc}
\toprule
\textbf{SVEB} & \textbf{Number} $\downarrow$ & \textbf{Math} $\downarrow$ & \textbf{Science} $\downarrow$ & \textbf{General} $\downarrow$ & \textbf{Programming} $\downarrow$ \\
\midrule
\rowcolor{refyellow} \multicolumn{6}{l}{Qwen2.5-0.5B-Instuct} \\
\rowcolor{graybs}GRPO@40  & 0.134 & 0.153  & 0.195  & 0.120 & 0.160 \\ 
HISTA@40 & 0.108\dimpr{-0.026} & 0.143\dimpr{-0.010} & 0.171\dimpr{-0.024} & 0.094\dimpr{-0.026} & 0.134\dimpr{-0.026} \\
MCS@1    & 0.150 & 0.197  & 0.226  & 0.096 & 0.141 \\
MCS@2    & 0.107 & 0.151  & 0.168  & 0.076 & 0.098 \\
\midrule
\rowcolor{refyellow} \multicolumn{6}{l}{Qwen2.5-1.5B-Instuct} \\
\rowcolor{graybs}GRPO@40  & 0.178 & 0.161  & 0.195  & 0.176 & 0.154 \\ 
HISTA@40 & 0.132\dimpr{-0.046} & 0.139\dimpr{-0.022} & 0.166\dimpr{-0.029} & 0.132\dimpr{-0.044} & 0.118\dimpr{-0.036} \\
MCS@1    & 0.194 & 0.236  & 0.225  & 0.155 & 0.176 \\
MCS@2    & 0.146 & 0.156  & 0.168  & 0.124 & 0.120 \\
\midrule
\rowcolor{refyellow} \multicolumn{6}{l}{Qwen2.5-7B-Instuct} \\
\rowcolor{graybs}GRPO@40  & 0.212 & 0.176  & 0.196  & 0.162 & 0.142 \\ 
HISTA@40 & 0.164\dimpr{-0.048} & 0.140\dimpr{-0.036} & 0.152\dimpr{-0.044} & 0.148\dimpr{-0.014} & 0.112\dimpr{-0.030} \\
MCS@1    & 0.173 & 0.182  & 0.199  & 0.226 & 0.141 \\
MCS@2    & 0.114 & 0.131  & 0.146  & 0.152 & 0.101 \\
\midrule
\rowcolor{refyellow} \multicolumn{6}{l}{Qwen3-0.6B-Instuct, Disable Thinking} \\
\rowcolor{graybs}GRPO@40  & 0.168 & 0.155  & 0.124  & 0.147 & 0.212 \\ 
HISTA@40 & 0.159\dimpr{-0.009} & 0.151\dimpr{-0.004} & 0.115\dimpr{-0.009} & 0.120\dimpr{-0.027} & 0.184\dimpr{-0.028} \\
MCS@1    & 0.267 & 0.281  & 0.186  & 0.186 & 0.208 \\
MCS@2    & 0.180 & 0.193  & 0.130  & 0.129 & 0.142 \\
\midrule
\rowcolor{refyellow} \multicolumn{6}{l}{Qwen3-1.7B-Instuct, Disable Thinking} \\
\rowcolor{graybs}GRPO@40  & 0.175 & 0.136  & 0.125  & 0.137 & 0.154 \\ 
HISTA@40 & 0.141\dimpr{-0.034} & 0.122\dimpr{-0.014} & 0.110\dimpr{-0.015} & 0.105\dimpr{-0.032} & 0.128\dimpr{-0.026} \\
MCS@1    & 0.212 & 0.208  & 0.142  & 0.152 & 0.160 \\
MCS@2    & 0.152 & 0.160  & 0.117  & 0.106 & 0.116 \\
\midrule
\rowcolor{refyellow} \multicolumn{6}{l}{Qwen3-4B-Instuct, Disable Thinking} \\
\rowcolor{graybs}GRPO@40  & 0.178 & 0.156  & 0.125  & 0.147 & 0.178 \\ 
HISTA@40 & 0.122\dimpr{-0.056} & 0.139\dimpr{-0.017} & 0.102\dimpr{-0.023} & 0.118\dimpr{-0.029} & 0.165\dimpr{-0.013} \\
MCS@1    & 0.209 & 0.226  & 0.145  & 0.159 & 0.199 \\
MCS@2    & 0.139 & 0.149  & 0.101  & 0.106 & 0.146 \\
\midrule
\rowcolor{refyellow} \multicolumn{6}{l}{Qwen3-8B-Instuct, Disable Thinking} \\
\rowcolor{graybs}GRPO@40  & 0.183 & 0.146  & 0.156  & 0.141 & 0.182 \\ 
HISTA@40 & 0.127\dimpr{-0.056} & 0.125\dimpr{-0.021} & 0.129\dimpr{-0.027} & 0.120\dimpr{-0.021} & 0.180\dimpr{-0.002} \\
MCS@1    & 0.214 & 0.219  & 0.206  & 0.208 & 0.204 \\
MCS@2    & 0.145 & 0.147  & 0.131  & 0.126 & 0.156 \\
\midrule
\rowcolor{refyellow} \multicolumn{6}{l}{Qwen3-4B-Base} \\
\rowcolor{graybs}GRPO@40  & 0.223 & 0.223  & 0.166  & 0.147 & 0.108 \\ 
HISTA@40 & 0.146\dimpr{-0.077} & 0.139\dimpr{-0.084} & 0.144\dimpr{-0.022} & 0.107\dimpr{-0.040} & 0.078\dimpr{-0.030} \\
MCS@1    & 0.196 & 0.190  & 0.107  & 0.136 & 0.052 \\
MCS@2    & 0.139 & 0.140  & 0.084  & 0.080 & 0.058 \\
\midrule
\rowcolor{refyellow} \multicolumn{6}{l}{Qwen3-8B-Base} \\
\rowcolor{graybs}GRPO@40  & 0.144 & 0.173  & 0.226  & 0.164 & 0.135 \\ 
HISTA@40 & 0.093\dimpr{-0.051} & 0.135\dimpr{-0.038} & 0.182\dimpr{-0.044} & 0.103\dimpr{-0.061} & 0.089\dimpr{-0.046} \\
MCS@1    & 0.062 & 0.165  & 0.225  & 0.083 & 0.094 \\
MCS@2    & 0.056 & 0.131  & 0.153  & 0.055 & 0.068 \\
\bottomrule
\end{tabular}
\end{small}
\end{center}
\vskip -0.1in
\end{table*}

Previous research has discovered the special shape and property of LLM or Transformer model hidden states \cite{anisotropy, OutlierDimensions}, which raise the concern about richness or quality of hidden states as an representation of state. We consider LLMs following large-scale pretraining should be capable for producing meaning representation. To support this, we provide SVEB results across the Qwen2.5 and Qwen3 series in Table~\ref{SVEB-all-app}. Throughout different setups, for example different model series and whether the model is instruction tuned, \textit{Hista} achieve consistent improvement compared with GRPO. For small models like Qwen2.5-0.5B and Qwen3-0.6B, the result proves that the quality of their hidden states is enough to serve as representation.

\section{Ablation Study}
\label{ap-ablation-study}

In this section, we perform ablation study on hyperparameter of \textit{Hista}. For efficiency, we use Qwen2.5-1.5B-Instruct as the actor and SVEB-NUMBER as the benchmark for ablation study. First component is the \textit{MinDistance}, which is the foundation of \textit{Hista}. We evaluate two different designs:
\begin{enumerate}
    \item Substitute the L2 distance with $(1 - \text{Cosine Similarity})$
    \item Change the \textit{MinDistance} operation to sum of distance between hidden states in the same position, which is  $\sum_{i=1}^n \parallel \mathbf{x}_{1,i} - \mathbf{x}_{2,i} \parallel$.
\end{enumerate}

The result is presented in Table \ref{ap-ablation-distance}. Substituting the \textit{MinDistance} causes significant performance drop, which validates the Theorem \ref{thm: correlation_hidden_states_reward} empirically.

\begin{table}
  \caption{Ablation study on the distance measurement of \textit{Hista}. "Cosine" means the distance is measured by $(1 - \text{Cosine Distance})$. "Corresponding" means the distance is measure by $ \sum_{i=1}^n \parallel \mathbf{x}_{1,i} - \mathbf{x}_{2,i} \parallel$.}
  \vskip -0.1in
  \label{ap-ablation-distance}
  \begin{center}
    \begin{small}
      \begin{sc}
        \begin{tabular}{lccccr}
          \toprule
          Method  & MinDistance   & Cosine    & Correspond    \\
          \midrule
          SVEB    & \textbf{0.142} & 0.226 & 0.208   \\
          \bottomrule
        \end{tabular}
      \end{sc}
    \end{small}
  \end{center}
  \vskip -0.0in
\end{table}

\begin{table}
  \caption{Ablation study on the index of layer that we take the hidden state from.}
  \vskip -0.1in
  \label{ap-ablation-layer}
  \begin{center}
    \begin{small}
        \begin{tabular}{lccccccr}
          \toprule
          Value  & 1   & 2    & 3 &  6 & 10   \\
          \midrule
          SVEB    & 0.142 & 0.144 & 0.141 & 0.146 & 0.142   \\
          \bottomrule
        \end{tabular}
    \end{small}
  \end{center}
  \vskip -0.0in
\end{table}

\begin{table}
  \caption{Ablation study on the SVEB metric with different EMA factor $\alpha$ (x-axis) and the compression interval $\varphi$ y-axis.}
  \vskip -0.1in
  \label{ap-ablation-alpha-p}
  \begin{center}
    \begin{small}
        \begin{tabular}{lcccccccr}
          \toprule
          Value  & 0.0 & 0.5  & 0.6    & 0.7 &  0.8 & 0.9   \\
          \midrule
          1    & \textbf{0.140} & 0.142 & 0.144 & 0.146 & 0.151 & 0.157  \\
          5    & 0.157 & 0.146 & 0.143 & \textbf{0.142} & 0.142 & 0.145  \\
          10    & 0.155 & 0.146 & 0.144 & \textbf{0.143} & 0.144 & 0.145  \\
          25    & 0.164 & 0.157 & 0.158 & 0.154 & \textbf{0.151} & 0.154  \\
          \bottomrule
        \end{tabular}
    \end{small}
  \end{center}
  \vskip -0.2in
\end{table}

\begin{table}
  \caption{Ablation study on the sampling interval value with fixed $k=66$.}
  \vskip -0.1in
  \label{ap-ablation-interval}
  \begin{center}
    \begin{small}
        \begin{tabular}{lcccccr}
          \toprule
          Value  &  10 &  25 & 50 & 100 & 150  \\
          \midrule
          SVEB    & 0.146 & 0.144 & \textbf{0.142} & 0.142 & 0.148  \\
          \bottomrule
        \end{tabular}
    \end{small}
  \end{center}
  \vskip -0.0in
\end{table}

In the practical implementation of \textit{Hista}, another hyperparameter is the index $l$ of layer that we take the hidden state from. Although Theorem \ref{thm: correlation_hidden_states_reward} shows the correlation exists in any layer, we try to find whether the layer in the middle will bring better performance \cite{layerbylayer}. The result is presented in Table \ref{ap-ablation-layer}. Changing the index of layer does not bring notable performance change. For simplicity, we always use the last hidden layer in all other experiments and training.

Since the EMA and pooling operation in the implementation of \textit{Hista} introduces factor $\alpha$ and compression interval $\varphi$, we investigate their influence on state value estimation in Table \ref{ap-ablation-alpha-p}. The result shows that settings $\alpha = 0$ and $\varphi = 1$ can achieve best performance in state value estimation. For other value of $p$, the best performance is achieved only with a special $\alpha$. One concern is that $p = 1$ will introduce more computation cost, and there is not significant benefit according to SVEB. Therefore, we always use the combination fo $\varphi= 5$ and $\alpha = 0.7$.

Finally, the last two hyperparameter in the state sampling interval $\delta$ and $k$ nearest neighbor. We fix the $k=40$ and try different $\delta$ value in Table \ref{ap-ablation-interval}. The result also indicates that we also need to design the combination fo $\delta$ and $k$ value to get the optimal performance. In our experiments, we use a heuristic combination $k=66$ and $\delta=50$.

\section{Efficiency of Hista}
\label{ap-hista-efficiency}

To prove the claim that \textit{Hista}'s latency and GPU memory footprint is comparable with GRPO, we monitor the average step time and average GPU memory consumption of the first 1200 training step of different models (one step refers to one policy update) in Table~\ref{ap-average-step-time} and Table~\ref{ap-average-gpu-memory}. All models are Base version. The reponse length of Qwen2.5-14B and Qwen3-1.7B is set to 4096. The reponse length of Qwen3-4B is set to 8192. In some settings. \textit{Hista} uses less memory, because our implementation explicitly calls torch.cuda.empty\_cache() to reduce fragmentation after the computation of \textit{Hista}. Given the \textit{Hista} algorithm has additional torch.cuda.empty\_cache() call, the average additional latency introduced by \textit{Hista} is $<1$ second per training step across all configurations. It is because, according to the pipeline in Figure~\ref{HistaPipeline}, the stage 1 is completed together with the compute\_old\_logp() function, the stage 2 only contains redistribution of hidden states, which has negligible cost, and the computational cost of top\_k in stage 4 is also low. The only heavy computation happens in stage 3, but it can be accelerated by collecting the embeddings into one matrix on GPU and applying torch.cummin() and torch.cumsum().

\begin{table}
  \caption{Average step time of the first 1200 training step}
  \vskip -0.1in
  \label{ap-average-step-time}
  \begin{center}
    \begin{small}
        \begin{tabular}{lccccr}
          \toprule
          Time (s)  & Qwen2.5-14B   & Qwen3-4B  & Qwen3-1.7B   \\
          \midrule
          DAPO    & 20.54 & 62.18 & 24.81   \\
          +Hista   & 21.07 & 62.82 & 25.01   \\
          \bottomrule
        \end{tabular}
    \end{small}
  \end{center}
  \vskip -0.0in
\end{table}

\begin{table}
  \caption{Average GPU memory consumption of the first 1200 training step.}
  \vskip -0.1in
  \label{ap-average-gpu-memory}
  \begin{center}
    \begin{small}
        \begin{tabular}{lccccr}
          \toprule
          Mem (GB)  & Qwen2.5-14B   & Qwen3-4B  & Qwen3-1.7B   \\
          \midrule
          DAPO    & 1011 & 582 & 189    \\
          +Hista   & 1020 & 486 & 182   \\
          \bottomrule
        \end{tabular}
    \end{small}
  \end{center}
  \vskip -0.0in
\end{table}

\section{Training Details}
\label{ap-training-details}

\subsection{Training of Numca}
\label{ap-training-numca}

For Qwen2.5-1.5B-Instruct model, the data is sampled from DAPO-17K and OpenR1-220K dataset. The data is further filtered by $0.1 < \text{accuracy} < 0.8$, where the accuracy is calculated through sampling 20 responses using Qwen2.5B-1.5B-Instruct. It results in train set with 8199 problems and validation set with 910 problems, where approximately 1/3 from DAPO-17K and 2/3 from OpenR1-220K dataset. We implement the \textit{Numca} algorithm on the TRL library to perform RL training on 4xRTX3090 GPU.

For Qwen2.5-Math-1.5B-Instruct model, we adopt the same pipeline as the Qwen2.5-1.5B-Instruct model to construct the train set and directly use the validation set of Qwen2.5-1.5B-Instruct model. It results in 12334 problems in train set with similar distribution. We use the same code and 4xRTX3090 GPU to train.

\subsection{Training of Hista}
\label{ap-training-Hista}

\textit{Hista} is also implemented with TRL library. For model with different number of parameters, we will use different settings to get desirable training speed.

\textbf{1.5B and 0.5B model} We use 4xRTX3090 GPU and for DeepSpeed Zero 3 \cite{deepspeed}.

\textbf{3B model} We use 8xH800 GPU with DeepSpeed Zero 2.

\textbf{7B and 14B model} We use 2 nodes equipped with 8xH800 GPU and DeepSpeed Zero 3.

\textbf{R1-Distill-1.5B} We use 4 nodes equipped with 8xH800 GPU and DeepSpeed Zero 2.

The distribution of training data is presented in Appendix~\ref{ap-data-statistics}.

\begin{table}
  \caption{From top to bottom, the first part is statistics of train set of 1.5B model; The second part is statistics of 3B, 7B, and 14B model.}
  \vskip -0.1in
  \label{ap-data-statistic-table}
  \begin{center}
    \begin{small}
        \begin{tabular}{lccr}
          \toprule
          Source  &  Count  \\
          \midrule
          DAPO-17K    & 2500 \\
          OpenR1-220K & 3000 \\
          Llama-Nemotron Post Training & 3000 \\
          WebInstruct-verified & 3000 \\
          Verifiable Coding Problem & 2291 \\
          \midrule
          DAPO-17K    & 2500 \\
          OpenR1-220K & 2500 \\
          Llama-Nemotron Post Training & 3000 \\
          WebInstruct-verified & 3000 \\
          Verifiable Coding Problem & 2400 \\
          \bottomrule
        \end{tabular}
    \end{small}
  \end{center}
  \vskip -0.0in
\end{table}

\section{Data Statistics}
\label{ap-data-statistics}

For MATH and DAPO-17K dataset, we directly use the original dataset without any filter or processing. For \textbf{Hybrid Reasoning Dataset}, each model has its own train set. To construct the train set for different model, we will generate 20 rollouts from the corresponding model and filter questions with $\text{accuracy} < 0.1$ or $\text{accuracy} > 0.8$. Then, the training set for this model is sampled from remaining questions The statistic for 1.5B, 3B, 7B, and 14B model in presented in Table \ref{ap-data-statistic-table}.

\section{More Discussions}

\textbf{Performance of \textit{Hista}:} We provide the following analysis to understand regimes where Hista may underperform compared to group-average baselines such as DAPO in some benchmarks of Table \ref{Task 1 Qwen2.5}. Consider four disjoint trajectories:
\[
s_0 \rightarrow s_{1,1} \rightarrow s_{1,2} \rightarrow s_{1,3} \rightarrow s_{\mathrm{1, correct}},
\]
\[
s_0 \rightarrow s_{2,1} \rightarrow s_{2,2} \rightarrow s_{2,3} \rightarrow s_{\mathrm{2, correct}},
\]
\[
s_0 \rightarrow s_{3,1} \rightarrow s_{3,2} \rightarrow s_{3,3} \rightarrow s_{\mathrm{3, wrong}},
\]
\[
s_0 \rightarrow s_{4,1} \rightarrow s_{4,2} \rightarrow s_{4,3} \rightarrow s_{\mathrm{4, wrong}}.
\]
Assume that states from different trajectories are well separated in the representation space. Then, the value estimate of an intermediate state, e.g., \(V(s_{1,1})\), is mainly determined by continuations passing through that state. Since all continuations from \(s_{1,1}\) are successful, \textit{Hista} assigns it a high value, concentrating advantage on transitions entering such states (e.g., \(s_0 \rightarrow s_{1,1}\)) while assigning weaker signals to later transitions. Therefore, \textit{Hista} tends to emphasize the \textbf{observed key reasoning steps} instead of uniformly reinforcing the entire successful trajectory. In contrast, group-average methods such as DAPO assign identical advantages to all steps in a successful trajectory. While this is less precise, it ensures full credit coverage, meaning every step that contributes to success is reinforced. Consequently, Hista is more effective when accurate value estimation and gradient denoising are critical, whereas group-average methods may remain competitive when uniform reinforcement over long trajectories is more beneficial.

\begin{table*}[p]
\caption{Notation Reference Table for SVEB, Numca, Hista and the Proof Section}
\vskip -0.1in
\label{tab:notation-reference-table}
\begin{center}
\begin{small}
    \begin{tabular}{lcccr}
    \toprule
    \textbf{Notation} & \textbf{Meaning} & \textbf{Domain/Note}\\
    \midrule
    \rowcolor{refyellow}\multicolumn{3}{l}{\textbf{State Value Estimation Benchmark}} \\
    $\mathcal{D}_s$ & Dataset of sampled states $\{ s_t^{(j)} \}_{j=1}^{N}$ & Set\\
    $s_t^{(j)}$ & The $j$-th sampled state at step $t$ & $s_t^{(j)} \in \mathcal{D}_s$\\
    $N$ & Total number of states in the dataset $\mathcal{D}_s$ & $N = |D_s|$\\
    $n$ & Number of independent MCS per state & $n=20$ for reference value\\
    $r(s_T^{(i)})$ & Terminal reward of the $i$-th Monte Carlo continuation & Section \ref{MDP_modeling} \\
    $\widehat{V}(s_t)$ & Monte Carlo reference value (average of terminal rewards) &  Equation \ref{eq:reference_value}\\
    $f(s_t, \theta)$ & Value estimation method/function being evaluated & $\theta$ includes parameters and rollouts\\
    $\widehat{V}_{f,\theta}(s_t)$ & Estimated state value produced by method $f$ & $\widehat{V}_{f,\theta}(s_t) = f(s_t,\theta)$\\
    $\text{MAE}(f, D_s)$ & Mean Absolute Error as the metric for method $f$ on $D_s$ & Equation \ref{eq:sveb_metric}\\
    \midrule
    \rowcolor{refyellow}\multicolumn{3}{l}{\textbf{Numerical Milestone Credit Assignment}} \\
    $\mathcal{P}$ & Predefined set of numerical patterns & Set of Strings\\
    $m$ & A milestone; a token subsequence matching a pattern $p \in \mathcal{P}$ & $\text{decode}(m) = p$\\
    $\mathbb{M}$ & Function mapping a state to the set of all milestones contained within it & $s_t^M \triangleq \mathbb{M}(s_t)$ \\
    $s_t^M$ & Abstract state at time $t$, defined as the set of milestones $m$ & Concept from HRL \\
    $a^M_t$ & Macro action; the sequence of tokens generated between two abstract states & Concept from HRL \\
    $\mathcal{D}_r$ & Set of rollouts from same input problem & A group of rollouts\\
    $\tau$ & A single rollout $(s_0, a_0, \dots, s_T)$ & Section \ref{MDP_modeling} \\
    $\mathcal{T}$ & Dictionary/Table storing counts and reward sums for each abstract state & $\mathcal{T}[s^M] = (\text{count}, \text{reward\_sum})$ \\
    $V(s^M)$ & Estimated value of an abstract state, computed via reward averaging & Algorithm \ref{alg:Numca}\\
    \midrule
    \rowcolor{refyellow}\multicolumn{3}{l}{\textbf{Hidden State based State Value Estimation}} \\
    $\mathbf{x}$ & One hidden state tensor & $\mathbf{x} \in \mathbb{R}^d$ \\
    $\mathbf{X}$ & Sequence of hidden states $(\mathbf{x}_1, \dots, \mathbf{x}_{\eta})$ & $\mathbf{X} \in \mathbb{R}^{\eta \times d}$\\
    $\eta$ & Number of tokens/hidden states in a sequence & $\eta = |s_t|$\\
    $\text{MD}(\mathbf{X}_1, \mathbf{X}_2)$ & \textit{MinDistance} operation between two hidden state sequences & Definition \ref{def:MinDistance}\\
    $R_i$ & Terminal reward of the continuation from state $s_i$ & Random Variable \\
    $P(R_1 = R_2)$ & Probability that continuation of two states share the same terminal reward & $P(R_1 = R_2) \in [0, 1]$\\
    $L$ & Total number of layers in the LLM & LLM architecture\\
    $\mathcal{N}$ & Number of states in one response group & $\mathcal{N} \in \mathbb{N}$ \\
    $\hat V_{\mathrm{avg}}$ & Naive average estimator of the state value & Definition \ref{def:AverageEstimator}\\
    $\hat V_{\mathrm{PW}}$ & Probability-weighted estimator of the state value & Definition \ref{def:WeightedEstimator}\\
    $P_{t,i}$ & Weight (probability) between state $s_t$ and state $s_i$ & $P_{t,i} = P(R_t = R_i)$\\
    $\alpha$ & Smoothing factor for Exponential Moving Average & $\alpha \in [0,1]$\\
    $\varphi$ & Compression interval for compressing hidden states & $\varphi \in \mathbb{N}, \varphi\leq \delta$\\
    $\mathbf{E}_\tau$ & Sequence of compressed embeddings for rollout $\tau$ & $\mathbf{E}_\tau \in \mathbb{R}^{\lfloor \eta / \varphi\rfloor \times d}$\\
    $\delta$ & Sampling interval for defining the finite state space $\mathcal{S}_\tau$ & $\delta \in \mathbb{N}, \delta \geq \varphi$\\
    $s^H_{\tau, i}$ & Abstract state at index $i$, represented by embeddings & $s^H_{\tau, i} \in \mathbb{R}^{(i\delta) \times d}$ \\
    $k$ & Number of nearest neighbors used for value estimation & $k \in \mathbb{N}$\\
    $\omega_i$ & Probability-based weight assigned to the $i$-th neighbor & Equation \ref{eq:hista_state_value}\\
    \midrule
    \rowcolor{refyellow}\multicolumn{3}{l}{\textbf{Proof}} \\
$\mathbf{G}$ & Diagonal matrix with elements of $\mathbf{g}$ on the main diagonal & $\mathbf{G} \in \mathbb{R}^{d \times d}$\\
$y(\mathbf{x})$ & Unscaled normalized mapping of $\mathbf{x}$ & $y(\mathbf{x}) \in \mathbb{R}^d$\\
$J_y(\mathbf{x})$ & Jacobian matrix of the unscaled mapping $y(\mathbf{x})$ & $J_y(\mathbf{x}) \in \mathbb{R}^{d \times d}$\\
$\varepsilon_{qk}$ & Maximum norm among the query and key vectors & $\varepsilon_{qk} \in \mathbb{R}^+$ \\
$\varepsilon_v$ & Uniform upper bound on the $\ell_2$-norm of value rows & $\varepsilon_v \in \mathbb{R}^+$ \\
$M_x, M_h$ & Global upper bound of input $\mathbf{x}_i$ and intermediate $\mathbf{h}_i$ & $M_x, M_h \in \mathbb{R}^{+}$ \\
$\Omega$ & Compact (bounded and closed) domain of intermediate states & $\Omega \subset \mathbb{R}^d$ \\
$L_{\mathrm{SwiGLU}}$ & Local Lipschitz constant of SwiGLU over domain $\Omega$ & $L_{\mathrm{SwiGLU}} \in \mathbb{R}^{+}$ \\
$d_{TV}(P_1, P_2)$ & Total Variation (TV) distance between distributions $P_1$ and $P_2$ & $d_{TV} \in [0, 1]$\\
$\text{range}(P_1)$ & The difference between the maximum and minimum probabilities in $P_1$ & $\text{range}(P_1) \in [0, 1]$\\
$\ell$ & Length of elongated hidden states up to the token index of action output & $\ell \in \mathbb{N}^{+}, \ell > \eta$ \\
$\epsilon^l_{RMS1}, \epsilon^l_{RMS2}$ & Regularization constants for RMSNorm inside layer $l$ & $\epsilon \in \mathbb{R}^{+}$ \\
$B$ & Progressive sensitivity upper bounds for each component's outputs & $B \in \mathbb{R}^{+}$ \\
$C$ & Layer-dependent analytical contraction/expansion multiplier constants & $C \in \mathbb{R}^{+}$ \\
$\hat{r}_i$ & Replication count for the $i$-th coordinate in the first block & $\hat{r}_i \in \mathbb{Z}_{\ge 0}, \sum_{i=1}^{\eta_2} \hat{r}_i = \eta_1$ \\
$\kappa_i$ & Scaling coefficient acting as a surrogate weight for index $i$ & $\kappa_i \ge 0$ \\
\bottomrule
    \end{tabular}
\end{small}
\end{center}
\end{table*} 


\end{document}